\definecolor{darkred}{HTML}{880000}
\definecolor{darkblue}{HTML}{000088}
\newcommand{\myendproof}{$\hfill\square$}
\renewcommand{\leq}{\leqslant}
\renewcommand{\geq}{\geqslant}
\renewcommand{\tilde}{\widetilde}
\newcommand{\eps}{\varepsilon}
\newcommand{\dd}{{\mathrm{d}}}
\newcommand{\NN}{\mathsf{NN}}
\newcommand{\N}{\mathbb N}
\newcommand{\R}{\mathbb R}
\newcommand{\Z}{\mathbb Z}
\newcommand{\bk}{{\mathbf k}}
\newcommand{\bl}{{\mathbf l}}
\newcommand{\blambda}{{\boldsymbol \lambda}}
\newcommand{\bnu}{{\boldsymbol \nu}}
\newcommand{\cH}{{\mathcal{H}}}
\newcommand{\cO}{{\mathcal{O}}}
\newcommand{\ceil}[1]{\lceil #1 \rceil}
\def\esssup{\operatornamewithlimits{esssup}}
\newcommand{\gelu}{\mathrm{GELU}}
\newcommand{\id}{\mathrm{id}}
\newcommand{\integral}[1]{\int\limits_{#1}}
\newtheorem{Th}{Theorem}[section]
\newtheorem{Lem}[Th]{Lemma}
\newtheorem{Def}[Th]{Definition}
\newtheorem{Co}[Th]{Corollary}
\title{Approximation Capabilities of Feedforward Neural Networks with GELU Activations}
\author{
Konstantin Yakovlev\thanks{HSE University, Russian Federation, kdyakovlev@hse.ru}
\and
Nikita Puchkin\thanks{HSE University, Russian Federation, npuchkin@hse.ru}
}
\date{}
\begin{document}

\maketitle

\begin{abstract}
    We derive an approximation error bound that holds simultaneously for a function and all its derivatives up to any prescribed order.
    The bounds apply to elementary functions, including multivariate polynomials, the exponential function, and the reciprocal function, and are obtained using feedforward neural networks with the Gaussian Error Linear Unit (GELU) activation.
    In addition, we report the network size, weight magnitudes, and behavior at infinity.
    Our analysis begins with a constructive approximation of multiplication, where we prove the simultaneous validity of error bounds over domains of increasing size for a given approximator.
    Leveraging this result, we obtain approximation guarantees for division and the exponential function, ensuring that all higher‑order derivatives of the resulting approximators remain globally bounded.
\end{abstract}

\tableofcontents

\section{Introduction}

We investigate simultaneous approximation of multivariate functions and their higher-order derivatives using deep feedforward neural networks.
Since approximating derivatives necessitates a smooth activation function, we employ networks with infinitely differentiable the Gaussian Error Linear Unit (GELU) \citep{hendrycks2016gaussian}.
Our choice is motivated by the fact that higher‑order derivatives of the GELU activation can be expressed in terms of Hermite polynomials, which enables simple and tractable bounds on their absolute values.
In addition, this activation function is widely adopted in state-of-the-art large language models \citep{devlin2019,raffel2020exploring,shoeybi2019megatron}, which further underscores its practical relevance.

The core of our constructive approach is the localized approximation of polynomials.
While prior works have developed approximation theory for smooth functions and their derivatives on fixed compact sets \citep{yarotskiy2017,de2021approximation,guhring2021approximation,belomestny2023simultaneous}, they do not provide error bounds beyond the original domain nor offer simultaneous guarantees across a sequence of increasingly large domains.
Despite recent advances in approximation of functions with noncompact domain presented in \cite{schwab2021deep,nuland2024noncompact,abdeljawad2024weighted}, the results either do not focus on the simultaneous approximation of derivatives or impose strong assumptions on the weight function of the underlying weighted $L^p$ space \citep{abdeljawad2024weighted}.
We bridge this gap by providing explicit control on how approximation errors for fundamental operations (like multiplication) scale as the domain size grows.
This allows us to construct an approximation of monomials with globally bounded higher‑order derivatives.
This properties are relevant in approximation of functions with unbounded domains including generative modelling \citep{oko2023diffusion,tang2024adaptivity,azangulov2024convergence,yakovlev2025generalization,fukumizu2025flow} and physics-informed neural networks \citep{abdo2024error,alejo2024error}.

Our key technical innovations are twofold.
First, we systematically employ a clipping operation on the network input.
By clipping the neural network input, we ensure that the derivatives are globally bounded, since they are bounded on a compact domain.
Second, we establish approximation guarantees for partition-of-unity functions in Sobolev seminorms, constructing functions with globally bounded derivatives and light tails.

As a consequence of these results, we provide approximation error bounds for the exponential function approximation and division approximation together with its derivatives, ensuring that their higher-order derivatives are globally bounded.
Consequently, we extend the approximation results for elementary functions presented in \cite{oko2023diffusion,yakovlev2025generalization} to Sobolev seminorms on domains of increasing size.

\medskip
\noindent
\textbf{Paper structure.}\quad
The remainder of the paper is structured as follows.
Section \ref{sec:prelim_not} establishes necessary preliminaries and notations.
In Section \ref{sec:main_res}, we present our main result on approximation error bounds.
Proofs not included in the main text are provided in the Appendix.

\medskip
\noindent
\textbf{Notation.}\quad
The set of non-negative integers is denoted by $\Z_+ = \{0, 1, 2, \dots\}$.
A multi-index $\bk \in \Z_+^d$, where $d \in \N$, is denoted in bold.
We also define $|\bk| = k_1 + k_2 + \ldots + k_d$, $\bk! = k_1! \cdot k_2! \cdot \ldots \cdot k_d!$
For a vector $v \in \R^d$ we define $v^\bk = v_1^{k_1} v_2^{k_2} \ldots v_d^{k_d}$.
For a function $f$ of $d$ variables, its weak derivative with respect to the multi-index $\bk \in \Z_+^d$ is denoted as
\begin{align*}
    \partial^\bk f = \frac{\partial^{|\bk|} f}{\partial x_1^{k_1} \partial x_2^{k_2} \ldots \partial x_d^{k_d}} .
\end{align*}
Throughout the paper, we employ the notation $f \lesssim g$ to indicate that $f = \cO(g)$.
If $f \lesssim g$ and $g \lesssim f$, then we write $f \asymp g$.
We frequently replace the expression for $\min\{a, b\}$ and $\max\{a, b\}$ with $a \vee b$ and $a \wedge b$, respectively.
For any $x > 0$, we define $\log(x) = \ln(x \vee e)$.

\section{Preliminaries and notations}
\label{sec:prelim_not}

\noindent
\textbf{Norms.}\quad
We denote the Euclidean norm of a vector $v$ as $\|v\|$, the maximal absolute value of its entries as $\|v\|\infty$, and the number of its non-zero entries as $\|v\|_0$.
Similarly, $\|A\|\infty$ and $\|A\|_0$ represent the maximal absolute value of entries and the number of non-zero entries of matrix $A$, respectively.
Finally, for a set $\Omega \subseteq \R^r$ and a function $f : \Omega \to \R^d$, we define
\begin{align*}
    \|f\|_{L^\infty(\Omega)} = \esssup_{x \in \Omega}\|f(x)\| .
\end{align*}

\medskip
\noindent
\textbf{Smoothness spaces.}\quad
We introduce the Sobolev space to characterize the regularity of functions in our analysis, and its definition is provided below.

\begin{Def}[Sobolev space]
    Let $\Omega \subseteq \R^r$ be an open set, and let $k \in \Z_+$.
    Then, the Sobolev space $W^{k, \infty}(\Omega)$ is defined as follows:
    \begin{align*}
        W^{k, \infty}(\Omega) = \{f \in L^\infty(\Omega) : \partial^\bk f \in L^\infty(\Omega) \quad \text{for every } \bk \in \Z_+^r \text{ with } |\bk| \leq k \} .
    \end{align*}
    Here, $L^\infty(\Omega)$ is the Lebesgue space.
    We define the Sobolev seminorm on $W^{k, \infty}(\Omega)$ as
    \begin{align*}
        |f|_{W^{k, \infty}(\Omega)} = \max_{\substack{\bk \in \Z_+^r, \; |\bk| = k}}\|\partial^\bk f\|_{L^\infty(\Omega)}.
    \end{align*}
    Finally, we define the Sobolev norm on $W^{k, \infty}(\Omega)$ as
    \begin{align*}
        \|f\|_{W^{k, \infty}(\Omega)} = \max_{0 \leq m \leq k}|f|_{W^{m, \infty}(\Omega)} .
    \end{align*}
\end{Def}

\medskip
\noindent
\textbf{Neural networks.}\quad
In this paper, we focus on feed-forward neural networks employing the Gaussian Error Linear Unit (GELU) activation function:
\begin{align*}
    \gelu(x) = x \cdot \Phi(x), \quad \Phi(x) = \frac{1}{\sqrt{2\pi}}\integral{-\infty}^x e^{-t^2 / 2} \dd t .
\end{align*}
The choice of GeLU is motivated by its infinite smoothness and bounded derivatives (see Lemma \ref{lem:gelu_seminorms_bound}).
For a vector $b = (b_1, \dots, b_r) \in \R^r$, we define the shifted activation function $\gelu_b : \R^r \to \R^r$ as
\begin{align*}
    \gelu_b(x) = (\gelu(x_1 - b_1), \dots, \gelu(x_r - b_r)), \quad x = (x_1, \dots, x_r) \in \R^r .
\end{align*}
Given a network depth $L \in \N$ and a vector of layer sizes $W = (W_0, W_1, \dots, W_L) \in \N^{L + 1}$.
Then, a neural network of depth $L$ and architecture $W$ is a function $f : \R^{W_0} \to \R^{W_L}$ such that
\begin{align}
    \label{eq:feed_forward_nn_def}
    f(x) = -b_L + A_L \circ \gelu_{b_{L - 1}} \circ A_{L - 1} \circ \gelu_{b_{L - 2}} \circ \dots \circ A_2 \circ \gelu_{b_1} \circ A_1 \circ x ,
\end{align}
where $A_j \in \R^{W_{j} \times W_{j - 1}}$ is a weight matrix and $b_j \in \R^{W_j}$ is a bias vector for all $j \in {1, \dots, L}$.
The maximum number of neurons of each layer is given by $\|W\|_\infty$ and is reffered to as the width of the neural network.
We define the class $\NN(L, W, S, B)$ of neural networks of the form \eqref{eq:feed_forward_nn_def} with at most $S$ non-zero weights and the weight magnitude $B$ as follows:
\begin{align*}
    % \label{eq:nn_class_def}
    \NN(L, W, S, B) = \left\{f \text{ of the form } \eqref{eq:feed_forward_nn_def} \, : \, \sum_{j = 1}^L (\|A_j\|_0 + \|b_j\|_0) \leq S, \; \max_{1 \leq j \leq L} \|A_j\|_\infty \vee \|b_j\|_\infty \leq B \right\} .
\end{align*}

\section{Main results}
\label{sec:main_res}

This section presents our main results.
Specifically, Subsection \ref{subsec:elem_oper} details approximation error bounds for elementary operations, including the identity function, partition of unity, and square operation.
Subsection \ref{subsec:monom_approx} elaborates on the approximation of monomials.
Finally, Subsection \ref{subsec:exp_div_approx} provides approximation error bounds for the exponentiation and division operations.

\subsection{Approximation of elementary operations}
\label{subsec:elem_oper}

Passing the output of one layer to a non-adjacent layer is frequently beneficial.
Note that ReLU activation allows for an exact identity mapping \citep{nakada20}.
However, in the case of GELU, an approximate mapping is guaranteed, as demonstrated by the following lemma, which provides an approximation error bound for a single-layer neural network.

\begin{Lem}[approximation of identity operation]
    \label{lem:id_gelu_approx}
    Let $m \in \N$ and let $\id(x) = x$.
    Then, for any $\eps \in (0, 1)$ there exists $\varphi_{id} \in \NN(L, W, S, B)$ satisfying
    \begin{align*}
        % (i) \qquad &|\varphi_{id} - \id|_{W^{k, \infty}([-C, C])} \leq C^{2 - k}\eps, \quad \text{for all } C \geq 1 \text{ and } 0 \leq k \leq 1, \\
        % (ii) \qquad &|\varphi_{id} - \id|_{W^{k, \infty}(\R)} \leq \eps, \quad 2 \leq k \leq m.
        \|\varphi_{id} - \id\|_{W^{m, \infty}([-C, C])} \leq C^2 \eps, \quad \text{for all } C \geq 1 .
    \end{align*}
    Furthermore, $L = 2$, $\|W\|_\infty = 1$, $S = 3$, and $\log B \lesssim \log(1 / \eps) + \log m$.
\end{Lem}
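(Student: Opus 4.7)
The plan is to take the single-hidden-neuron scaled GELU
\[
\varphi_{id}(x) \;=\; \frac{2}{\alpha}\,\gelu(\alpha x)
\]
for a small $\alpha = \alpha(\eps, m) \in (0, 1)$ chosen below. In the notation of \eqref{eq:feed_forward_nn_def} this corresponds to $L = 2$, $W = (1, 1, 1)$, $A_1 = [\alpha]$, $A_2 = [2/\alpha]$, and $b_1 = b_2 = 0$, so only two weights are nonzero and $B = 2/\alpha$. The architectural assertions then follow immediately, and it remains to verify the approximation bound together with $\log(1/\alpha) \lesssim \log(1/\eps) + \log m$.

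For the function error, rewriting $\varphi_{id}(x) = 2x\,\Phi(\alpha x)$ and using the elementary bound $|2\Phi(y) - 1| \leq \sqrt{2/\pi}\,|y|$ (which follows from $\Phi' = \phi \leq \phi(0) = 1/\sqrt{2\pi}$) gives
\[
|\varphi_{id}(x) - x| \;=\; |x|\,|2\Phi(\alpha x) - 1| \;\leq\; \sqrt{2/\pi}\,\alpha x^2 \;\leq\; \sqrt{2/\pi}\,\alpha C^2,
\]
which is at most $C^2 \eps$ whenever $\alpha \lesssim \eps$. For derivatives, the chain rule yields $\varphi_{id}^{(k)}(x) = 2\alpha^{k-1}\gelu^{(k)}(\alpha x)$ for $k \geq 1$. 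The first-derivative case reduces to the same linear trick applied to $\gelu'(y) = \Phi(y) + y\phi(y)$, producing $|\varphi_{id}'(x) - 1| \lesssim \alpha |x| \leq C^2 \eps$ for $\alpha \lesssim \eps$ and $C \geq 1$. For $k \geq 2$ the target $\id^{(k)}$ vanishes, so it suffices to bound $|\varphi_{id}^{(k)}(x)| \leq 2\alpha^{k-1} M_k$, where $M_k := \|\gelu^{(k)}\|_{L^\infty(\R)}$ is finite by Lemma \ref{lem:gelu_seminorms_bound}; the worst case $C = 1$ then forces $\alpha^{k-1} \leq \eps/(2M_k)$.

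I would therefore set $\alpha = \min\bigl\{c\,\eps,\ \min_{2 \leq k \leq m}(\eps/(2M_k))^{1/(k-1)}\bigr\}$ for a suitable absolute $c$, and estimate $\log(1/\alpha)$ via $\max_k [\log(2M_k) + \log(1/\eps)]/(k-1)$. Using the bound $\log M_k \lesssim k\log k$ available from Lemma \ref{lem:gelu_seminorms_bound}, the extremes of this maximum are $\sim \log(1/\eps)$ at $k = 2$ and $\sim \log m$ at $k = m$, with intermediate $k$ no worse, so $\log(1/\alpha) \lesssim \log(1/\eps) + \log m$ and the claimed bound on $B$ follows.

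The delicate aspect is that a single fixed $\alpha$ must simultaneously control every derivative up to order $m$ on intervals of every size $C \geq 1$. What rescues the argument is the $(k{-}1)$-th root in the derivative constraint, which tames the factorial growth of $\|\gelu^{(k)}\|_\infty$ into only logarithmic dependence on $m$, together with the $C^2$ factor in the error budget, which absorbs the quadratic growth of the function-level error with $C$.
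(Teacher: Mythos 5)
Your proposal is correct and essentially identical to the paper's proof: with $\alpha = 1/R$ your network $\frac{2}{\alpha}\gelu(\alpha x)$ is exactly the paper's $\varphi_{id}$ (whose bias vanishes since $\gelu(0)=0$), and the scale is chosen the same way, balancing the $k\geq 2$ derivative constraints through the $(k-1)$-th root so that $\log B \lesssim \log(1/\eps) + \log m$. The only difference is cosmetic: you bound the $k=0,1$ errors via the explicit estimate $|2\Phi(y)-1|\leq \sqrt{2/\pi}\,|y|$ rather than the paper's Taylor expansion with $|\gelu|_{W^{2,\infty}(\R)}$, which yields the same $C^2\eps$ and $C\eps$ bounds.
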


\begin{proof}
    The proof follows the same approach as outlined in \cite{scarselli98}.
    We let
    \begin{align*}
        \varphi_{id}(x) = -\frac{R \cdot \gelu(0)}{\partial^1\gelu(0)} + \frac{R}{\partial^1\gelu(0)}\gelu\left(\frac{x}{R}\right),
    \end{align*}
    where $R > 0$ and will be determined later in the proof.
    We also emphasize that the form of $\varphi_{id}$ is valid, since $\partial^1\gelu(0) = 1/2$.
    Taylor expansion suggests that for any $x \in [-C, C]$ it holds that
    \begin{align*}
        |\varphi_{id}(x) - x| \leq \frac{|\gelu|_{W^{2, \infty}(\R)}C^2}{2 \cdot \partial^1\gelu(0) R}.
    \end{align*}
    Similarly, we deduce that
    \begin{align*}
        |\partial^1\varphi_{id}(x) - 1| \leq \frac{|\gelu|_{W^{2, \infty}(\R)} C}{\partial^1\gelu(0) R}.
    \end{align*}
    Additionally, for any $k \geq 2$ we find that
    \begin{align*}
        |\varphi_{id} - \id|_{W^{k, \infty}(\R)}
        = |\varphi_{id}|_{W^{k, \infty}(\R)}
        \leq \frac{|\gelu|_{W^{k, \infty}(\R)}}{\partial^1\gelu(0) R^{k - 1}}.
    \end{align*}
    Therefore, choosing
    \begin{align*}
        R = \max_{2 \leq k \leq m}\left( \frac{|\gelu|_{W^{k, \infty}(\R)}}{\partial^1\gelu(0)\eps} \right)^{1 / (k - 1)} \vee 1,
    \end{align*}
    ensures that for any $C \geq 1$
    \begin{align*}
        \max_{0 \leq k \leq 1}|\varphi_{id} - \id|_{W^{k, \infty}([-C, C])} \leq C^2 \eps,
        \quad \max_{2 \leq k \leq m} |\varphi_{id} - \id|_{W^{k, \infty}(\R)} \leq \eps .
    \end{align*}
    Therefore, it holds that
    \begin{align*}
        \|\varphi_{id} - \id\|_{W^{m, \infty}([-C, C])} \leq C^2 \eps, \quad \text{for all } C \geq 1 .
    \end{align*}
    We next specify the configuration of $\varphi_{id}$.
    Clearly, $L = 2$, $\|W\|_\infty = 1$ and $S = 3$.
    As for the weight magnitude, we apply Lemma \ref{lem:gelu_seminorms_bound}, arriving at
    \begin{align*}
        \log B
        \lesssim \log(1 / \eps) + \max_{2 \leq k \leq m}\frac{\log\left(|\gelu|_{W^{k, \infty}(\R)} \vee 1\right)}{k-1}
        \lesssim \log(1 / \eps) + \max_{2 \leq k \leq m}\frac{\log\left((k + 1)!\right)}{k - 1}.
    \end{align*}
    Now Stirling's approximation implies that $\log ((k + 1)!) \lesssim k \log k$, and thus, 
    \begin{align*}
        \log B \lesssim \log(1 / \eps) + \log m.
    \end{align*}
    The proof is complete.
    
\end{proof}

The following lemma generalizes the result presented in Lemma \ref{lem:id_gelu_approx} to the case of multiple layers.

\begin{Lem}[approximation of identity operation with multiple layers]
    \label{lem:id_deep_gelu_approx}
    Let $m \in \N$ and let $\id : x \mapsto x$.
    Then, for every $\eps \in (0, 1)$, every $L \in \N$ with $L \geq 2$, and every $K \geq 1$, there exists $\varphi_{id} \in \NN(L, W, S, B)$ such that
    \begin{align*}
        (i) &\quad \|\varphi_{id} - \id\|_{W^{m, \infty}([-K, K])} \leq \eps, \\
        (ii) &\quad \|\varphi_{id}\|_{W^{m, \infty}(\R)} \leq \exp\{\cO(m \log(m\log(1 / \eps)) + \log(2K))\}.
    \end{align*}
    Moreover, it holds that
    \begin{align*}
        \|W\|_\infty \lesssim 1, \quad S \lesssim L,
        \quad \log B \lesssim (m + L)\log m + \log(1 / \eps) + m\log(K).
    \end{align*}
\end{Lem}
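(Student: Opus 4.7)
I would construct $\varphi_{id}$ as an $L-1$-fold composition of a depth-$2$, globally bounded building block built from two shifted GELUs.
The key observation is that, for $c > 0$ large, the map
\begin{equation*}
y \mapsto \gelu(y+c) - \gelu(y-c) - c
\end{equation*}
tends to $-c$ as $y \to -\infty$, tends to $c$ as $y \to +\infty$, and equals $y$ on $\{|y| \ll c\}$ with an error exponentially small in $c$ (since $\gelu^{(k)}(\pm c)$ is of order $\phi(c)$ times a Hermite polynomial).
Unlike the block from Lemma~\ref{lem:id_gelu_approx}, this block is globally bounded and its derivative is bounded by $1$ everywhere, so iteration preserves both boundedness and near-identity behaviour on $[-K,K]$.

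\textbf{Construction.}
Fix $c \asymp \sqrt{\log(L \, \mathrm{poly}(m)/\eps)}$ and $R \asymp K/c$, and set
\begin{equation*}
\psi(x) := R\bigl[\gelu(x/R + c) - \gelu(x/R - c) - c\bigr], \qquad \varphi_{id} := \psi \circ \psi \circ \cdots \circ \psi \;\;(L-1 \text{ copies}).
\end{equation*}
Each $\psi$ uses two neurons in a single hidden layer, so composing $L-1$ copies gives a depth-$L$ network with $\|W\|_\infty \leq 2$, sparsity $S \lesssim L$, and weight magnitude matching the claimed bound after merging adjacent affine maps.

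\textbf{Error on $[-K,K]$.}
Choosing the approximation domain of $\psi$ to be a mild enlargement of $[-K,K]$ (achieved by a constant-factor increase of $c$), a short induction shows that the iterates $x_i := \psi^{\circ i}(x)$ stay inside this enlarged window for all $x \in [-K, K]$.
Since $\psi^{(j)}(x) - \id^{(j)}(x) = \cO(\phi(c))$ for $x$ in that window and every $0 \leq j \leq m$, Fa\`a di Bruno's formula expresses $\partial^m \varphi_{id}$ as a sum over set partitions of products of $\psi^{(j)}(x_i)$, and accumulating the resulting error over the $L-1$ layers yields $|\varphi_{id} - \id|_{W^{m,\infty}([-K,K])} \lesssim L\,\mathrm{poly}(m)\,\phi(c)$.
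The choice of $c$ above then secures property~(i) and pins down all network parameters.

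\textbf{Global bound and main obstacle.}
Property~(ii) is the crux of the argument.
Globally $|\psi(x)| \leq Rc \asymp K$ gives $|\varphi_{id}| \lesssim K$ on all of $\R$, accounting for the $\log(2K)$ contribution.
For $k \geq 1$ the derivatives $\psi^{(k)}$ are globally bounded by rescaled $\gelu$ derivatives, controlled by Lemma~\ref{lem:gelu_seminorms_bound} via $|\gelu^{(k)}|_{L^\infty} \lesssim (k+1)!$; crucially, $|\psi'| \leq 1$ globally, so Fa\`a di Bruno expresses $\partial^k \varphi_{id}$ as a sum of at most $k^k$ products, each of size $\lesssim (k+1)!$, and Stirling delivers the bound $\exp\{\cO(m \log(m\log(1/\eps)))\}$ on the seminorms.
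The delicate point—and the reason why a naive iteration of the Lemma~\ref{lem:id_gelu_approx} block fails—is the \emph{global} estimate $|\psi'| \leq 1$: without it, each iteration would contribute a multiplicative factor to the derivative and the exponent in~(ii) would inevitably acquire a term of order $L$.
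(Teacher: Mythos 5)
Your construction is genuinely different from the paper's (which iterates the shallow near-identity block of Lemma \ref{lem:id_gelu_approx} and then pre-composes once with the clipping network of Lemma \ref{lem:clip_gelu_approx} to obtain global boundedness), but the step you yourself identify as the crux of property $(ii)$ is false. The global estimate $|\psi'| \leq 1$ does not hold for $\psi(x) = R[\gelu(x/R+c)-\gelu(x/R-c)-c]$: writing $u = x/R$, one has $\psi'(x) = \partial^1\gelu(u+c) - \partial^1\gelu(u-c)$, and $\partial^1\gelu(t) = \Phi(t) + t\phi(t)$ attains its maximum at $t=\sqrt{2}$ with value $\Phi(\sqrt{2})+\sqrt{2}\,\phi(\sqrt{2}) \approx 1.13 > 1$, while at the far-shifted argument $u - c = \sqrt{2}-2c$ the derivative is exponentially small and in fact slightly negative; hence $\sup_x \psi'(x) \approx 1.13$ for every large $c$. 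Consequently the naive bound for the first derivative of the $(L-1)$-fold composite is $(\sup|\psi'|)^{L-1} = e^{\Theta(L)}$, which is exactly the $L$-dependence that statement $(ii)$ forbids (and which, as you note, your argument was designed to avoid). The region where $\psi' > 1$ is only the transient ``knee'' of the soft clip, and one can believe that any orbit spends $\cO(1)$ steps there so that the product of derivative factors stays bounded, but that is a genuine dynamical argument about the iterates of $\psi$ which your proposal does not supply.

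A second, related gap: for an $(L-1)$-fold composition, Fa\`a di Bruno does not give ``at most $k^k$ products''; the number of terms grows with the number of layers (already $\partial^2(\psi^{\circ (L-1)})$ is a sum of $L-1$ products), so even granting per-layer derivative bounds you still need an $L$-independent mechanism to control $\|\varphi_{id}\|_{W^{m,\infty}(\R)}$. The paper sidesteps both issues by keeping the deep stack a pure near-identity (so it is only ever evaluated, via Lemma \ref{lem:comp_sob_norm}, on the bounded range of a single clipping layer on which its $W^{m,\infty}$ norm is $\leq 4K+\eps$), rather than building saturation into every layer; to rescue your route you would either need the transience argument above together with a careful $L$-uniform treatment of the higher-order chain rule, or you should follow the paper and clip only once at the input.
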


The proof of Lemma \ref{lem:id_deep_gelu_approx} is moved to Appendix \ref{sec:lem_id_deep_gelu_approx_proof}.
Next, we move to the approximation of partition of unity, a crucial component in the framework of localized Taylor polynomials \citep{guhring2021approximation,de2021approximation}.
First, we approximate the Heaviside step function, as presented in the following lemma.

\begin{Lem}[Approximation of Heaviside step function]
    \label{lem:heaviside_gelu_approx}
    For every $\eps \in (0, 1)$, every $\varkappa \in (0, 1)$, and every $m \in \N$, there exists a GELU network $\varphi_\varkappa \in \NN(L, W, S, B)$ such that
    \begin{align*}
        (i) &\quad \|\varphi_\varkappa\|_{W^{m, \infty}(\R)} \leq \exp\left\{\cO(m\log(m \log(1 / \eps) / \varkappa))\right\}, \\
        (ii) &\quad \|\varphi_\varkappa\|_{W^{m, \infty}((-\infty, -\varkappa])} \vee \|1 - \varphi_\varkappa\|_{W^{m, \infty}([\varkappa, +\infty))} \leq \eps
    \end{align*}
    Moreover, $\varphi_\varkappa$ has $L = 2$, $\|W\|_\infty \vee S \lesssim 1$, and $\log B \lesssim m \log(m / \varkappa) + \log (1 / \eps)$.
\end{Lem}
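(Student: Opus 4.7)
The essential idea is that the GELU activation already encodes the Gaussian CDF via $\gelu(x)=x\Phi(x)$, so a smooth Heaviside can be built from a very short combination of shifted GELUs. I would start from the auxiliary function
\begin{align*}
g(y) \;=\; \gelu(y) - \gelu(y-1) \;=\; y\Phi(y) - (y-1)\Phi(y-1),
\end{align*}
which is realisable by a depth-$2$ network of width $2$ with $O(1)$ nonzero weights. As $y\to+\infty$, $\gelu(y)\to y$ and $\gelu(y-1)\to y-1$, so $g(y)\to 1$; as $y\to-\infty$ both terms vanish exponentially and $g(y)\to 0$. A suitably scaled copy of $g$ is therefore the natural candidate for $\varphi_\varkappa$.

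Next, I would set $\varphi_\varkappa(x) = g(\alpha x + \tfrac12)$ with $\alpha = (2A+1)/(2\varkappa)$ for a parameter $A\geq 1$ to be tuned. The additive shift $\tfrac12$ is chosen so that the argument of $g$ equals exactly $-A$ at $x=-\varkappa$ and $A+1$ at $x=\varkappa$, thereby compressing the entire transition region of $g$ into $[-\varkappa,\varkappa]$. For bound $(i)$ I would apply the chain rule together with the trivial estimate $|g|_{W^{k,\infty}(\R)}\leq 2|\gelu|_{W^{k,\infty}(\R)}$ and Lemma~\ref{lem:gelu_seminorms_bound} (whose right-hand side is bounded by $(k+1)!$ up to constants via Stirling), obtaining
\begin{align*}
\|\varphi_\varkappa\|_{W^{m,\infty}(\R)} \;\lesssim\; \alpha^{m}\,(m+1)!.
\end{align*}
Taking logarithms and substituting $\alpha\asymp A/\varkappa$ yields exactly the promised $\exp\{\cO(m\log(mA/\varkappa))\}$ once $A$ is fixed as below.

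For bound $(ii)$ I would use the well-known representation $\gelu^{(k)}(y)=p_k(y)\phi(y)$ valid for $k\geq 2$, where $p_k$ is a polynomial of degree $k-1$ whose coefficients are governed by Hermite polynomials, together with Mills' ratio $\Phi(-t)\leq\phi(t)/t$ to handle the residual $\Phi$-term that appears only in $\gelu'$. This yields pointwise estimates of the form $|\gelu^{(k)}(y)|\lesssim(|y|+1)^{k}\,e^{-y^{2}/2}$ in each far tail, whence $|g^{(k)}(y)|\lesssim(A+1)^{k}e^{-A^{2}/2}$ for $y\leq -A$; the analogous estimate for $1-\varphi_\varkappa$ on $[\varkappa,\infty)$ comes from the dual rewriting $1-g(y)=y\Phi(-y)-(y-1)\Phi(1-y)$. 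Multiplying by the chain-rule factor $\alpha^{k}$ and demanding the result be $\leq\eps$ uniformly for $0\leq k\leq m$ forces the choice $A\asymp\sqrt{\log(1/\eps)+m\log(m/\varkappa)}$, which in turn gives $\log B\lesssim\log\alpha\lesssim\log(1/\varkappa)+\log(m+\log(1/\eps))$ — comfortably within the stated magnitude bound — while $L=2$ and $\|W\|_\infty\vee S\lesssim 1$ are immediate from the construction.

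The main technical nuisance I anticipate is precisely this coupled tuning of $A$: the chain-rule blow-up $\alpha^{k}=(A/\varkappa)^{k}$ must be dominated by the Gaussian decay $e^{-A^{2}/2}$ \emph{simultaneously} for every $k\leq m$, which requires careful tracking of the polynomial prefactors in $\gelu^{(k)}$. Any looser choice of $A$ would either break the tail estimate for large $k$ or inflate the global seminorm beyond the stated $\exp\{\cO(m\log(m\log(1/\eps)/\varkappa))\}$ threshold.
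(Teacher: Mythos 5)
Your construction is correct and lands in the same family as the paper's: both realise the smooth step as a scaled difference of two shifted GELUs with $L=2$, constant width and sparsity, and both tune the dilation against the Gaussian tail decay of $\gelu$ and $\gelu-\id$. The route differs in one respect worth noting: the paper normalises the difference, taking $\eta(x)=\bigl(\gelu(x+\eps_0)-\gelu(x-\eps_0)\bigr)/(2\eps_0)$ so that $\eta$ approximates $\partial^1\gelu$ (itself a smooth step), and controls $\|\eta-\partial^1\gelu\|_{W^{m,\infty}(\R)}$ by a Taylor estimate involving $|\gelu|_{W^{m+3,\infty}(\R)}$ before rescaling by $\alpha\asymp\varkappa^{-1}\sqrt{\log(1/\eps_0)}$; the price is an output weight of size $1/(2\eps_0)$, which is what produces the $m\log(m/\varkappa)+\log(1/\eps)$ term in $\log B$. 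You instead keep the unit-width difference $g(y)=\gelu(y)-\gelu(y-1)$ and push all the work into the dilation $\alpha\asymp A/\varkappa$ with $A\asymp\sqrt{\log(1/\eps)+m\log(m/\varkappa)}$, arguing directly from the tail estimates (your Hermite/Mills computation is exactly the content of Lemma~\ref{lem:gelu_seminorms_bound}, which you could simply invoke, including the $\sqrt{m!}$ prefactor you need to absorb into $A^2$). Your coupled choice of $A$ does dominate the $\alpha^k$ chain-rule blow-up uniformly for $k\leq m$, your bound $(i)$ follows since $\log\alpha\lesssim\log(m\log(1/\eps)/\varkappa)$, and your weights are in fact smaller than the paper's ($\log B\lesssim\log(m/\varkappa)+\log\log(1/\eps)$), comfortably within the stated bound. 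One small repair: the estimate $|g|_{W^{k,\infty}(\R)}\leq 2|\gelu|_{W^{k,\infty}(\R)}$ is vacuous at $k=0$ because $\gelu$ is unbounded on $\R$; bound $\|g\|_{L^\infty(\R)}$ instead by the mean value theorem, $g(y)=\partial^1\gelu(\xi)$ for some $\xi\in(y-1,y)$, so $\|g\|_{L^\infty(\R)}\leq 1+1/\sqrt{2\pi}$, after which your global seminorm bound stands.
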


\begin{proof}
    Let
    \begin{align*}
        \eta(x) = \frac{\gelu(x + \eps_0) - \gelu(x - \eps_0)}{2\eps_0}, \quad x \in \R,
    \end{align*}
    where $\eps_0 \in (0, 1)$ will be determined later.
    Note that $\eta$ approximates $\partial^1\gelu$ is Sobolev norm.
    Formally, the Taylor expansion suggests that for any $k \in \Z_+$ and $x \in \R$ we have 
    \begin{align*}
        |\partial^k\eta(x) - \partial^{k + 1}\gelu(x)| 
        &= \frac{|\partial^k\gelu(x + \eps_0) - \partial^k\gelu(x - \eps_0) - 2\eps \partial^{k + 1}\gelu(x)|}{2\eps_0} \\
        &\leq \frac{\eps_0^2}{6}|\gelu|_{W^{k + 3, \infty}(\R)}.
    \end{align*}
    Hence, it holds that
    \begin{align}
        \label{eq:eta_gelu_1_approx}
        \|\eta - \partial^1\gelu\|_{W^{m, \infty}(\R)} \leq \frac{\eps_0^2}{6}\max_{0 \leq k \leq m} |\gelu|_{W^{k + 3, \infty}(\R)}.
    \end{align}
    Now let $\varphi_\varkappa(x) = \eta(\alpha x)$ for $\alpha \geq 1$ that will be optimized later.
    Therefore, triangle inequality yields
    \begin{align*}
        \|\varphi_\varkappa\|_{W^{m, \infty}([-\infty, -\varkappa])}
        &\leq \alpha^m \|\eta\|_{W^{m, \infty}((-\infty, -\alpha\varkappa])} \\
        &\leq \alpha^m \left(\|\partial^1\gelu - \eta\|_{W^{m, \infty}((-\infty, -\alpha\varkappa])} + \|\partial^1\gelu\|_{W^{m, \infty}((-\infty, -\alpha\varkappa])} \right).
    \end{align*}
    Thus, Lemma \ref{lem:gelu_seminorms_bound} together with \eqref{eq:eta_gelu_1_approx} implies that
    \begin{align*}
        \|\varphi_\varkappa\|_{W^{m, \infty}([-\infty, -\varkappa])}
        \leq \alpha^m\left(\frac{\eps_0^2}{6}(m + 4)\sqrt{(m + 1)!} + 2e^{-\alpha^2\varkappa^2 / 4}\sqrt{(m + 1)!}\right)
    \end{align*}
    Setting $\alpha = 2\varkappa^{-1}\sqrt{2\log(1 / \eps_0)} \geq 1$ ensures that
    \begin{align*}
        \|\varphi_\varkappa\|_{W^{m, \infty}([-\infty, -\varkappa])}
        \leq \alpha^m \eps_0^2 (m + 3) \sqrt{(m + 1)!}
        \leq (8\varkappa^{-2}\log(1 / \eps_0))^{m / 2}\eps_0^2 (m + 3)\sqrt{(m + 1)!}.
    \end{align*}
    Using the fact that
    \begin{align*}
        \sup_{\eps \in (0, 1)}\eps (\log(1 / \eps))^{m / 2} \leq \left(\frac{m}{2 e}\right)^{m / 2} \leq m^{m / 2}
    \end{align*}
    we find that
    \begin{align*}
        \|\varphi_\varkappa\|_{W^{m, \infty}((-\infty, -\varkappa])}
        \leq (8\kappa^{-2}m)^{m / 2}\eps_0 (m + 3)\sqrt{(m + 1)!}.
    \end{align*}
    Hence, setting
    \begin{align}
        \label{eq:phi_heav_eps_0_def}
        \eps_0 = \left((8\kappa^{-2}m)^{m / 2} (m + 3)\sqrt{(m + 1)!}\right)^{-1} \eps \in (0, 1)
    \end{align}
    ensures that
    \begin{align*}
        \|\varphi_\varkappa\|_{W^{m, \infty}((-\infty, -\varkappa])}
        \leq \eps.
    \end{align*}
    Using similar argument, we also deduce that
    \begin{align*}
        \|1 - \varphi_\varkappa\|_{W^{m, \infty}([\kappa, +\infty))}
        \leq \eps.
    \end{align*}
    We next note that in view of \eqref{eq:eta_gelu_1_approx} and Lemma \ref{lem:gelu_seminorms_bound}, it holds that
    \begin{align*}
        \|\varphi_\varkappa\|_{W^{m, \infty}(\R)}
        &\leq \alpha^m (\|\eta - \partial^1\gelu\|_{W^{m, \infty}(\R)} + \|\partial^1\gelu\|_{W^{m, \infty}(\R)}) \\
        &\leq \alpha^m (\frac{\eps_0^2}{6}(m + 4)\sqrt{(m + 1)!} + (m + 2)\sqrt{(m - 1)!}).
    \end{align*}
    The choice of $\alpha$ indicates that
    \begin{align*}
        \|\varphi_\varkappa\|_{W^{m, \infty}(\R)}
        \leq 2 \alpha^m (m + 2)\sqrt{(m + 1)!}
        \leq 2 (8\kappa^{-2}\log(1 / \eps_0))^{m / 2} (m + 2)\sqrt{(m + 1)!}.
    \end{align*}
    Now the choice of $\eps_0$ from \eqref{eq:phi_heav_eps_0_def} yields
    \begin{align*}
        \|\varphi_\varkappa\|_{W^{m, \infty}(\R)}
        \leq \exp\left\{\cO(m\log(m \log(1 / \eps) / \varkappa))\right\}.
    \end{align*}
    Finally, we specify the configuration of $\varphi_\varkappa$.
    Clearly, $L = 2$, $\|W\|_\infty \vee S \lesssim 1$, and
    \begin{align*}
        \log B
        \lesssim \log(\alpha) + \log(1 / \eps_0)
        \lesssim m \log(m / \varkappa) + \log (1 / \eps).
    \end{align*}
    The proof is complete.
    
\end{proof}

Subsequently, we use Lemma \ref{lem:heaviside_gelu_approx} to approximate a partition of unity.
Following \cite{yakovlev2025generalization}, we use non-uniform partition, a key element in approximating the division operation.
The result is presented below.

\begin{Lem}[partition of unity approximation]
    \label{lem:pou_gelu_approx}
    Define $a_i = 2^{-N + i}$ for each $i \in \{0, 1, \dots, N\}$, where $N \in \N$ and $N \geq 3$.
    Then, for every $\eps \in (0, 1)$ and every $m \in \N$, there exist $\{\psi_i\}_{i=1}^N$, with $\psi_i \in \NN(L, W, S, B)$ for each $1 \leq i \leq N$, such that
    \begin{align*}
        (i) &\quad \sum_{i=1}^N \psi_i(x) = 1, \quad \text{for all } x \in \R, \\
        (ii) &\quad \max_{1 \leq i \leq N} \|\psi_i\|_{W^{m, \infty}(\R)} \leq \exp\{\cO(mN + m\log(m\log(1 / \eps)))\}, \\
        (iii) &\quad \|\psi_N\|_{W^{m, \infty}(-\infty, a_{N - 2}]} \vee \|\psi_1\|_{W^{m, \infty}([a_2, +\infty))} \vee  \max_{2 \leq i \leq N - 1}\|\psi_i\|_{W^{m, \infty}(\R \setminus (a_{i - 2}, a_{i + 1}))}
        \leq \eps,
    \end{align*}
    Furthermore, $L = 2$, $\|W\|_\infty \vee S \lesssim 1$ and $\log B \lesssim \log(1 / \eps) + mN + m\log m$.
\end{Lem}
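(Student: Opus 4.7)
My plan is to realise the partition as telescoping differences of shifted approximate Heaviside steps built from Lemma \ref{lem:heaviside_gelu_approx}, which both makes $(i)$ automatic and lets me localise each $\psi_i$ by adapting the transition width to the non-uniform grid. Concretely, for each $i \in \{2, \ldots, N\}$ I choose a local width $\varkappa_i = a_{i-2}/2$ (with the convention $a_{-1} := a_0/2$), apply Lemma \ref{lem:heaviside_gelu_approx} with smoothness $m$, width $\varkappa_i$, and error $\eps/2$, and translate to obtain $H_i(x) = \varphi_{\varkappa_i}(x - a_{i-1})$. I then set $\psi_1 = 1 - H_2$, $\psi_i = H_i - H_{i+1}$ for $2 \le i \le N-1$, and $\psi_N = H_N$; telescoping immediately gives $\sum_{i=1}^N \psi_i = 1$, so $(i)$ holds.

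For the localisation $(iii)$, the inequalities $\|\varphi_{\varkappa_i}\|_{W^{m,\infty}((-\infty,-\varkappa_i])} \vee \|1 - \varphi_{\varkappa_i}\|_{W^{m,\infty}([\varkappa_i,\infty))} \le \eps/2$ from Lemma \ref{lem:heaviside_gelu_approx}$(ii)$ combine with the above choice of $\varkappa_i$, which forces $a_{i-2} < a_{i-1} - \varkappa_i$ and $a_{i+1} > a_i + \varkappa_{i+1}$, to guarantee that on $(-\infty, a_{i-2}]$ both $H_i$ and $H_{i+1}$ lie in the ``$\approx 0$'' regime and on $[a_{i+1},\infty)$ both lie in the ``$\approx 1$'' regime. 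The triangle inequality then bounds $\psi_i = H_i - H_{i+1}$ by $\eps$ in $W^{m,\infty}$ on each of these pieces. The boundary cases $\psi_1 = 1 - H_2$ and $\psi_N = H_N$ reduce to a single Heaviside term and are handled identically.

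For the global bound $(ii)$ and the size/weight estimates, the worst case is $\varkappa_{\min} = \varkappa_2 = 2^{-N-1}$, for which Lemma \ref{lem:heaviside_gelu_approx}$(i)$ yields $\|H_i\|_{W^{m,\infty}(\R)} \le \exp\{\cO(m\log(m\log(1/\eps)/\varkappa_{\min}))\} = \exp\{\cO(mN + m\log(m\log(1/\eps)))\}$, and the triangle inequality transfers the same bound to $\psi_i$. Each $H_i$ is a $2$-layer GELU network of constant width and size; the sum of two such networks can be rewritten as a single $2$-layer network of doubled (still constant) width via block-diagonal concatenation of the first-layer weights, so $\psi_i \in \NN(2, W, S, B)$ with $\|W\|_\infty \vee S \lesssim 1$ and $\log B \lesssim m\log(m/\varkappa_{\min}) + \log(1/\eps) \lesssim mN + m\log m + \log(1/\eps)$, matching the claim.

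The main care needed is picking the widths $\varkappa_i$ so that simultaneously (a) each transition zone of $H_i$ fits strictly inside its assigned gap $(a_{i-2}, a_{i-1})$, which is what drives $(iii)$, and (b) $\varkappa_{\min}$ is no smaller than a constant multiple of $2^{-N}$, so that the dependencies in $(ii)$ and in $\log B$ stay within the $\cO(mN)$ budget. The geometric spacing of the $a_i$'s is exactly what makes these two requirements compatible; everything else is routine bookkeeping in the triangle inequality.
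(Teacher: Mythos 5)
Your proposal is correct and takes essentially the same route as the paper: both build the $\psi_i$ as telescoping differences of shifted approximate Heaviside functions from Lemma \ref{lem:heaviside_gelu_approx}, get $(iii)$ from that lemma's tail bounds plus the triangle inequality, get $(ii)$ from its global bound with the smallest width $\asymp 2^{-N}$, and assemble each $\psi_i$ as a two-layer network via parallelization (Lemma \ref{lem:paral_nn}). The only cosmetic difference is that the paper uses a single fixed width $\varkappa = a_0 = 2^{-N}$ for every shift, whereas you use locally adapted widths $\varkappa_i = a_{i-2}/2$; since the worst-case width is still of order $2^{-N}$, this changes none of the stated bounds.
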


\begin{proof}
    Next, for a Heaviside function approximation $\varphi_{a_0}$ from Lemma \ref{lem:heaviside_gelu_approx} formulated with accuracy parameter $\eps / 2$ and $\kappa = a_0$, we define
    \begin{align*}
        \psi_i(x) =
        \begin{cases}
            1 - \varphi_{a_0}(x - a_1), \quad &i = 1, \\
            \varphi_{a_0}(x - a_{i - 1}) - \varphi_{a_0}(x - a_i), \quad &i \in \{2, \dots, N - 1\}, \\
            \varphi_{a_0}(x - a_{N - 1}), \quad &i = N
        \end{cases}
    \end{align*}
    It is clear that for all $x \in \R$
    \begin{align*}
        \sum_{i=1}^N \psi_i(x) = 1.
    \end{align*}
    In other words, $\{\psi_i\}_{i=1}^N$ forms a partition of unity.
    Now derive the behavior of tails for each $\psi_i$.
    First, note that for each $1 \leq i \leq N$ we have that
    \begin{align}
        \label{eq:heavi_left_tail}
        \|\varphi_{a_0}(\cdot - a_i)\|_{W^{m, \infty}((-\infty, a_{i - 1}])}
        \leq \|\varphi_{a_0}\|_{W^{m, \infty}((-\infty, -a_0])} \leq \eps / 2
    \end{align}
    and similarly
    \begin{align}
        \label{eq:heavi_right_tail}
        \|1 - \varphi_{a_0}(\cdot - a_i)\|_{W^{m, \infty}([a_{i + 1}, +\infty))}
        \leq \|1 - \varphi_{a_0}\|_{W^{m, \infty}([a_0, +\infty))}
        \leq \eps / 2.
    \end{align}
    Therefore,
    \begin{align*}
        \|\psi_N\|_{W^{m, \infty}(-\infty, a_{N - 2}]} \vee \|\psi_1\|_{W^{m, \infty}([a_2, +\infty))} \leq \eps.
    \end{align*}
    Next, for any $2 \leq i \leq N - 1$ it holds that
    \begin{align*}
        \|\psi_i\|_{W^{m, \infty}(\R \setminus (a_{i - 2}, a_{i + 1}))}
        = \|\psi_i\|_{W^{m, \infty}((-\infty, a_{i - 2}])} \vee \|\psi_i\|_{W^{m, \infty}([a_{i + 1}, +\infty))}
    \end{align*}
    First, from \eqref{eq:heavi_left_tail} we find that
    \begin{align*}
        \|\psi_i\|_{W^{m, \infty}((-\infty, a_{i - 2}])}
        \leq \|\varphi_{a_0}(\cdot - a_{i - 1})\|_{W^{m, \infty}((-\infty, a_{i - 2}])} + \|\varphi_{a_0}(\cdot - a_i )\|_{W^{m, \infty}((-\infty, a_{i - 2}])}
        \leq \eps.
    \end{align*}
    Second, \eqref{eq:heavi_right_tail} implies that
    \begin{align*}
        \|\psi_i\|_{W^{m, \infty}([a_{i + 1}, +\infty))}
        \leq \|1 - \varphi_{a_0}(\cdot - a_{i - 1})\|_{W^{m, \infty}([a_{i + 1}, +\infty))} + \|1 - \varphi_{a_0}(\cdot - a_i)\|_{W^{m, \infty}([a_{i + 1}, +\infty))}
        \leq \eps.
    \end{align*}
    Thus, we arrive at 
    \begin{align*}
        \|\psi_N\|_{W^{m, \infty}(-\infty, a_{N - 2}]} \vee \|\psi_1\|_{W^{m, \infty}([a_2, +\infty))} \vee  \max_{2 \leq i \leq N - 1}\|\psi_i\|_{W^{m, \infty}(\R \setminus (a_{i - 2}, a_{i + 1}))}
        \leq \eps.
    \end{align*}
    Now we focus on the behavior of each $\psi_i$ on the real line.
    Formally, Lemma \ref{lem:heaviside_gelu_approx} suggests that for any $1 \leq i \leq N$ 
    \begin{align*}
        \|\psi_i\|_{W^{m, \infty}(\R)}
        \leq 2 \|\varphi_{a_0}\|_{W^{m, \infty}(\R)}
        \leq \exp \left\{\cO( m\log(m \log(1 / \eps) / a_0) ) \right\}.
    \end{align*}
    Recall that $a_0 = 2^{-N}$.
    Hence, it holds that
    \begin{align*}
        \|\psi_i\|_{W^{m, \infty}(\R)} \leq \exp\{\cO(mN + m\log(m\log(1 / \eps)))\}.
    \end{align*}
    We now specify the configuration for each $\psi_i$.
    Using the configuration of $\varphi_{a_0}$ outlined in Lemma \ref{lem:heaviside_gelu_approx} and parallelization argument from Lemma \ref{lem:paral_nn}, we conclude that
    \begin{align*}
        &L = 2, \quad \|W\|_\infty \vee S \lesssim 1, \\
        &\log B \lesssim \log(1 / \eps) + m\log(m / a_0)
        \lesssim \log(1 / \eps) + mN + m\log m.
    \end{align*}
    The proof is finished.
    
\end{proof}

Next, we aim to approximate the clipping operation, which is essential for controlling the Sobolev norm at infinity of the approximator.
The following lemma demonstrates the existence of a shallow GELU network for approximating clipping.

\begin{Lem}[approximation of clipping operation]
    \label{lem:clip_gelu_approx}
    For every $A \geq 1$, every $\eps \in (0, 1)$, and every $m \in \N$, there exists $\varphi_{clip} \in \NN(L, W, S, B)$ such that
    \begin{align*}
        (i) &\quad \|\varphi_{clip} - \id \|_{W^{m, \infty}([-A, A])} \leq \eps, \\
        (ii) &\quad \|\varphi_{clip} + A + 1/2\|_{W^{m, \infty}((-\infty, -A - 1])} \vee \|\varphi_{clip} - A - 1/2\|_{W^{m, \infty}([A + 1, +\infty))}
        \leq \eps, \\
        (iii) &\quad \|\varphi_{clip}\|_{W^{m, \infty}(\R)} \leq \exp\{\cO(m \log m + m\log\log(1 / \eps) + \log(2A) )\}, \\
        (iv) &\quad \|\varphi_{clip}\|_{W^{0, \infty}(\R)} \leq A + 5 / 2, \\
        (v) &\quad \|\varphi_{clip} + A + 1/2\|_{W^{0, \infty}((-\infty, -A])} \vee \|\varphi_{clip} - A - 1/2\|_{W^{0, \infty}([A, +\infty))} \leq \eps + 1, \\
        (vi) &\quad |\varphi_{clip}|_{W^{k, \infty}(\R)} \leq \exp\{\cO(k \log m + k \log\log(1 / \eps) )\} .
    \end{align*}
    Moreover, $\varphi_{clip}$ has $L = \|W\|_\infty = 2$, $S = 7$ and $\log B \lesssim \log(A m / \eps)$.
    
\end{Lem}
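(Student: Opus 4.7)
The plan is to take $\varphi_{clip}$ to be a smooth analogue of the ReLU clip formula $\relu(x+c) - \relu(x-c) - c$ with $c = A + 1/2$. Fixing a scale parameter $\alpha \in (0, 1)$ to be determined later, I will set
\begin{align*}
    \varphi_{clip}(x) = \alpha\,\gelu\!\left(\frac{x+c}{\alpha}\right) - \alpha\,\gelu\!\left(\frac{x-c}{\alpha}\right) - c.
\end{align*}
This fits the form \eqref{eq:feed_forward_nn_def} with depth $L = 2$ and width $\|W\|_\infty = 2$: take $A_1 = (1/\alpha, 1/\alpha)^\top$, $b_1 = (-c/\alpha, c/\alpha)^\top$, $A_2 = (\alpha, -\alpha)$, $b_2 = c$. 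Counting non-zero entries gives $S = 7$, and the largest weight in magnitude is $c/\alpha$, so $\log B \lesssim \log(A/\alpha) + \log(1/\alpha)$.

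To establish (i), I differentiate termwise: for every $k \ge 1$,
\begin{align*}
    \varphi_{clip}^{(k)}(x) = \alpha^{1 - k}\!\left[\gelu^{(k)}\!\left(\frac{x+c}{\alpha}\right) - \gelu^{(k)}\!\left(\frac{x-c}{\alpha}\right)\right].
\end{align*}
On $[-A, A]$ one has $(x+c)/\alpha \ge 1/(2\alpha)$ and $(x-c)/\alpha \le -1/(2\alpha)$, so the Hermite representation of $\gelu^{(k)}$ together with the Mills‑ratio bound (via Lemma~\ref{lem:gelu_seminorms_bound}) yields $|\gelu^{(k)}((x \pm c)/\alpha)| \lesssim \sqrt{(k+1)!}\,e^{-1/(C_0\alpha^2)}$ for some absolute $C_0 > 0$, uniformly in $x \in [-A, A]$. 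The $k = 0$ case reduces, via $|y\Phi(y) - y_+| \le \phi(y)$, to $|\varphi_{clip}(x) - x| \le \alpha(1 + o(1))e^{-1/(8\alpha^2)}$; the $k = 1$ case uses $\gelu'(y) - \1\{y > 0\} = O(e^{-y^2/2})$ with the same decay. Property (ii) is the mirror argument on $[A+1, +\infty)$ and $(-\infty, -A-1]$: both arguments $(x \pm c)/\alpha$ now have the same sign with magnitude at least $1/(2\alpha)$, so $\alpha\gelu((x \pm c)/\alpha) = (x \pm c)_+ + O(e^{-1/(8\alpha^2)})$; the two linear pieces cancel in pairs, leaving $\varphi_{clip}(x) - \mathrm{sign}(x)\cdot c = O(e^{-1/(C_0\alpha^2)})$ together with its derivatives.

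For (iii) and (vi), I use the full-line estimate $|\gelu|_{W^{k, \infty}(\R)} \lesssim \sqrt{(k+1)!}$ from Lemma~\ref{lem:gelu_seminorms_bound}, which gives $|\varphi_{clip}|_{W^{k, \infty}(\R)} \lesssim \alpha^{1-k}\sqrt{(k+1)!}$ for $k \ge 1$; combined with the $L^\infty$ bound below, this yields the $\exp\{\cO(\cdot)\}$ statement after substituting the chosen $\alpha$. Property (iv) follows by comparison with the exact clipping $\mathrm{clip}_c(x) := (x+c)_+ - (x-c)_+ - c$: since $|\gelu(y) - y_+| \le C'$ uniformly in $y$, we have $\|\varphi_{clip} - \mathrm{clip}_c\|_{L^\infty(\R)} \le 2\alpha C'$, and $\|\mathrm{clip}_c\|_{L^\infty(\R)} = c = A + 1/2$ then yields $\|\varphi_{clip}\|_{L^\infty(\R)} \le A + 1/2 + 2\alpha C' \le A + 5/2$. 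Property (v) uses the same comparison together with the observation that $\mathrm{clip}_c(x) + c \in [0, 1/2]$ on $(-\infty, -A]$, so $|\varphi_{clip}(x) + c| \le 1/2 + 2\alpha C' \le 1 + \eps$.

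A single choice $\alpha \asymp 1/\sqrt{m\log m + \log(1/\eps)}$ will make every $\eps$‑bound hold, because the exponential factor $e^{-1/(C_0\alpha^2)}$ beats the polynomial prefactor $\alpha^{1-k}\sqrt{(k+1)!}$ uniformly in $k \le m$; the same $\alpha$ yields $\log(1/\alpha) \lesssim \log(m/\eps)$ and hence $\log B \lesssim \log(Am/\eps)$. The main obstacle I anticipate is precisely this bookkeeping: the factorial $\sqrt{(k+1)!}$ coming from Hermite polynomials, the $\alpha^{1-k}$ from the chain rule, and the Gaussian tail $e^{-1/(C_0\alpha^2)}$ must be balanced simultaneously for all $k \le m$, and it must be checked that the resulting $m\log m + m\log\log(1/\eps)$ scaling in (iii) and the $k\log m + k\log\log(1/\eps)$ scaling in (vi) really come out of a single $m$‑dependent choice of $\alpha$ rather than forcing a more intricate $k$‑adapted construction.
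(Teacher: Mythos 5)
Your proposal is correct and is essentially the paper's own construction: up to writing the scale as $\alpha$ versus $1/\alpha$, you use the same two-GELU smoothed clip $\alpha^{-1}\gelu(\alpha(x+c)) - \alpha^{-1}\gelu(\alpha(x-c)) - c$, the same Gaussian-tail/Hermite bounds from Lemma \ref{lem:gelu_seminorms_bound}, the same single choice of scale $\asymp \sqrt{m\log m + \log(1/\eps)}$ balanced against the $\alpha^{k-1}\sqrt{(k+1)!}$ prefactors, and the same parameter count $L = \|W\|_\infty = 2$, $S = 7$, $\log B \lesssim \log(Am/\eps)$. The only cosmetic difference is that you obtain $(iv)$ and $(v)$ by comparing with the exact ReLU clip via the uniform bound $|\gelu(y) - y_+| \le 1/\sqrt{2\pi}$, which is a slightly cleaner route to the same estimates the paper derives piecewise.
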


\begin{proof}
    Define
    \begin{align}
        \label{eq:phi_clip_def}
        \varphi_{clip}(x) = \alpha^{-1}\gelu(\alpha(x + A + 1/2)) - \alpha^{-1}\gelu(\alpha(x - A - 1/2)) - A - 1/2, \quad x \in \R,
    \end{align}
    where $\alpha \geq 1$ will be determined later in the proof.
    Therefore, Lemma \ref{lem:gelu_seminorms_bound} implies that
    \begin{align*}
        \|\varphi_{clip} - \id\|_{W^{m, \infty}([-A, A])}
        &\leq \alpha^m\|\gelu - \id\|_{W^{m, \infty}([\alpha / 2, +\infty))} + \alpha^m \|\gelu\|_{W^{m, \infty}((-\infty, -\alpha / 2])} \\
        &\leq 4\alpha^m \exp(-\alpha^2 / 16).
    \end{align*}
    We next note that
    \begin{align}
        \label{eq:alpha_m_exp_sup}
        \sup_{\alpha > 0}\alpha^m \exp(-\alpha^2 / 32)
        \leq \exp\{\cO(m\log m)\},
    \end{align}
    which implies that
    \begin{align*}
        \|\varphi_{clip} - \id\|_{W^{m, \infty}([-A, A])}
        &\leq \exp\{\cO(m\log m)\} \exp\{-\alpha^2 / 32\}.
    \end{align*}
    Therefore, setting
    \begin{align}
        \label{eq:alpha_clip}
        \alpha \asymp \sqrt{m\log m + \log(1 / \eps)}
    \end{align}
    guarantees that
    \begin{align}
        \label{eq:phi_clip_id_approx}
        \|\varphi_{clip} - \id\|_{W^{m, \infty}([-A, A])} \leq \eps.
    \end{align}
    We next focus on the behavior of tails of $\varphi_{clip}$
    \begin{align*}
        \|\varphi_{clip}\|_{W^{m, \infty}(\R \setminus (-A-1, A+1))}
        = \|\varphi_{clip}\|_{W^{m, \infty}((-\infty, -A-1])} \vee \|\varphi_{clip}\|_{W^{m, \infty}([A+1, +\infty))}.
    \end{align*}
    We also note that
    \begin{align*}
        \|\varphi_{clip} + A + 1/2\|_{W^{m, \infty}((-\infty, -A - 1])}
        \leq 2\alpha^m\|\gelu\|_{W^{m, \infty}((-\infty, -\alpha / 2])}
    \end{align*}
    and, similarly,
    \begin{align*}
        \|\varphi_{clip} - A - 1/2\|_{W^{m, \infty}([A + 1, +\infty))}
        \leq 2\alpha^m\|\gelu - \id\|_{W^{m, \infty}([\alpha / 2, +\infty))}.
    \end{align*}
    From Lemma \ref{lem:gelu_seminorms_bound} we find that
    \begin{align*}
        \|\varphi_{clip} + A + 1/2\|_{W^{m, \infty}((-\infty, -A - 1])} \vee \|\varphi_{clip} - A - 1/2\|_{W^{m, \infty}([A + 1, +\infty))}
        \leq 4\alpha^m \exp(-\alpha^2 / 16)
        \leq \eps,
    \end{align*}
    where the last inequality uses \eqref{eq:alpha_m_exp_sup} and \eqref{eq:alpha_clip}.
    Therefore, due to the triangle inequality we have that
    \begin{align}
        \label{eq:phi_clip_tail}
        \|\varphi_{clip}\|_{W^{m, \infty}(\R \setminus (-A - 1, A + 1))}
        \leq \eps + A + 1 / 2.
    \end{align}
    We next derive the Sobolev norm of $\varphi_{clip}$ on the real line.
    Using \eqref{eq:phi_clip_tail}, we have that
    \begin{align*}
        \|\varphi_{clip}\|_{W^{m, \infty}(\R)}
        &= \|\varphi_{clip}\|_{W^{m, \infty}(\R \setminus (-A-1, A+1))} \vee \|\varphi_{clip}\|_{W^{m, \infty}([-A-1, A+1])} \\
        &\leq (\eps + A + 1/2) \vee \|\varphi_{clip}\|_{W^{m, \infty}([-A-1, A+1])}.
    \end{align*}
    Lemma \ref{lem:gelu_seminorms_bound} together with \eqref{eq:alpha_clip} implies that
    \begin{align*}
        &\|\varphi_{clip}\|_{W^{m, \infty}([-A-1, A+1])} \\
        &\quad \leq 2 (\alpha^m \|\partial^1\gelu\|_{W^{m - 1, \infty}(\R)} \vee \alpha^{-1}\|\gelu\|_{W^{0, \infty}([-2A - 3/2, 2A + 3/2])}) + A + 1 / 2 \\
        &\quad \leq \exp\{\cO(m \log m + m\log\log(1 / \eps) + \log(2A) )\}.
    \end{align*}
    Therefore,
    \begin{align*}
        \|\varphi_{clip}\|_{W^{m, \infty}(\R)}
        \leq \exp\{\cO(m \log m + m\log\log(1 / \eps) + \log(2A) )\} .
    \end{align*}
    Similarly, for any $1 \leq k \leq m$, Lemma \ref{lem:gelu_seminorms_bound} in conjunction with \eqref{eq:phi_clip_def} and \eqref{eq:alpha_clip} yields
    \begin{align*}
        |\varphi_{clip}|_{W^{k, \infty}(\R)}
        \leq 2 \alpha^{k - 1}|\gelu|_{W^{k, \infty}(\R)}
        \leq \exp\{\cO(k \log m + k \log\log(1 / \eps))\} .
    \end{align*}
    In addition, from Lemma \ref{lem:gelu_seminorms_bound} we find that
    \begin{align*}
        &\|\varphi_{clip} - A - 1/2\|_{W^{0, \infty}([A, A + 1])} \\
        &\quad \leq \|\gelu - \id\|_{W^{0, \infty}([\alpha(2A + 1/2), +\infty))} + \|\id - A - 1/2\|_{W^{0, \infty}([A, A + 1])} + \alpha^{-1}\|\gelu\|_{W^{0, \infty}([-\alpha/2, \alpha/2])} \\
        &\quad \leq \eps + 1.
    \end{align*}
    Similarly,
    \begin{align*}
        \|\varphi_{clip} + A + 1/2\|_{W^{0, \infty}([-A-1, -A])} \leq \eps + 1.
    \end{align*}
    Therefore, from \eqref{eq:phi_clip_id_approx} and \eqref{eq:phi_clip_tail} we deduce that
    \begin{align*}
        \|\varphi_{clip}\|_{W^{0, \infty}(\R)}
        \leq \eps + A + 3/2
        \leq A + 5/2.
    \end{align*}
    Finally, we specify the configuration of $\varphi_{clip}$.
    The choice of $\alpha$ from \eqref{eq:alpha_clip} in conjunction with \eqref{eq:phi_clip_def} suggest that
    \begin{align*}
        L = \|W\|_\infty = 2,
        \quad S = 7,
        \quad \log B \lesssim \log (\alpha \vee \alpha^{-1} \vee A) \lesssim \log(A m / \eps).
    \end{align*}
    The proof is complete.
    
\end{proof}

\subsection{Approximation of monomials}
\label{subsec:monom_approx}

Now, we focus on approximating polynomials.
A key starting point is the approximation of the square operation, as demonstrated in the following lemma.

\begin{Lem}[approximation of square operation]
\label{lem:square_approx}
    Define $f_{sq} : x \mapsto x^2$.
    Then, for every $\eps \in (0, 1)$ and every $m \in \N$, there exists $\varphi_{sq} \in \NN(L, W, S, B)$ such that
    \begin{align*}
        \|\varphi_{sq} - f_{sq}\|_{W^{m, \infty}([-C, C])} \leq C^3 \eps, \quad \text{for all } C \geq 1 .
    \end{align*}
    Furthermore, $L = \|W\|_\infty = 2$, $S = 6$ and $\log B \lesssim  \log(1 / \eps) + \log m$.
\end{Lem}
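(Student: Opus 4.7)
The plan is to mirror the construction used in Lemma \ref{lem:id_gelu_approx}, exploiting that $\partial^2\gelu(0) = \sqrt{2/\pi} \ne 0$ to realize the quadratic term through a symmetric combination of two GELU neurons. Concretely, I set
\[
    \varphi_{sq}(x) = \frac{R^2}{\partial^2\gelu(0)}\left[\gelu(x/R) + \gelu(-x/R) - 2\gelu(0)\right],
\]
where the scale $R \ge 1$ is fixed at the end. Since $\gelu(0) = 0$, this is a two-layer GELU network of width $2$, so the claimed values of $L$ and $\|W\|_\infty$ are immediate, and a direct count of the non-zero parameters gives the stated $S$.

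Next, I would Taylor-expand $\gelu(\pm x/R)$ at $0$ with an integral fourth-order remainder. Because $\gelu(x/R) + \gelu(-x/R)$ is an even function of $x$, all odd-order Taylor terms cancel, producing the pointwise estimate
\[
    |\partial^k(\varphi_{sq} - f_{sq})(x)| \le c_k |x|^{4-k}/R^2 \quad \text{for } k \in \{0, 1, 2\},
\]
with a constant $c_k$ depending only on higher-order GELU seminorms. For $k \ge 3$, $\partial^k f_{sq} \equiv 0$, so Lemma \ref{lem:gelu_seminorms_bound} yields the global bound $|\partial^k \varphi_{sq}(x)| \le c_k R^{2-k}$, independent of $x$.

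The main obstacle is that the Taylor bound $C^4/R^2$ alone cannot yield $C^3\eps$ uniformly in $C \ge 1$ with $R$ fixed, so the range of $|x|$ must be split. For $|x| \le R$ the factor $|x|/R \le 1$ converts the Taylor estimate into $c_k |x|^{3-k}/R$. For $|x| > R$, the elementary inequality $|\gelu(y)| \le |y|$ together with uniform bounds on $\partial^k \gelu$ give $|\partial^k(\varphi_{sq} - f_{sq})(x)| \le c' |x|^{2-k}$; using $|x|/R > 1$ this is again $\le c' |x|^{3-k}/R$. Combining both regimes yields
\[
    |\partial^k(\varphi_{sq} - f_{sq})(x)| \le c\,|x|^{3-k}/R \quad \text{for every } x \in \R \text{ and } 0 \le k \le 2,
\]
whose supremum on $[-C, C]$ is at most $c\,C^{3-k}/R \le c\,C^3/R$ for $C \ge 1$.

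Finally, I would choose
\[
    R = \max\left\{c/\eps,\; \max_{3 \le k \le m}(c_k/\eps)^{1/(k-2)}\right\} \vee 1,
\]
so that each order-$k$ contribution is bounded by $\eps \le C^3 \eps$ for every $C \ge 1$. Exactly as in the proof of Lemma \ref{lem:id_gelu_approx}, Stirling's approximation applied to $c_k$ gives $\log R \lesssim \log(1/\eps) + \log m$, and this estimate transfers directly to the claimed bound on $\log B$.
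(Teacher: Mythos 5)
Your proof is correct, and it follows the same Scarselli-style strategy as the paper (a scaled second difference of GELU realized by two neurons, separate treatment of orders $k\le 2$ via Taylor remainders and $k\ge 3$ via Lemma \ref{lem:gelu_seminorms_bound}, then the same choice of $R$ and Stirling argument for $\log B$), but with a different linear combination that changes the error analysis in a meaningful way. The paper takes $\varphi_{sq}(x)=\frac{R^2}{\partial^2\gelu(0)}\bigl(\gelu(2x/R)-2\gelu(x/R)\bigr)$: there only the linear Taylor terms cancel, so the remainder is third order and the error is directly of size $|x|^3/R$, which yields the bound $C^3\eps$ simultaneously for all $C\ge 1$ with no case analysis. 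Your symmetric choice $\gelu(x/R)+\gelu(-x/R)$ cancels all odd terms, buying a fourth-order local error $|x|^4/R^2$, but at the price of losing the automatic cubic growth; you correctly identify this and repair it by splitting into $|x|\le R$ (where $|x|/R\le 1$ downgrades the estimate to $|x|^{3-k}/R$) and $|x|>R$ (where $|\gelu(y)|\le |y|$ and the uniform bounds on $\partial^1\gelu,\partial^2\gelu$ give $c'|x|^{2-k}\le c'|x|^{3-k}/R$). Both regimes are handled correctly, the $k\ge 3$ analysis and the choice of $R$ match the paper's, and the resulting $L$, $\|W\|_\infty$, and $\log B$ are as claimed. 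One trivial remark: your network has only four non-zero weights (no biases are needed since $\gelu(0)=0$), so the "direct count" gives $4$ rather than $6$; this is harmless because the class $\NN(L,W,S,B)$ only requires at most $S$ non-zero weights. Overall: a valid proof, with a slightly more elementary large-$|x|$ argument substituting for the paper's cleaner choice of difference scheme.
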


\begin{proof}
    Inspired by \cite[Theorem 2]{scarselli98}, we let
    \begin{align}
        \label{eq:phi_sq_def}
        \varphi_{sq}(x) = \frac{R^2}{\partial^2\gelu(0)}\left(\gelu\left(\frac{2x}{R}\right) - 2\cdot \gelu\left(\frac{x}{R}\right) \right),
    \end{align}
    where $R > 0$.
    We also highlight that $\partial^2\gelu(0) = \sqrt{2 / \pi}$.
    Using Taylor expansion it can be shown for any $x \in [-C, C]$ that
    \begin{align*}
        \left|\varphi_{sq}(x) - x^2\right|
        = \frac{|x|^3 \cdot |4\partial^3\gelu(\xi) - \partial^3\gelu(\zeta) |}{3 R  \cdot \partial^2\gelu(0)}
        \leq \frac{5 C^3|\gelu|_{W^{3, \infty}(\R)}}{3R \cdot \partial^2\gelu(0)},
    \end{align*}
    and similarly for the derivatives
    \begin{align*}
        \left|\partial^1\varphi_{sq}(x) - 2x\right|
        = \frac{2|x|^2 \cdot | 2\partial^3\gelu(\tilde\xi) - \partial^3\gelu(\tilde\zeta) |}{R \cdot |\partial^2\gelu(0)|}
        \leq \frac{6C^2|\gelu|_{W^{3, \infty}(\R)}}{R \cdot \partial^2\gelu(0)}
    \end{align*}
    and the second derivatives
    \begin{align*}
        |\partial^2\varphi_{sq}(x) - 2|
        = \frac{|x| \cdot |8\partial^3\gelu(\xi^\circ) - 2\partial^3\gelu(\zeta^\circ)|}{R \cdot \partial^2\gelu(0)}
        \leq \frac{10 C |\gelu|_{W^{3, \infty}(\R)}}{R \cdot \partial^2\gelu(0)},
    \end{align*}
    where $\xi, \tilde{\xi}, \xi^\circ$ and $\zeta, \tilde{\zeta}, \zeta^\circ$ are all within the interval defined by the origin and $x$.
    To proceed, we derive an explicit bound for the derivatives of order $k$ with $k \geq 3$ and $x \in \R$ as
    \begin{align*}
        |\partial^k\varphi_{sq}(x)|
        = \frac{1}{R^{k - 2}\partial^2\gelu(0)} \left| 2^k\partial^k\gelu\left(\frac{2x}{R}\right) - 2 \cdot \partial^k\gelu\left(\frac{x}{R}\right) \right|
        \leq \frac{(2^k + 2)|\gelu|_{W^{k, \infty}(\R)}}{R^{k - 2}\partial^2\gelu(0)}.
    \end{align*}
    Therefore, from Lemma \ref{lem:gelu_seminorms_bound} we find that
    \begin{align*}
        |\partial^k\varphi_{sq}(x)| \leq \frac{(k + 1)(2^k + 2)}{R^{k - 2}\partial^2\gelu(0)} \sqrt{\frac{(k - 2)!}{2\pi}}
        \leq \frac{2^{k + 2}k}{R^{k - 2} \partial^2\gelu(0)} \sqrt{\frac{(k - 2)!}{2\pi}}.
    \end{align*}
    Hence, setting
    \begin{align*}
        R
        = \frac{10 |\gelu|_{W^{3, \infty}(\R)}}{\partial^2\gelu(0) \cdot \eps}
        \vee \max_{3 \leq k \leq m}\left(\frac{2^{k + 2} k}{\partial^2\gelu(0)} \sqrt{\frac{(k - 2)!}{2\pi}}\right)^{1 / (k - 2)},
    \end{align*}
    we ensure that for any $C \geq 1$
    \begin{align*}
        \max_{0 \leq k \leq 2} |\varphi_{sq} - f_{sq}|_{W^{k, \infty}([-C, C])} \leq C^{3 - k}\eps,
        \quad \max_{3 \leq k \leq (m \vee 3)} |\varphi_{sq} - f_{sq}|_{W^{k, \infty}(\R)} \leq \eps.
    \end{align*}
    This observation yields
    \begin{align*}
        \|\varphi_{sq} - f_{sq}\|_{W^{m, \infty}([-C, C])} \leq C^3 \eps, \quad \text{for all } C \geq 1 .
    \end{align*}
    The definition of $\varphi_{sq}$ given in \eqref{eq:phi_sq_def} suggests that $L = \|W\|_\infty = 2$, $S = 6$ and
    \begin{align*}
        \log B \lesssim \log (R^2 \vee R^{-1} \vee 2) \lesssim \log(1 / \eps) + \log m,
    \end{align*}
    where the last inequality uses Stirling's approximation.
    The proof is finished.
    
\end{proof}

Comparing our result from Lemma \ref{lem:square_approx} to that presented in \cite[Proposition 4.7]{guhring2021approximation}, we observe that we provide approximation guarantees for the entire real line, rather than limiting our results to a specific segment.
This is a key advantage for approximating functions on unbounded domains.
Subsequently, we derive a straightforward corollary that provides an approximation error bound for the multiplication of two numbers.

\begin{Co}[approximation of two number multiplication]
    \label{co:multi_approx_gelu}
    Define $\mathrm{prod}_2 : (x, y) \mapsto x \cdot y$, and let $m \in \N$ be arbitrary.
    Then, for any $\eps \in (0, 1)$, there exists $\varphi_{mul} \in \NN(L, W, S, B)$ satisfying
    \begin{align*}
        \|\varphi_{mul} - \mathrm{prod}_2\|_{W^{m, \infty}([-C, C]^2)}
        \leq C^3 \eps ,
        \quad \text{for all } C \geq 1 .
    \end{align*}
    In addition, $L = 2$, $\|W\|_\infty \leq 4$, $S \leq 12$ and $\log B \lesssim \log (1/\eps) + \log m$.
\end{Co}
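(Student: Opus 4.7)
The plan is to reduce multiplication to the square operation via the polarization identity
\begin{align*}
    xy = \tfrac{1}{4}\bigl((x+y)^2 - (x-y)^2\bigr),
\end{align*}
and then invoke Lemma \ref{lem:square_approx} twice in parallel. Concretely, I would fix a target accuracy $\eps' = \eps/32$, take $\varphi_{sq}$ from Lemma \ref{lem:square_approx} with parameters $(\eps', m)$, and define
\begin{align*}
    \varphi_{mul}(x,y) = \tfrac{1}{4}\bigl(\varphi_{sq}(x+y) - \varphi_{sq}(x-y)\bigr).
\end{align*}

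The error analysis is the routine step. For $(x,y) \in [-C,C]^2$ we have $x+y,\, x-y \in [-2C,2C]$, so Lemma \ref{lem:square_approx} yields
\begin{align*}
    \|\varphi_{sq} - f_{sq}\|_{W^{m,\infty}([-2C,2C])} \leq (2C)^3 \eps' = 8 C^3 \eps'.
\end{align*}
Since $x+y$ and $x-y$ are linear functions of $(x,y)$ with coefficients in $\{-1,+1\}$, the multivariate chain rule gives $\partial^{\bk}[\varphi_{sq}(x\pm y) - (x\pm y)^2] = (\varphi_{sq} - f_{sq})^{(|\bk|)}(x\pm y)$, so each composed error is bounded by the univariate Sobolev norm on $[-2C,2C]$. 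The triangle inequality and the prefactor $1/4$ then give the claimed $C^3 \eps$ bound after adjusting $\eps'$ absorbs the constant $16$ (or similar) that appears.

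For the architecture, the plan is to run two copies of the $\varphi_{sq}$ network from Lemma \ref{lem:square_approx} in parallel, one fed by $x+y$ and one by $x-y$, and combine the two outputs by a single affine layer. Each $\varphi_{sq}$ has $L=2$, width $2$, and $S=6$; parallelizing produces $L=2$, $\|W\|_\infty \leq 4$, $S \leq 12$, exactly as stated. The weight magnitude is bounded by $\log B \lesssim \log(1/\eps') + \log m = \log(1/\eps) + \log m$ up to constants. The input weights $\pm 1/R$ and $\pm 2/R$ (where $R$ is the scale parameter from Lemma \ref{lem:square_approx}), together with the output coefficients $\pm R^2/(4\partial^2\gelu(0))$, stay within this magnitude bound.

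No step is a serious obstacle, but the one that requires care is the bookkeeping: confirming that the parallel combination honestly fits into $S \le 12$ non-zero weights (the first matrix has $8$ non-zero entries for the four linear combinations $\pm 2(x\pm y)/R,\, \pm (x\pm y)/R$, and the output matrix adds $4$ entries), and that the chain rule for the linear inner maps does not introduce any extra dimension-dependent constant beyond the factor $1$ per partial derivative. Once those are verified, the corollary follows directly from Lemma \ref{lem:square_approx}.
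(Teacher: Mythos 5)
Your construction is exactly the paper's: the polarization identity $xy=\tfrac14\bigl((x+y)^2-(x-y)^2\bigr)$ applied to $\varphi_{sq}$ from Lemma \ref{lem:square_approx} (the paper takes accuracy $\eps/4$, which suffices since $\tfrac12(2C)^3\cdot\eps/4=C^3\eps$; your $\eps/32$ is only more conservative and does not affect $\log B$), with the same observation that the $\pm 1$ coefficients make each mixed partial reduce to a univariate derivative of $\varphi_{sq}-f_{sq}$ on $[-2C,2C]$, and the same parallelization of two copies giving $L=2$, $\|W\|_\infty\le 4$, $S\le 12$. The proposal is correct and essentially identical to the paper's proof.
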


The proof of Corollary \ref{co:multi_approx_gelu} can be found in Appendix \ref{sec:co_multi_approx_gelu_proof}.
Having derived the approximation guarantees for multiplication, we now turn to the approximation of multiple number multiplications, as outlined in the following lemma.

\begin{Lem}[approximating the multiplication of $d$ numbers]
    \label{lem:mul_d_gelu_approx}
    Let $d, m \in \N$ with $d \geq 2$ be arbitrary, and define the function $\mathrm{prod}_d : (x_1, \dots, x_d) \mapsto \prod_{i=1}^d x_i$.
    Then, for every $\eps \in (0, 1)$ and every $K \geq 1$, there exists $\varphi_{mul, d} \in \NN(L, W, S, B)$ such that
    \begin{align*}
        (i) &\quad \|\varphi_{mul, d} - \mathrm{prod}_d\|_{W^{m, \infty}([-K, K]^d)} \leq \eps, \\
        (ii) &\quad \|\varphi_{mul, d}\|_{W^{m, \infty}(\R^d)} \leq \exp\{\cO((m^2 + d)\log(mdK \log(1 / \eps)))\} .
    \end{align*}
    In addition,
    \begin{align*}
        L \lesssim \log d,
        \quad \|W\|_\infty \vee S \lesssim d^2,
        \quad \log B \lesssim (\log(1 / \eps) + (d + m) \log K + m^2 d^2 ) \log d.
    \end{align*}
    
\end{Lem}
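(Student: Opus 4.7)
I would build $\varphi_{mul, d}$ as a composition of two stages. First, to each coordinate $x_i$ apply a clipping network $\varphi_{clip}$ from Lemma~\ref{lem:clip_gelu_approx} with parameter $A = K$ and a small accuracy $\eps_{clip}$, so that on $[-K, K]^d$ the composition is essentially the identity and on all of $\R^d$ each coordinate is replaced by a value of magnitude at most $K + 5/2$ with globally bounded derivatives. Second, stack a binary tree of pairwise multiplications $\varphi_{mul}$ (Corollary~\ref{co:multi_approx_gelu}), padding the $d$ inputs with the constant $1$ (via bias terms) up to $d' = 2^{\ell^*}$ with $\ell^* = \lceil \log_2 d\rceil$. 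Intermediate outputs that do not participate in a level are carried forward using the deep identity approximator of Lemma~\ref{lem:id_deep_gelu_approx}. The final approximator is a composition of $O(\log d)$ layers of pairwise multiplications after a single clipping layer.

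\textbf{Error propagation on $[-K, K]^d$.} Let $f^{(\ell)}_j$ denote the true product of the $2^\ell$ coordinates in the $j$-th subtree at level $\ell$, and $g^{(\ell)}_j$ its network counterpart. On $[-K, K]^d$, $|f^{(\ell)}_j| \leq K^{2^\ell}$, and by induction $\|g^{(\ell)}_j\|_{W^{m, \infty}([-K, K]^d)}$ is of comparable order. I would prove by induction on $\ell$ that the error $E_\ell := \max_j \|g^{(\ell)}_j - f^{(\ell)}_j\|_{W^{m, \infty}([-K, K]^d)}$ satisfies a recurrence of the form $E_{\ell+1} \lesssim K^{(m+3) 2^\ell} \eps_\ell + C_m K^{m \cdot 2^\ell} E_\ell$, where $\eps_\ell$ is the per-multiplication accuracy used at level $\ell$ and $C_m$ is a Leibniz-rule constant. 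The first term comes from Corollary~\ref{co:multi_approx_gelu} applied with $C = K^{2^\ell}$, and the second from bounding $|f_1 f_2 - g_1 g_2|_{W^{k, \infty}}$ by $\sum_{j \le k}\binom{k}{j}(|f_1 - g_1|_{W^{j, \infty}}|f_2|_{W^{k-j, \infty}} + |g_1|_{W^{j, \infty}}|f_2 - g_2|_{W^{k-j, \infty}})$. Choosing $\eps_\ell$ small enough (roughly $\eps \cdot K^{-(m+3) 2^{\ell^*}}$ divided by a uniform Leibniz factor) makes $E_{\ell^*} \leq \eps$. The clipping error is absorbed in $\eps_0$.

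\textbf{Global bound on $\R^d$.} Since the clipping layer globally bounds every input by $K + 5/2$ with finite Sobolev norm, the same induction applied with the replacement $K \leadsto K + 5/2$ and with the $W^{m,\infty}(\R^d)$ seminorm of $g^{(\ell)}_j$ controlled by bound (ii) of Corollary~\ref{co:multi_approx_gelu} (taken at $C = (K + 5/2)^{2^\ell}$) yields the claimed global bound $\|\varphi_{mul, d}\|_{W^{m, \infty}(\R^d)} \leq \exp\{\cO((m^2 + d)\log(mdK \log(1/\eps)))\}$ after telescoping over $\ell \leq \ell^*$.

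\textbf{Network configuration and main obstacle.} Depth is $L \lesssim \log d$ (one clipping layer plus $\ell^*$ multiplication layers, each of bounded depth). Width and sparsity at the first level are $O(d)$, well within the budget $O(d^2)$; the identity passthroughs of Lemma~\ref{lem:id_deep_gelu_approx} add lower-order contributions. The weight magnitude follows by plugging $\eps_\ell$ and $K^{2^\ell}$ into the $\log B$ bounds of Corollary~\ref{co:multi_approx_gelu} and Lemma~\ref{lem:clip_gelu_approx} and taking the worst $\ell$. The main obstacle is the doubly exponential growth $K^{2^\ell}$ of the true products across $\log d$ tree levels, which forces very small per-level accuracy $\eps_\ell$; controlling the resulting weight magnitude to stay within $\log B \lesssim (\log(1/\eps) + (d+m)\log K + m^2 d^2)\log d$ is the most delicate bookkeeping step, and the binomial factors in the Leibniz rule must be handled uniformly in $m$.
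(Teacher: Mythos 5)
Your architecture (clip each coordinate, then a binary tree of pairwise multiplications from Corollary~\ref{co:multi_approx_gelu}) matches the paper's, but your error analysis runs the tree directly at scale $K$, and this is where the argument genuinely breaks. At level $\ell$ the true partial products have magnitude $K^{2^\ell}$ and $W^{m,\infty}$-norm of order $(2^\ell)^m K^{2^\ell}$, so — exactly as your own recurrence $E_{\ell+1} \lesssim K^{(m+3)2^\ell}\eps_\ell + C_m K^{m 2^\ell} E_\ell$ shows — the per-level accuracy must satisfy roughly $\log(1/\eps_\ell) \gtrsim \log(1/\eps) + (m+3)\,d\log K$. Feeding such an $\eps_\ell$ into the weight-magnitude bound of Corollary~\ref{co:multi_approx_gelu} produces $\log B \gtrsim m d \log K$, which does \emph{not} fit inside the claimed budget $\log B \lesssim (\log(1/\eps) + (d+m)\log K + m^2 d^2)\log d$: take, say, $m \asymp d$ and $\log K \gg m^2 d^2$, and compare the coefficients of $\log K$ ($md$ versus $(d+m)\log d$). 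So the step you defer as "the most delicate bookkeeping" is not a bookkeeping issue; within your scheme it cannot be closed.

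The paper sidesteps the $K^{2^\ell}$ blow-up by a normalization that is absent from your proposal: it first proves the lemma for $K = 1$, where all intermediate products live in $[-1,1]$ and the recursion constants depend only on $m$ and the level (giving $\eps_J \lesssim (m+3)d^2\eps_1$ with per-level accuracies whose logs are $\log(1/\eps) + O(m^2 d^2)$, independent of $K$), and then handles general $K$ by the rescaling $\varphi_{J,K}(x_1,\dots,x_d) = K^d\,\varphi_J(x_1/K,\dots,x_d/K)$ run at accuracy $\eps/K^d$. This contributes only $d\log K$ (plus an $m\log K$ term from composing with the clipping network, which is applied at scale $K$ with the multiplication network built for scale $4K$ and smoothness $m+1$) to $\log B$, which is how the stated bound $(\log(1/\eps) + (d+m)\log K + m^2 d^2)\log d$ arises. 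Your global bound on $\R^d$ and the depth/width/sparsity accounting are otherwise in line with the paper (the identity passthroughs you mention are not even needed in a balanced padded tree), but without the $K=1$ reduction and rescaling your proof establishes a weaker weight-magnitude bound than the lemma claims.
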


\begin{proof}
    To improve readability, the proof is divided into several steps.
    
    \noindent
    \textbf{Step 1: approximation error analysis.}\quad
    We first prove the statement for $K = 1$ and then generalize it to any arbitrary $K \geq 1$.
    Overall, the resulting neural network is structured as a binary tree, in accordance with the methodology described in \cite{schwab2019deep}.
    We build an approximation of multiplication of $2^{J}$ numbers with $J = \ceil{\log_2 d}$.
    If $d < 2^J$, then a minor modification of the input layer implements a concatenation of the input vector with the vector of ones of length at most $d$.
    Now let
    \begin{align}
        \label{eq:phi_j_comp_def}
        &\varphi_{j}(x_{1:2^j}) = \varphi_{mul, j}\left(\varphi_{j - 1, 1}(x_{1:2^{j-1}}), \varphi_{j - 1, 2}(x_{2^{j-1} + 1: 2^j})\right), \quad 1 \leq j \leq J,
    \end{align}
    where $\varphi_{mul, j}$ is the neural network from Corollary \ref{co:multi_approx_gelu} with accuracy parameter $\eps_{mul}^{(j)}$ and the smoothness parameter $m$, $\varphi_{j - 1, 1}$ and $\varphi_{j - 1, 2}$ are identical copies of $\varphi_{j-1}$, and $\varphi_0$ represents the identity mapping.
    From Corollary \ref{co:multi_approx_gelu} we deduce that
    \begin{align}
        \label{eq:phi_mul_j_acc}
        \|\varphi_{mul, j} - \mathrm{prod}_2\|_{W^{m, \infty}}([-C, C]^2) \leq C^3 \eps_{mul}^{(j)}
        \quad \text{for all } C \geq 1 \text{ and } 1\leq j \leq J.
    \end{align}
    To simplify the notation, we let $\varphi_j(x_{1:2^j}) = (\varphi_{mul, j} \circ (\varphi_{j - 1, 1}, \varphi_{j - 1, 2})) (x_{1:2^j})$.
    Now assume that for all $0 \leq j \leq J$ and $C \geq 1$, it holds that
    \begin{align}
        \label{eq:eps_C_j_def}
        \|\varphi_j - \mathrm{prod}_{2^j}\|_{W^{m, \infty}(\Omega_j)} = \eps_j,
    \end{align}
    where $\Omega_j = [-1, 1]^{2^j}$.
    Hence, for any $1 \leq j \leq J$, the triangle inequality suggests that
    \begin{align}
        \label{eq:eps_j_decomp}
        \notag
        &\|\varphi_j - \mathrm{prod}_{2^j}\|_{W^{m, \infty}(\Omega_j)} \\
        \notag
        &\quad =
        \|\varphi_{mul, j} \circ (\varphi_{j - 1, 1}, \varphi_{j - 1, 2}) - \mathrm{prod}_{2^j} \|_{W^{m, \infty}(\Omega_j)} \\
        &\quad \leq
        \|(\varphi_{mul, j} - \mathrm{prod}_2) \circ (\varphi_{j - 1, 1}, \varphi_{j - 1, 2})\|_{W^{m, \infty}(\Omega_j)}
        + \|\varphi_{j - 1, 1} \cdot \varphi_{j - 1, 2} - \mathrm{prod}_{2^j}\|_{W^{m, \infty}(\Omega_j)} .
    \end{align}
    As for the first term of \eqref{eq:eps_j_decomp}, we apply Lemma \ref{lem:comp_sob_norm} and arrive at
    \begin{align}
        \label{eq:mul_min_prod_circ_phi}
        \notag
        &\|(\varphi_{mul, j} - \mathrm{prod}_2) \circ (\varphi_{j - 1, 1}, \varphi_{j - 1, 2})\|_{W^{m, \infty}(\Omega_j)} \\
        \notag
        &\quad \leq 16(e^2 m^4 \cdot 2 \cdot 4^{j - 1})^m \|\varphi_{mul, j} - \mathrm{prod}_2\|_{W^{m, \infty}([-1 - \eps_{j - 1}, 1 + \eps_{j - 1}]^2)} (1 \vee \|\varphi_{j - 1, 1}\|^m_{W^{m, \infty}(\Omega_{j-1})}) \\
        &\quad \leq 16(e^2 m^4 \cdot 2 \cdot 4^{j - 1})^m (1 + \eps_{j - 1})^{m + 3} \eps_{mul}^{(j)},
    \end{align}
    where the last inequality uses \eqref{eq:phi_mul_j_acc}.
    As for the second term of \eqref{eq:eps_j_decomp}, we apply Lemma \ref{lem:prod_sob_norm} and obtain that
    \begin{align*}
        \|\varphi_{j - 1, 1} \cdot \varphi_{j - 1, 2} - \mathrm{prod}_{2^j} \|_{W^{m, \infty}(\Omega_j)} 
        &\leq 2^m \| \varphi_{j - 1, 1} - \mathrm{prod}_{2^{j - 1}} \|_{W^{m, \infty}(\Omega_{j - 1})} \| \varphi_{j - 1, 2} \|_{W^{m, \infty}(\Omega_{j - 1})} \\
        &\quad + 2^m\|\mathrm{prod}_{2^{j - 1}}\|_{W^{m, \infty}(\Omega_{j - 1})} \|\varphi_{j - 1, 2} - \mathrm{prod}_{2^{j - 1}}\|_{W^{m, \infty}(\Omega_{j - 1})} .
    \end{align*}
    From \eqref{eq:eps_C_j_def} we deduce that
    \begin{align}
        \label{eq:phi_1_phi_2_min_prod}
        \|\varphi_{j - 1, 1} \cdot \varphi_{j - 1, 2} - \mathrm{prod}_{2^j} \|_{W^{m, \infty}(\Omega_j)}
        \leq 2^{m + 1} \eps_{j - 1} (1 + \eps_{j - 1}) .
    \end{align}
    Therefore, combining \eqref{eq:eps_j_decomp}, \eqref{eq:mul_min_prod_circ_phi} and \eqref{eq:phi_1_phi_2_min_prod}, we arrive at
    \begin{align*}
        \eps_j
        \leq 16(e^2 m^4 \cdot 2 \cdot 4^{j - 1})^m (1 + \eps_{j - 1})^{m + 3} \eps_{mul}^{(j)}
        + 2^{m + 1} \eps_{j - 1} (1 + \eps_{j - 1}) .
    \end{align*}
    We find $\eps_j$ in the form of $\eps_j = 2^{\gamma_j}\eps_1$ for each $1 \leq j \leq J$ with $\gamma_1 = 0$.
    Hence,
    \begin{align*}
        \eps_j
        \leq 16(e^2 m^4 \cdot 2 \cdot 4^{j - 1})^m 2^{(m + 1)(\gamma_{j - 1} + 1)} \eps_{mul}^{(j)}
        + 2^{m + 1} 2^{2\gamma_{j - 1} + 1} \eps_1.
    \end{align*}
    Setting 
    \begin{align}
    \label{eq:eps_mul_j}
        \eps_{mul}^{(j)}
        = \eps_1 \left(16(e^2 m^4 \cdot 2 \cdot 4^{j - 1})^m 2^{(m + 1)(\gamma_{j - 1} + 1)}\right)^{-1}, \quad 2 \leq j \leq J,
    \end{align}
    we have that
    \begin{align*}
        \gamma_j \leq m + 3 + 2\gamma_{j - 1}, \quad 2 \leq j \leq J,
    \end{align*}
    which yields that
    \begin{align*}
        \gamma_j \leq (m + 3)4^j, \quad 1 \leq j \leq J.
    \end{align*}
    Therefore,
    \begin{align*}
        \eps_J
        \leq (m + 3) 2^{2\ceil{\log_2 d}}\eps_1
        \leq 4(m + 3)d^2 \eps_1.
    \end{align*}
    Choosing $\eps_{mul}^{(1)} = \eps (4(m + 3)d^2)^{-1} \in (0, 1)$ ensures that
    \begin{align*}
        \|\varphi_{J} - \mathrm{prod}_{2^J}\|_{W^{m, \infty}([-C, C])}
        \leq \eps.
    \end{align*}

    \noindent
    \textbf{Step 2: deriving the configuration of $\varphi_J$.}\quad
    Due to the observation that $\gamma_j \lesssim m d^2$, we deduce from Corollary \ref{co:multi_approx_gelu} and \eqref{eq:eps_mul_j} that, for all $1 \leq j \leq J$, we have $\varphi_{mul, j} \in \NN(L_{mul}, W_{mul}, S_{mul}, B_{mul})$ with
    \begin{align*}
        L_{mul} = 2, \quad \|W_{mul}\|_\infty \vee S_{mul} \lesssim 1,
        \quad \log B_{mul} \lesssim \log(1 / \eps) + m^2 d^2.
    \end{align*}
    Let $\varphi_j \in \NN(L_j, W_j, S_j, B_j)$ for all $1 \leq j \leq J$.
    Then, from Lemma \ref{lem:concat_nn} and Lemma \ref{lem:paral_nn} we find that
    \begin{align}
        \label{eq:phi_J_mul_cfg_aux}
        \notag
        &L_J \leq J + 1,
        \quad \|W_J\|_\infty \vee S_J \lesssim 2^{J - 1}, \\
        &\log B_J \leq \log B_{J - 1} + \log B_{mul} + \log \|W_{J - 1}\|_\infty
        \lesssim (\log(1 / \eps) + m^2 d^2) \log d.
    \end{align}
    We now generalize the approximation result to the case when $K \geq 1$.
    Let
    \begin{align*}
        \varphi_{J, K}(x_1, \dots, x_d) = K^d \varphi_J(x_1 / K, \dots, x_d / K).
    \end{align*}
    Then, from the chain rule we obtain that
    \begin{align*}
        \|\varphi_{J, K} - \mathrm{prod}_d\|_{W^{m, \infty}([-K, K]^d)}
        \leq K^d \| \varphi_J - \mathrm{prod}_d \|_{W^{m, \infty}([-1, 1]^d)}
        \leq K^d \eps.
    \end{align*}
    Therefore, taking the accuracy parameter $\eps / K^d$ in $\varphi_{J, K}$, we deduce that for any $\eps \in (0, 1)$ there exists $\tilde{\varphi}_{mul, d, K} \in \NN(\tilde{L}, \tilde{W}, \tilde{S}, \tilde{B})$ satisfying
    \begin{align*}
        \|\tilde{\varphi}_{mul, d, K} - \mathrm{prod}_d \|_{W^{m, \infty}([-K, K]^d)} \leq \eps.
    \end{align*}
    Furthermore, \eqref{eq:phi_J_mul_cfg_aux} we find that
    \begin{align}
        \label{eq:tilde_phi_mul_cfg}
        \tilde{L} \lesssim \log d,
        \quad \|\tilde{W}\|_\infty \vee \tilde{S} \lesssim d,
        \quad \log \tilde{B} \lesssim (\log(1 / \eps) + d \log K + m^2 d^2 ) \log d.
    \end{align}

    \noindent
    \textbf{Step 3: clipping the input.}\quad
    Now let $\varphi_{clip}$ be a clipping operation approximation from Lemma \ref{lem:clip_gelu_approx} formulated with accuracy parameter $\eps_{clip} \in (0, 1)$ and clipping parameter $K$.
    Then, it holds that $\|\varphi_{clip}\|_{W^{0, \infty}(\R)} \leq K + 5 / 2 \leq 4 K$.
    Let $\varphi_{clip, d}$ be a parallel stacking of $d$ identical copies of $\varphi_{clip}$ that approximates a component-wise clipping.
    Let also $\tilde{\varphi}_{mul, d, 4K}$ has accuracy parameter $\eps_{mul, d}$ and smoothness parameter $m + 1$.
    Then, it holds that
    \begin{align*}
        &\|\tilde{\varphi}_{mul, d, 4K} \circ \varphi_{clip, d} - \mathrm{prod}_d \circ \id \|_{W^{m, \infty}([-K, K]^d)} \\
        &\quad \leq \| (\tilde{\varphi}_{mul, d, 4K} - \mathrm{prod}_d) \circ \varphi_{clip, d} \|_{W^{m, \infty}([-K, K]^d)}
        + \|\mathrm{prod}_d \circ \varphi_{clip, d} - \mathrm{prod}_d \circ \id \|_{W^{m, \infty}([-K, K]^d)}.
    \end{align*}
    Lemma \ref{lem:comp_sob_norm} suggests that
    \begin{align*}
        \| (\tilde{\varphi}_{mul, d, 4K} - \mathrm{prod}_d) \circ \varphi_{clip, d} \|_{W^{m, \infty}([-K, K]^d)}
        &\leq \exp\{\cO(m \log (m d))\} \eps_{mul, d} (\eps_{clip} + K)^m \\
        &\leq \exp\{ \cO(m \log(m d K) ) \} \eps_{mul, d}
    \end{align*}
    and also
    \begin{align*}
        \|\mathrm{prod}_d \circ \varphi_{clip, d} - \mathrm{prod}_d \circ \id \|_{W^{m, \infty}([-K, K]^d)}
        &\leq \exp\{ \cO(m\log(md)) \} (K + 5 / 2)^d \eps_{clip} (\eps_{clip} + K)^{2m} \\
        &\leq \exp\{ \cO(m\log(mdK) + d\log K) \}\eps_{clip}.
    \end{align*}
    Therefore, setting
    \begin{align}
        \label{eq:mul_d_clip_eps}
        \log(1 / \eps_{mul, d}) \asymp \log(1 / \eps) + m \log(mdK),
        \quad \log(1 / \eps_{clip}) \asymp \log(1 / \eps) + m \log(mdK) + d \log K
    \end{align}
    for some $\eps \in (0, 1)$ ensures that
    \begin{align*}
        \|\tilde{\varphi}_{mul, d, 4K} \circ \varphi_{clip, d} - \mathrm{prod}_d \circ \id \|_{W^{m, \infty}([-K, K]^d)}
        \leq \eps.
    \end{align*}
    Moreover, Lemma \ref{lem:clip_gelu_approx} and Lemma \ref{lem:comp_sob_norm} imply that 
    \begin{align*}
        \|\tilde{\varphi}_{mul, d, 4K} \circ \varphi_{clip, d}\|_{W^{m, \infty}(\R^d)}
        &\leq \exp\{\cO(m \log(md))\} (\eps_{mul, d} + (4K)^d) (1 \vee \|\varphi_{clip, d}\|_{W^{m, \infty}(\R^d)}^m) \\
        &\leq \exp\{\cO((m^2 + d)\log(mdK \log(1 / \eps)))\} ,
    \end{align*}
    where the last inequality uses \eqref{eq:mul_d_clip_eps}.
    Recall that due to Lemma \ref{lem:clip_gelu_approx}, Lemma \ref{lem:paral_nn} and \eqref{eq:mul_d_clip_eps}, we have that $\varphi_{clip, d} \in \NN(L_{clip}, W_{clip}, S_{clip}, B_{clip})$ with 
    \begin{align*}
        L_{clip} \lesssim 1,
        \quad \|W_{clip}\|_\infty \vee S_{clip} \lesssim d,
        \quad \log B_{clip} \lesssim \log(1 / \eps) + m \log(mdK) + d \log K.
    \end{align*}
    Finally, from Lemma \ref{lem:concat_nn}, \eqref{eq:tilde_phi_mul_cfg} and \eqref{eq:mul_d_clip_eps} we deduce that $\varphi_{mul, d} = \tilde{\varphi}_{mul, d, 4K} \circ \varphi_{clip, d}$ has
    \begin{align*}
        L \lesssim \log d,
        \quad \|W\|_\infty \vee S \lesssim d^2,
        \quad \log B \lesssim (\log(1 / \eps) + (d + m) \log K + m^2 d^2 ) \log d.
    \end{align*}
    The proof is complete.
    
\end{proof}

Comparing our result from Lemma \ref{lem:mul_d_gelu_approx} to that presented in \cite[Corollary 3.8]{de2021approximation}, we observe a difference in the number of parameters: $\mathcal{O}(d^2)$ versus $\mathcal{O}(d \log d)$.
We emphasize that, as a byproduct, we derived a neural network with the number of parameters $\cO(d)$, but we employed clipping and concatenation to satisfy condition $(ii)$, which ultimately increased the parameter count.
However, by adding clipping, we ensure that the approximation and its derivatives are bounded across the entire real line.
Now, we turn to the approximation of monomials, as formulated in the following lemma.

\begin{Co}[approximation of monomials]
    \label{co:monom_approx_gelu}
    Let $\bk \in \Z_+^I$ for some $I \in \N$ such that $|\bk| = d$, where $d \in \N$ with $d \geq 2$ is arbitrary.
    Define $\mathrm{prod}_\bk : (x_1, \dots, x_I) \mapsto \prod_{i = 1}^I x_i^{k_i}$.
    Then, for every $\eps \in (0, 1)$, every $m \in \N$, and every $K \geq 1$, there exists a GELU network $\varphi_{mul, \bk} \in \NN(L, W, S, B)$ such that
    \begin{align*}
        (i) &\quad \|\varphi_{mul, \bk} - \mathrm{prod}_\bk \|_{W^{m, \infty}([-K, K]^I)} \leq \eps, \\
        (ii) &\quad \|\varphi_{mul, \bk}\|_{W^{m, \infty}(\R^I)} \leq \exp\{\cO((m^2 + d)\log(mdK \log(1 / \eps)))\}.
    \end{align*}
    In addition, $\varphi_{mul, \bk}$ has
    \begin{align*}
        L \lesssim \log d,
        \quad \|W\|_\infty \vee S \lesssim (d \vee I)^3,
        \quad \log B \lesssim (\log(1 / \eps) + (d + m) \log K + m^2 d^2 ) \log d + \log I.
    \end{align*}

\end{Co}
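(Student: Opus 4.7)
\textbf{Proof plan for Corollary \ref{co:monom_approx_gelu}.}

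The plan is to reduce the monomial approximation to the multi-number multiplication approximation of Lemma \ref{lem:mul_d_gelu_approx} via a sparse linear input transformation. Define the \emph{replication matrix} $M \in \{0,1\}^{d \times I}$ so that $M x$ lists each coordinate $x_i$ exactly $k_i$ times (with coordinates corresponding to $k_i = 0$ omitted). Since $|\bk| = d$, the matrix $M$ has exactly $d$ nonzero entries and satisfies $\|Mx\|_\infty \leq \|x\|_\infty$ and $\mathrm{prod}_\bk(x) = \mathrm{prod}_d(Mx)$ for all $x \in \R^I$. I would then define
\begin{align*}
    \varphi_{mul, \bk}(x) = \varphi_{mul, d}(M x),
\end{align*}
where $\varphi_{mul, d}$ is the network from Lemma \ref{lem:mul_d_gelu_approx} with smoothness parameter $m$, domain parameter $K$, and accuracy parameter $\eps' \in (0,1)$ to be chosen. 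The composition is implemented by absorbing $M$ into the first weight matrix of $\varphi_{mul, d}$, so no additional GELU layers are introduced.

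The next step is to transfer the approximation guarantees through the replication. For any multi-index $\bl \in \Z_+^I$ with $|\bl| \leq m$, the chain rule gives
\begin{align*}
    \partial_x^\bl\bigl[\varphi_{mul, d}(Mx) - \mathrm{prod}_d(Mx)\bigr]
    = \sum_{\bj} c_{\bl, \bj}\bigl(\partial^\bj [\varphi_{mul, d} - \mathrm{prod}_d]\bigr)(Mx),
\end{align*}
where the sum is over $\bj \in \Z_+^d$ with $|\bj| = |\bl|$ and the coefficients satisfy $\sum_\bj |c_{\bl, \bj}| \leq \prod_{i=1}^I k_i^{l_i} \leq d^{|\bl|} \leq d^m$. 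Since $Mx \in [-K,K]^d$ whenever $x \in [-K,K]^I$, this yields
\begin{align*}
    \|\varphi_{mul, \bk} - \mathrm{prod}_\bk\|_{W^{m, \infty}([-K,K]^I)}
    \leq d^m \, \|\varphi_{mul, d} - \mathrm{prod}_d\|_{W^{m, \infty}([-K,K]^d)}
    \leq d^m \eps'.
\end{align*}
Choosing $\eps' = \eps / d^m$ gives bound $(i)$. The same chain-rule computation applied globally gives $\|\varphi_{mul, \bk}\|_{W^{m, \infty}(\R^I)} \leq d^m \|\varphi_{mul, d}\|_{W^{m, \infty}(\R^d)}$, and the extra $d^m$ factor is absorbed into the exponential $\exp\{\cO((m^2 + d)\log(mdK\log(1/\eps)))\}$ already present in item $(ii)$ of Lemma \ref{lem:mul_d_gelu_approx}, yielding bound $(ii)$.

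Finally, I would read off the configuration. Absorbing $M$ into the first weight matrix of $\varphi_{mul, d}$ does not alter the depth, so $L \lesssim \log d$ as in Lemma \ref{lem:mul_d_gelu_approx}. The input layer has width $I$ while the interior layers inherit width $\lesssim d^2$, yielding $\|W\|_\infty \lesssim (d \vee I)^2 \lesssim (d \vee I)^3$; a similar accounting gives $S \lesssim (d \vee I)^3$. For the weight magnitude, the choice $\eps' = \eps/d^m$ replaces $\log(1/\eps)$ by $\log(1/\eps) + m\log d$ in the bound from Lemma \ref{lem:mul_d_gelu_approx}, which is a lower-order correction, and the $+\log I$ term records the size of the index set over which the replication acts.

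The only non-routine point is the combinatorial bookkeeping in the chain-rule computation: one must verify that the coefficients $c_{\bl, \bj}$ are controlled uniformly by $d^m$ despite the multinomial expansion induced by repeated coordinates. This is the main obstacle in the sense that it forces the $d^m$ penalty, but it is still harmless because the bounds in Lemma \ref{lem:mul_d_gelu_approx} already carry exponential-in-$m$ factors that dominate.
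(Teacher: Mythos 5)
Your proposal is correct and follows essentially the same route as the paper: the paper also sets $\varphi_{mul,\bk} = \varphi_{mul,d}\circ\mathrm{flat}_\bk$ with the same coordinate-replication map (implemented as a linear layer with binary weights) composed with the network of Lemma \ref{lem:mul_d_gelu_approx}, and then tunes the inner accuracy. The only difference is in bookkeeping: you control the linear pre-composition by a direct chain-rule count (factor $d^m$, hence $\eps' = \eps/d^m$), whereas the paper invokes the general composition bound of Lemma \ref{lem:comp_sob_norm}, which produces a factor $\exp\{\cO(m\log(mdK))\}$ and the choice $\log(1/\tilde\eps)\asymp\log(1/\eps)+m\log(mdK)$; both yield the stated bounds.
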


We move the proof of Corollary \ref{co:monom_approx_gelu} to Appendix \ref{sec:co_monom_approx_gelu_proof}.
The following lemma provides an approximation result for multivariate polynomials.

\begin{Lem}[approximation of multivariate polynomials]
    \label{lem:part_sum_monomial}
    Define $f_{\mathcal{A}} : x \mapsto \sum_{\bk \in \mathcal{A} } a_\bk x^\bk$, where $x \in \R^I$ and $\mathcal{A} = \{\bk \in \Z_+^I \; : \; |\bk| \leq d \}$ for some $I, d \in \N$ with $d \geq 2$.
    Also assume that $|a_\bk| \leq 1$ for all $\bk \in \mathcal{A}$.
    Then, for every $\eps \in (0, 1)$, every natural $m \geq 3$ and every $K \geq 1$, there exists a neural network $\varphi_{\mathcal{A}} \in \NN(L, W, S, B)$ such that
    \begin{align*}
        (i) &\quad \|f_\mathcal{A} - \varphi_\mathcal{A}\|_{W^{m, \infty}([-K, K]^I)} \leq \eps, \\
        (ii) &\quad \|\varphi_\mathcal{A}\|_{W^{m, \infty}(\R^I)} \leq \exp\{\cO( (m^2 + md + I)\log(mdKI\log(1 / \eps)) )\}.
    \end{align*}
    In addition, $\varphi_{\mathcal{A}}$ has
    \begin{align*}
        &L \lesssim \log d,
        \quad \|W\|_\infty \vee S \lesssim (d + I)^{3 + d \wedge I} \\
        &\log B \lesssim (\log(1 / \eps) + m^2(d + I)\log(mdKI) + m^2d^2)\log(d + I).
    \end{align*}
    
\end{Lem}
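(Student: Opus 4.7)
The plan is to construct $\varphi_\mathcal{A}$ as a linear combination of per-monomial approximators provided by Corollary \ref{co:monom_approx_gelu}, one for each multi-index $\bk \in \mathcal{A}$. The first step is a combinatorial count: by the standard identity $|\mathcal{A}| = \binom{I+d}{d}$ and the bound $\binom{n}{k} \leq n^k$, we get $|\mathcal{A}| \leq (d+I)^{d \wedge I}$, so $\log|\mathcal{A}| \lesssim (d \wedge I)\log(d+I)$.

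Next, for each $\bk \in \mathcal{A}$ I would invoke Corollary \ref{co:monom_approx_gelu} with smoothness parameter $m$, domain parameter $K$, and accuracy parameter $\eps' = \eps / |\mathcal{A}|$, obtaining $\varphi_{mul, \bk}$ such that
\[
\|\varphi_{mul, \bk} - \mathrm{prod}_\bk\|_{W^{m, \infty}([-K, K]^I)} \leq \eps',
\]
together with the corresponding global Sobolev-norm bound. I would then set
\[
\varphi_\mathcal{A}(x) = \sum_{\bk \in \mathcal{A}} a_\bk \, \varphi_{mul, \bk}(x).
\]
Assertion (i) follows from the triangle inequality, $|a_\bk| \leq 1$, and $|\mathcal{A}| \cdot \eps' = \eps$. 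For assertion (ii), I would use
\[
\|\varphi_\mathcal{A}\|_{W^{m, \infty}(\R^I)} \leq |\mathcal{A}| \cdot \max_{\bk \in \mathcal{A}} \|\varphi_{mul, \bk}\|_{W^{m, \infty}(\R^I)},
\]
substitute the bound from Corollary \ref{co:monom_approx_gelu}(ii) with $\log(1/\eps') \lesssim \log(1/\eps) + (d \wedge I)\log(d+I)$, and absorb the prefactor $|\mathcal{A}| \leq (d+I)^{d \wedge I}$ into the exponent. Since $d \wedge I \leq I$ and $m \geq 3$, the resulting rate collapses to $\exp\{\cO((m^2 + md + I)\log(mdKI \log(1/\eps)))\}$.

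For the network configuration, I would parallelize the $|\mathcal{A}|$ monomial subnetworks (each of depth $\lesssim \log d$ and width/sparsity $\lesssim (d \vee I)^3$ by Corollary \ref{co:monom_approx_gelu}) via Lemma \ref{lem:paral_nn} and append a single affine read-out layer with coefficients $a_\bk$. Parallelization preserves depth and multiplies width and sparsity by $|\mathcal{A}|$, yielding $\|W\|_\infty \vee S \lesssim (d+I)^{d \wedge I}(d \vee I)^3 \lesssim (d+I)^{3 + d \wedge I}$; the final linear layer adds only unit-magnitude weights. The per-branch weight magnitude bound $\log B_\bk \lesssim (\log(1/\eps') + (d+m)\log K + m^2 d^2)\log d + \log I$, combined with $\log(1/\eps') \lesssim \log(1/\eps) + I \log(d+I)$, then yields the stated $\log B \lesssim (\log(1/\eps) + m^2(d+I)\log(mdKI) + m^2 d^2)\log(d+I)$.

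The main obstacle is purely bookkeeping: one must carefully propagate the rescaled accuracy $\eps/|\mathcal{A}|$ through the monomial estimate and verify that the combinatorial factor $(d+I)^{d \wedge I}$ gets absorbed cleanly into both the sparsity count and the exponents in the global Sobolev and weight-magnitude bounds. No new approximation-theoretic construction is required beyond Corollary \ref{co:monom_approx_gelu} and the standard parallelization/concatenation lemmas.
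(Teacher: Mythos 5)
Your overall strategy (per-monomial approximators from Corollary \ref{co:monom_approx_gelu}, accuracy split $\eps/|\mathcal{A}|$, parallelization plus a linear combination with the coefficients $a_\bk$) is the same skeleton as the paper's proof, but it has a genuine gap: you cannot "parallelize the $|\mathcal{A}|$ monomial subnetworks via Lemma \ref{lem:paral_nn} and append a single affine read-out layer," because Lemma \ref{lem:paral_nn} requires all branches to have the \emph{same} depth, while the networks $\varphi_{mul,\bk}$ have depths of order $\log|\bk|$, which varies over $\mathcal{A}$ (and the monomials with $|\bk|\in\{0,1\}$ are single linear layers). With ReLU one could pad shallow branches with exact identity layers, but GELU admits no exact identity, which is precisely why the paper composes each $\varphi_\bk$ with an approximate identity network $\varphi_{id,\bk}$ from Lemma \ref{lem:id_deep_gelu_approx} of depth $1+\max_{\tilde\bk}L_{\tilde\bk}-L_\bk$ before summing. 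Your proposal never addresses this depth synchronization, so the object you sum is not a single network of the claimed class.

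This omission is not just cosmetic: once the identity padding is inserted, the error of each branch must be propagated through the composition via Lemma \ref{lem:comp_sob_norm}, and since $\|\varphi_\bk\|_{W^{m,\infty}([-K,K]^I)}\leq 2d^mK^d$ enters that bound raised to the power $m$, the per-branch error gets inflated by a factor $\exp\{\cO(m^2 d\log(mdK))\}$. Consequently the accuracy parameter must be taken much smaller than your $\eps'=\eps/|\mathcal{A}|$; the paper needs $\log(1/\eps_\bk)\asymp\log(1/\eps)+m^2(d+I)\log(mdKI)$, and this is exactly the origin of the term $m^2(d+I)\log(mdKI)$ in the stated $\log B$ bound (your choice $\log(1/\eps')\lesssim\log(1/\eps)+(d\wedge I)\log(d+I)$ would not produce it, which signals that your bookkeeping does not correspond to a construction that actually compiles into one network). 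The global bound $(ii)$ likewise has to account for the Sobolev norms of the identity branches on $\R$, not only for $\max_\bk\|\varphi_{mul,\bk}\|_{W^{m,\infty}(\R^I)}$. To repair the proof you should introduce the padded branches $\varphi_{id,\bk}\circ\varphi_\bk$, bound $\|(\varphi_{id,\bk}-\id)\circ\varphi_\bk\|_{W^{m,\infty}([-K,K]^I)}$ with Lemma \ref{lem:comp_sob_norm}, and only then choose the common accuracy and apply Lemma \ref{lem:paral_nn}.
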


\begin{proof}
    Corollary \ref{co:monom_approx_gelu} implies that for each $\bk \in \mathcal{A}$ with $|\bk| \geq 2$ there exists $\varphi_\bk$ satisfying
    \begin{align}
        \label{eq:exp_phi_alpha_approx}
        \|\varphi_\bk - \mathrm{prod}_\bk \|_{W^{m, \infty}([-K, K]^I)}
        \leq \eps_\bk,
    \end{align}
    where $\eps_\bk \in (0, 1)$ is accuracy parameter.
    Moreover,
    \begin{align}
        \label{eq:phi_alpha_bound}
        \|\varphi_{\bk}\|_{W^{m, \infty}(\R^I)} \leq \exp\{\cO((m^2 + d)\log(mdK \log(1 / \eps_\bk)))\}
    \end{align}
    and $\varphi_\bk \in \NN(L_\bk, W_\bk, S_\bk, B_\bk)$ with
    \begin{align}
        \label{eq:phi_alpha_cfg}
        \notag
        &L_\bk \lesssim \log d,
        \quad \|W_\bk\|_\infty \vee S_\bk \lesssim (d \vee I)^3, \\
        &\log B_\bk \lesssim (\log(1 / \eps_\bk) + (d + m) \log K + m^2 d^2 ) \log d + \log I.
    \end{align}
    As for $|\bk| \in \{0, 1\}$, the approximation is exact, since it is implemented with a single linear layer.
    In order to build the final approximation, we have to implement a summation of GELU networks with different depth.
    For this purpose, we add auxiliary identity layers.
    Let $\varphi_{id, \bk}$ be an approximation of identity function from Lemma \ref{lem:id_deep_gelu_approx} formulated with the accuracy parameter $\eps_\bk$, $L_{id, \bk} = 1 + \max_{\tilde{\bk} \in \mathcal{A}}L_{\tilde{\bk}} - L_\bk$ number of layers and the scale parameter $\|\varphi_\bk\|_{W^{0, \infty}([-K, K]^I)}$.
    Hence, the triangle inequality implies that
    \begin{align}
        \label{eq:phi_id_circ_phi_alpha_acc_aux}
        \notag
        &\| \varphi_{id, \bk} \circ \varphi_\bk - \mathrm{prod}_{\bk} \|_{W^{m, \infty}([-K, K]^I)} \\
        &\quad \leq \|\varphi_\bk - \mathrm{prod}_\bk\|_{W^{m, \infty}([-K, K]^I)}
        + \|(\varphi_{id, \bk} - \id) \circ \varphi_\bk\|_{W^{m, \infty}([-K, K]^I)}.
    \end{align}
    Therefore, Lemma \ref{lem:comp_sob_norm} suggest that
    \begin{align*}
        &\|(\varphi_{id, \bk} - \id) \circ \varphi_\bk\|_{W^{m, \infty}([-K, K]^I)} \\
        &\quad \leq \exp\{\cO(m\log m)\} \|\varphi_{id, \bk} - \id\|_{W^{m, \infty}(\Omega_2)}(\|\varphi_\bk\|^m_{W^{m, \infty}([-K, K]^I)} \vee 1),
    \end{align*}
    where $\Omega_2 = [-\|\varphi_\bk\|_{W^{0, \infty}([-K, K]^I)}, \|\varphi_\bk\|_{W^{0, \infty}([-K, K]^I)} ]$.
    As suggested by \eqref{eq:exp_phi_alpha_approx}, we have that
    \begin{align}
        \label{eq:phi_alpha_K_bound}
        \|\varphi_\bk\|_{W^{0, \infty}([-K, K]^I)}
        \leq \|\varphi_\bk\|_{W^{m, \infty}([-K, K]^I)}
        \leq \|\mathrm{prod}_{\bk}\|_{W^{m, \infty}([-K, K]^I)} + \eps_\bk
        \leq 2 d^m K^d.
    \end{align}
    Hence, it holds that
    \begin{align*}
        \|(\varphi_{id, \bk} - \id) \circ \varphi_\bk\|_{W^{m, \infty}([-K, K]^I)}
        \leq \exp\{\cO(m^2 d\log(mdK))\}\eps_\bk.
    \end{align*}
    Then, from \eqref{eq:exp_phi_alpha_approx} and \eqref{eq:phi_id_circ_phi_alpha_acc_aux} we deduce that
    \begin{align}
        \label{eq:phi_id_circ_phi_alpha_acc}
        \| \varphi_{id, \bk} \circ \varphi_\bk - \mathrm{prod}_{\bk} \|_{W^{m, \infty}([-K, K]^I)}
        \leq \exp\{\cO(m^2 d\log(mdK))\}\eps_\bk, \quad \text{for all } \bk \in \mathcal{A}.
    \end{align}
    We now specify the configuration of each $\varphi_{id, \bk}$, as suggested by \eqref{eq:phi_alpha_K_bound} and Lemma \ref{lem:id_deep_gelu_approx}:
    \begin{align*}
        L(\varphi_{id, \bk}) &\lesssim \log d,
        \quad \|W\|_\infty(\varphi_{id, \bk}) \lesssim 1, \\
        S(\varphi_{id, \bk}) &\lesssim \log d, \quad 
        \log B(\varphi_{id, \bk}) \lesssim (m + \log d)\log m + \log(1 / \eps_\bk) + m^2 \log d + d m \log K.
    \end{align*}
    Now \eqref{eq:phi_alpha_cfg} and Lemma \ref{lem:concat_nn} imply that the composition $\varphi_{\bk} \circ \varphi_{id, \bk}$ has
    \begin{align*}
        &L(\varphi_{\bk} \circ \varphi_{id, \bk}) \lesssim \log d,
        \quad \|W\|_\infty(\varphi_{\bk} \circ \varphi_{id, \bk}) \vee S(\varphi_{\bk} \circ \varphi_{id, \bk}) \lesssim (d \vee I)^3, \\
        &\log B(\varphi_{\bk} \circ \varphi_{id, \bk}) \lesssim (\log(1 / \eps_\bk) + d m \log K + m^2 d^2 ) \log d + \log I.
    \end{align*}
    Now setting $\eps_\bk = \eps / |\mathcal{A}|$ and applying parallelization argument from Lemma \ref{lem:paral_nn}, for
    \begin{align*}
        \varphi_{\mathcal{A}}(x) = \sum_{\bk \in \mathcal{A}} a_\bk \cdot (\varphi_{id, \bk} \circ \varphi_\bk)(x),
        \quad x\in \R^I,
    \end{align*}
    we obtain that $\varphi_{\mathcal{A}}$ has
    \begin{align}
        \label{eq:phi_A_cfg_aux}
        \notag
        &L(\varphi_{\mathcal{A}}) \lesssim \log d,
        \quad \|W\|_\infty(\varphi_{\mathcal{A}}) \vee S(\varphi_{\mathcal{A}}) \lesssim |\mathcal{A}| (d \vee I)^3 \\
        &\log B(\varphi_{\mathcal{A}}) \lesssim \log (|\mathcal{A}|) + (\log(1 / \eps_\bk) + dm \log K + m^2 d^2 ) \log d + \log I.
    \end{align}
    Since
    \begin{align}
        \label{eq:A_set_bound}
        |\mathcal{A}|
        \leq \binom{d + I}{d}
        \leq (d + I)^{d \wedge I}
        = \exp\{(d \wedge I)\log (d + I)\},
    \end{align}
    then \eqref{eq:phi_id_circ_phi_alpha_acc} yields
    \begin{align*}
        \|f_{\mathcal{A}} - \varphi_{\mathcal{A}}\|_{W^{m, \infty}([-K, K]^I)}
        \leq \exp\{\cO( m^2 (d + I)\log(mdKI) )\} \eps_\bk.
    \end{align*}
    Thus, setting
    \begin{align}
        \label{eq:eps_alpha_def}
        \log(1 / \eps_\bk) = \log(1 / \eps) + m^2 (d + I)\log(mdKI)
    \end{align}
    ensures that
    \begin{align*}
        \|f_{\mathcal{A}} - \varphi_{\mathcal{A}}\|_{W^{m, \infty}([-K, K]^I)}
        \leq \eps.
    \end{align*}
    Moreover, the configuration described in \eqref{eq:phi_A_cfg_aux} is now
    \begin{align*}
        &L(\varphi_{\mathcal{A}}) \lesssim \log d,
        \quad \|W\|_\infty(\varphi_{\mathcal{A}}) \vee S(\varphi_{\mathcal{A}}) \lesssim (d + I)^{3 + d \wedge I} \\
        &\log B(\varphi_{\mathcal{A}}) \lesssim (\log(1 / \eps) + m^2(d + I)\log(mdKI) + m^2d^2)\log(d + I).
    \end{align*}
    From \eqref{eq:phi_alpha_bound}, \eqref{eq:A_set_bound} and Lemma \ref{lem:comp_sob_norm} we deduce that
    \begin{align*}
        \|\varphi_\mathcal{A}\|_{W^{m, \infty}(\R^I)}
        &\leq |\mathcal{A}| \max_{\bk \in \mathcal{A}}\|\varphi_{id, \bk} \circ \varphi_\bk\|_{W^{m, \infty}(\R^I)} \\
        &\leq \exp\{\cO( (d \wedge I)\log (d + I) + m\log m)\}\max_{\bk \in \mathcal{A}} \|\varphi_{id, \bk}\|_{W^{m, \infty}(\varphi_\bk(\R^I))}
        (1 \vee \|\varphi_\bk\|_{W^{m, \infty}(\R^I)}^m).
    \end{align*}
    As suggested by \eqref{eq:phi_alpha_bound} and \eqref{eq:eps_alpha_def}, we have that
    \begin{align*}
        \|\varphi_\bk\|_{W^{m, \infty}(\R^I)}
        \leq \exp\{\cO(m^2 + d)\log(mdKI\log(1 / \eps))\}.
    \end{align*}
    From Lemma \ref{lem:id_deep_gelu_approx} and \eqref{eq:phi_alpha_K_bound} we find that
    \begin{align*}
        \|\varphi_{id, \bk}\|_{W^{m, \infty}(\varphi_\bk(\R^I))}
        \leq \|\varphi_{id, \bk}\|_{W^{m, \infty}(\R)}
        \leq \exp\{ \cO(m\log(d m\log(1 / \eps_\bk)) + d \log K) \}.
    \end{align*}
    Therefore, due to \eqref{eq:eps_alpha_def}, it holds that
    \begin{align*}
        \|\varphi_{\mathcal{A}}\|_{W^{m, \infty}(\R^I)}
        \leq \exp\{\cO( (m^2 + md + I)\log(mdKI\log(1 / \eps)) )\}.
    \end{align*}
    The proof is complete.
    
\end{proof}

\subsection{Approximation of the exponent and the division}
\label{subsec:exp_div_approx}

Now, we address the approximation of nonlinear operations, including exponentiation and division.
The following lemma provides quantitative bounds for the exponential function approximation.

\begin{Lem}[approximation of the exponential function]
    \label{lem:min_exp_gelu_approx}
    Define $f_{exp} : x \mapsto e^{-x}$, and let $m \in \N$ be arbitrary.
    Then, for any $\eps \in (0, 1)$ and $0 \leq A \leq 1$, there exists a neural network $\varphi_{exp} \in \NN(L, W, S, B)$ such that
    \begin{align*}
        (i) &\quad \|\varphi_{exp} - f_{exp}\|_{W^{m, \infty}([-A, +\infty))} \leq \eps, \\
        (ii) &\quad \|\varphi_{exp}\|_{W^{m, \infty}(\R)} \leq \exp\{\cO(m^2\log(m\log(1 / \eps))  )\} .
    \end{align*}
    Furthermore, 
    \begin{align*}
        L \lesssim \log m + \log\log(1 / \eps),
        \quad \|W\|_\infty \vee S \lesssim m^{12} \log^4(1 / \eps), 
        \quad \log B \lesssim m^{11} \log^3(1 / \eps).
    \end{align*}
\end{Lem}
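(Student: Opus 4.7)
The plan is to approximate $e^{-x}$ by a clipped truncated Taylor series. Set $K \asymp \log(1/\eps)$ and $N \asymp m + \log(1/\eps)$, and let $P_N(x) = \sum_{k=0}^{N} (-x)^k / k!$ be the degree-$N$ Maclaurin polynomial of $e^{-x}$. I would define $\varphi_{exp} = \varphi_{\mathcal{A}} \circ \varphi_{clip}$, where $\varphi_{clip}$ is an asymmetric shift of the clipping network from Lemma \ref{lem:clip_gelu_approx} that maps $\R$ essentially into $[-A, K]$, and $\varphi_{\mathcal{A}}$ is the one-variable polynomial approximator from Lemma \ref{lem:part_sum_monomial} applied to $P_N$ with coefficients $a_k = (-1)^k/k! \in [-1,1]$, using an internal accuracy parameter $\eps'$ chosen to absorb chain-rule amplification.

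For the error bound on $[-A, \infty)$, I would split into $[-A, K]$ and $[K, +\infty)$. On $[-A, K]$, using the identity $\partial^j P_N(x) = (-1)^j P_{N-j}(x)$ and Stirling's bound, the Taylor remainder satisfies $|\partial^j(e^{-x}-P_N(x))| \leq (eK/(N-j+1))^{N-j+1} \leq \eps$ for $0 \leq j \leq m$ when $N \geq 2eK + m$. Combining this with the polynomial-network accuracy from Lemma \ref{lem:part_sum_monomial} and the clipping error from Lemma \ref{lem:clip_gelu_approx}(i), and invoking the composition estimate from Lemma \ref{lem:comp_sob_norm}, yields a total $W^{m,\infty}$-error $\lesssim \eps$ on $[-A, K]$. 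On $[K, +\infty)$, the target satisfies $\|e^{-x}\|_{W^{m,\infty}([K,\infty))} \leq e^{-K} \lesssim \eps$, while the approximator is close to $P_N(K + 1/2) \approx e^{-K-1/2} \lesssim \eps$; the tail decay of the clipping derivatives from Lemma \ref{lem:clip_gelu_approx}(ii) then propagates through the chain rule to yield the desired pointwise bound on all derivatives.

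The crucial observation for condition (ii) is that since $A \leq 1$, the range $\varphi_{clip}(\R) \subseteq [-A-1, K+1]$ consists of points where $|e^{-x}| \leq e^{A+1}$ is bounded by a constant, so $\|P_N\|_{W^{m,\infty}(\varphi_{clip}(\R))} \lesssim 1$ and consequently $\|\varphi_{\mathcal{A}}\|_{W^{m,\infty}(\varphi_{clip}(\R))} \lesssim 1 + \eps' \lesssim 1$. Applying Lemma \ref{lem:comp_sob_norm} then gives $\|\varphi_{exp}\|_{W^{m,\infty}(\R)} \leq \exp(\cO(m \log m)) \cdot \cO(1) \cdot \|\varphi_{clip}\|_{W^{m,\infty}(\R)}^m \leq \exp(\cO(m^2 \log(m\log(1/\eps))))$ after invoking Lemma \ref{lem:clip_gelu_approx}(iii) together with $\log(2A) = \cO(\log\log(1/\eps))$ for the shifted clipping. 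The network configuration follows from Lemma \ref{lem:part_sum_monomial} with $d = N \asymp m + \log(1/\eps)$, $I = 1$, and domain radius $\asymp \log(1/\eps)$: one obtains $L \lesssim \log N \asymp \log m + \log\log(1/\eps)$, $\|W\|_\infty \vee S \lesssim (N+1)^4 \lesssim (m + \log(1/\eps))^4$, and the stated $\log B$ bound after accounting for the polynomial amplification.

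The main obstacle is achieving the claimed global bound on $W^{m,\infty}(\R)$, which is considerably tighter than what one would obtain by plugging the global bound of $\varphi_{\mathcal{A}}$ from Lemma \ref{lem:part_sum_monomial} directly into the chain rule (that route produces an extra $m\log(1/\eps) \cdot \log(m\log(1/\eps))$ term that breaks the claimed bound). The trick of restricting $\varphi_{\mathcal{A}}$ through the chain rule to $\varphi_{clip}(\R)$ and exploiting the pointwise boundedness of $e^{-x}$ on $[-A-1, K+1]$ (crucially using $A \leq 1$) is what brings the exponent down to $m^2\log(m\log(1/\eps))$. A secondary bookkeeping task is choosing $\eps'$ small enough to absorb the $\|\varphi_{clip}\|^m$ amplification in the chain rule, which propagates through to the claimed $\log B \lesssim m^{11}\log^3(1/\eps)$.
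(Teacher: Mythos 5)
Your proposal is correct and follows essentially the same route as the paper's proof: a truncated Maclaurin polynomial of degree $\asymp m + \log(1/\eps)$ approximated via Lemma \ref{lem:part_sum_monomial}, composed with a shifted clipping network from Lemma \ref{lem:clip_gelu_approx}, with the error split at $K \asymp \log(1/\eps)$ and the tail handled through the clipping range. In particular, your key observation for (ii) — bounding the polynomial approximator's Sobolev norm only on the clipped range, where $e^{-x}$ is $\cO(1)$ thanks to $A \leq 1$, before applying the composition lemma — is exactly the mechanism (cf.\ the paper's bound $\|\tilde{\varphi}_{exp}\|_{W^{m,\infty}([-4A,4K])} \leq 2e^{4A}$) used in the paper.
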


\begin{proof}
    For some $r \in \N$ with $r \geq 3$ and $K \geq 2$, which will be determined later, consider the approximation accuracy of a Tailor expansion
    $f_{r}(x) = \sum_{i = 0}^{r - 1}\frac{(-1)^i x^i}{i!}$
    \begin{align*}
        \|f_{exp} - f_r \|_{W^{m, \infty}([-4A, 4K])}
        \leq \max_{0 \leq m' \leq m}\left( \frac{e^{4A} (4K)^{r - m'}}{(r - m')!}\right)
        \leq e^{4A} \max_{0 \leq m' \leq m} \left(\frac{4e (K \vee A)}{r - m'}\right)^{r - m'},
    \end{align*}
    where in the last inequality we used Stirling's approximation for the factorial.
    Thus, setting
    \begin{align*}
        r = \ceil{m + 4K e^2 + 4A + \log(2 / \eps_0)} \geq 3,
    \end{align*}
    where $\eps_0 \in (0, 1)$ and will be optimized further in the proof.
    Next, we obtain that
    \begin{align}
        \label{eq:f_exp_f_r_approx}
        \|f_{exp} - f_r \|_{W^{m, \infty}([-4A, 4K])} \leq \eps_0 / 2.
    \end{align}
    Now Lemma \ref{lem:part_sum_monomial} suggests that there exists a GELU network $\tilde{\varphi}_{exp}$ formulated with the accuracy parameter $\eps_0 / 2$, the scaling parameter $4K$, the smoothness parameter $m + 1$, and the maximum power of the monomial $r - 1$ such that
    \begin{align*}
        \|f_r - \tilde{\varphi}_{exp}\|_{W^{m, \infty}([-4A, 4K])}
        \leq \|f_r - \tilde{\varphi}_{exp}\|_{W^{m, \infty}([-4K, 4K])}
        \leq \eps_0 / 2,
    \end{align*}
    which together with \eqref{eq:f_exp_f_r_approx} immediately implies that
    \begin{align}
        \label{eq:f_exp_tilde_phi_exp_acc}
        \|f_{exp} - \tilde{\varphi}_{exp}\|_{W^{m, \infty}([-4A, 4K])}
        \leq \|f_{exp} - f_r\|_{W^{m, \infty}([-4A, 4K])} + \|f_r - \tilde{\varphi}_{exp}\|_{W^{m, \infty}([-4A, 4K])}
        \leq \eps_0.
    \end{align}
    In addition, \eqref{eq:f_exp_tilde_phi_exp_acc} suggests that
    \begin{align}
        \label{eq:tilde_phi_exp_bound}
        \|\tilde{\varphi}_{exp}\|_{W^{m, \infty}([-4A, 4K])}
        &\leq \|\tilde{\varphi}_{exp} - f_{exp} \|_{W^{m, \infty}([-4A, 4K])} + \|f_{exp}\|_{W^{m, \infty}([-4A, 4K])}
        \leq 2e^{4A}.
    \end{align}
    Next, substituting the choice of $r$ into the configuration of $\tilde{\varphi}_{exp} \in \NN(\tilde{L}, \tilde{W}, \tilde{S}, \tilde{B})$ outlined in Lemma \ref{lem:part_sum_monomial} yields
    \begin{align}
        \label{eq:tilde_phi_exp_cfg}
        \notag
        &\tilde{L} \lesssim \log r \lesssim \log(m + K + \log(1 / \eps_0)),
        \quad \|\tilde{W}\|_\infty \vee \tilde{S} \lesssim r^4 \lesssim m^4 + K^4 + \log^4(1 / \eps_0), \\
        &\log \tilde{B} \lesssim (\log(1 / \eps_0) + m^2 r\log(mrK) + m^2 r^2) \log r
        \lesssim m^2(m^3 + K^3 + \log^3(1 / \eps_0)).
    \end{align}
    Let $\varphi_{clip}$ be an approximation of clipping operation from Lemma \ref{lem:clip_gelu_approx} with the accuracy parameter $\eps_{clip} \in (0, 1)$ and the scale parameter $(A + K) / 2 \geq 1$.
    Then we have that
    \begin{align*}
        \varphi_{clip, -A, K}(x) = \varphi_{clip}(x + (A - K) / 2) + (K - A) / 2
    \end{align*}
    has the following properties:
    \begin{align*}
        (i) &\quad \|\varphi_{clip, -A, K} - \id\|_{W^{m, \infty}([-A, K])}
        \leq \|\varphi_{clip} - \id\|_{W^{m, \infty}([-(A + K) / 2, (A + K) / 2])} \leq \eps_{clip}, \\
        (ii) &\quad -5/2 - A \leq \varphi_{clip, -A, K}(x) \leq K + 5 / 2, \quad \text{for all } x \in \R \\
        (iii) &\quad \|\varphi_{clip, -A, K} - K - 1/2\|_{W^{0, \infty}([K, +\infty))} \leq \eps_{clip} + 1, \\
        (iv) &\quad \|\varphi_{clip, -A, K}\|_{W^{m, \infty}(\R)} \leq \exp\{\cO(m \log(m\log(1 / \eps_{clip})) + \log(A + K))\}.
    \end{align*}
    Hence, from properties $(i)$, $(ii)$ and Lemma \ref{lem:comp_sob_norm} we obtain for $\varphi_{exp} = \tilde{\varphi}_{exp} \circ \varphi_{clip, -A, K}$ that
    \begin{align*}
        \|\tilde{\varphi}_{exp} \circ \varphi_{clip, -A, K} - \tilde{\varphi}_{exp}\|_{W^{m, \infty}([-A, K])}
        \leq \exp\{\cO(m\log(mK))\}\|\tilde{\varphi}_{exp}\|_{W^{m + 1, \infty}([-4A, 4K])} \eps_{clip}.
    \end{align*}
    From \eqref{eq:tilde_phi_exp_bound} we find that 
    \begin{align*}
        \|\tilde{\varphi}_{exp} \circ \varphi_{clip, -A, K} - \tilde{\varphi}_{exp}\|_{W^{m, \infty}([-A, K])}
        \leq \exp\{\cO(m\log(mK))\} \eps_{clip}.
    \end{align*}
    Thus, \eqref{eq:f_exp_tilde_phi_exp_acc} implies that for
    \begin{align}
        \label{eq:phi_exp_eps_clip}
        \log(1 / \eps_{clip}) \asymp \log(1 / \eps_0) + m\log mK
    \end{align}
    we have
    \begin{align}
        \label{eq:phi_exp+f_exp_A_K}
        \|\varphi_{exp} - f_{exp}\|_{W^{m, \infty}([-A, K])} \leq 2\eps_0.
    \end{align}
    Property $(iii)$ together with Lemma \ref{lem:comp_sob_norm} suggests that
    \begin{align*}
        &\| \tilde{\varphi}_{exp} \circ \varphi_{clip, -A, K} \|_{W^{m, \infty}([K, +\infty))} \\
        &\quad \leq \exp\{\cO(m\log m)\} \|\tilde{\varphi}_{exp}\|_{W^{m, \infty}([K - 3/2, K + 5/2])} (1 \vee \|\varphi_{clip, -A, K}\|_{W^{m, \infty}(\R)}^m).
    \end{align*}
    From \eqref{eq:f_exp_tilde_phi_exp_acc}, \eqref{eq:phi_exp_eps_clip} and property $(iv)$ we find that
    \begin{align*}
        \| \tilde{\varphi}_{exp} \circ \varphi_{clip, -A, K} \|_{W^{m, \infty}([K, +\infty))}
        \leq \exp\{\cO(m^2\log(mK\log(1 / \eps_0)))\}(\eps_0 + e^{-K}).
    \end{align*}
    Thus, for $K = 2 \vee \log(1 / \eps_0)$ we have that
    \begin{align*}
        \| \tilde{\varphi}_{exp} \circ \varphi_{clip, -A, K} \|_{W^{m, \infty}([K, +\infty))}
        \leq \exp\{\cO(m^2\log(m\log(1 / \eps_0)))\}\eps_0.
    \end{align*}
    This and the triangle inequality imply that
    \begin{align*}
        \| \tilde{\varphi}_{exp} \circ \varphi_{clip, -A, K} - f_{exp} \|_{W^{m, \infty}([K, +\infty))}
        \leq \exp\{\cO(m^2\log(m\log(1 / \eps_0)))\}\eps_0.
    \end{align*}
    Therefore, from \eqref{eq:phi_exp+f_exp_A_K} we deduce that
    \begin{align*}
        \|\varphi_{exp} - f_{exp}\|_{W^{m, \infty}([-A, +\infty))} \leq \exp\{\cO(m^2\log(m\log(1 / \eps_0)))\}\eps_0 .
    \end{align*}
    Hence, setting
    \begin{align}
        \label{eq:phi_exp_eps_0_def}
        \log(1 / \eps_0) \asymp m^2 \log m + m^3\log(1 / \eps)
    \end{align}
    ensures that
    \begin{align*}
        \|\varphi_{exp} - f_{exp}\|_{W^{m, \infty}([-A, +\infty))} \leq \eps.
    \end{align*}
    Moreover, properties $(ii)$, $(iv)$ together with Lemma \ref{lem:comp_sob_norm}, \eqref{eq:tilde_phi_exp_bound}, \eqref{eq:phi_exp_eps_clip} and \eqref{eq:phi_exp_eps_0_def} yields
    \begin{align*}
        \|\tilde{\varphi}_{exp} \circ \varphi_{clip, -A, K}\|_{W^{m, \infty}(\R)}
        &\leq \|\tilde{\varphi}_{exp}\|_{W^{m, \infty}([-4A, 4K])} \exp\{\cO(m^2\log(m \log(1 / \eps_{clip})) + m\log(2K) )\} \\
        &\leq \exp\{\cO(m^2\log(m\log(1 / \eps))  )\} .
    \end{align*}
    From \eqref{eq:phi_exp_eps_clip}, \eqref{eq:phi_exp_eps_0_def} and Lemma \ref{lem:clip_gelu_approx} we find that the configuration of $\varphi_{clip} \in \NN(L_{clip}, W_{clip}, S_{clip}, B_{clip})$ is
    \begin{align*}
        L_{clip} \vee \|W_{clip}\|_\infty \vee S_{clip} \lesssim 1,
        \quad \log B_{clip} \lesssim \log(mK / \eps_{clip})
        \lesssim m^2\log m + m^3\log(1 / \eps).
    \end{align*}        
    Therefore, \eqref{eq:tilde_phi_exp_cfg}, \eqref{eq:phi_exp_eps_0_def} and Lemma \ref{lem:concat_nn} suggest that the configuration of $\varphi_{exp}$ is
    \begin{align*}
        L \lesssim \log m + \log\log(1 / \eps),
        \quad \|W\|_\infty \vee S \lesssim m^{12} \log^4(1 / \eps), 
        \quad \log B \lesssim m^{11} \log^3(1 / \eps).
    \end{align*}
    This completes the proof.
        
\end{proof}

Comparing our result presented in Lemma \ref{lem:min_exp_gelu_approx} with \cite[Corollary F.3]{yakovlev2025generalization}, we observe a less favorable configuration scaling.
Specifically, the number of parameters in Lemma \ref{lem:min_exp_gelu_approx} scales as $\cO(\log^4(1 / \eps))$, compared to $\cO(\log^2(1 / \eps))$.
Nevertheless, we extend the approximation guarantees to high-order Sobolev norms.

Now, we focus on the approximation of the division operation, beginning by approximating the reciprocal function in a straightforward manner, as suggested by the following lemma.

\begin{Lem}[naive approximation of the reciprocal function]
    \label{lem:div_approx_naive}
    Let $0 < a \leq b \leq 2$ such that $b / a \geq 5/4$ and $a < 1$.
    Define $f_{rec} : x \mapsto 1/x$, where $x > 0$.
    Then, for every $\eps \in (0, 1)$ and every $m \in \N$ such that $m \geq 3$, there exists $\varphi_{rec} \in \NN(L, W, S, B)$ satisfying
    \begin{align*}
        (i) &\quad \|\varphi_{rec} - f_{rec}\|_{W^{m, \infty}([a, b])} \leq \eps, \\
        (ii) &\quad \|\varphi_{rec}\|_{W^{m, \infty}(\R)} \leq  \exp\{\cO(m^2\log(m N / a) + m^2\log\log(1 / \eps) + m N)\} .
    \end{align*}
    Moreover, $\varphi_{rec}$ has
    \begin{align*}
        &L \lesssim \log((mb / a)\log(1 / \eps)),
        \quad \|W\|_\infty \vee S \lesssim (mb/a)^4 \log^4(m/\eps a), \\
        &\log B \lesssim (m^4 b^2 / a^2)\log^2(1 / \eps a)\log^2( (mb / a)\log(1 / \eps a)).
    \end{align*}
\end{Lem}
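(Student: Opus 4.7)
The plan is to reduce $f_{rec}$ on $[a,b]$ to a polynomial via a geometric expansion, realise that polynomial as a GELU network via Lemma~\ref{lem:part_sum_monomial}, and use the clipping construction of Lemma~\ref{lem:clip_gelu_approx} to control the Sobolev norm on all of $\R$.

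\textbf{Step 1 (polynomial approximation of $1/x$).} Expand around $x=b$:
\[
    \frac{1}{x} \;=\; \frac{1}{b}\cdot\frac{1}{1-(b-x)/b} \;=\; \sum_{k=0}^{\infty}(-1)^k\frac{(x-b)^k}{b^{k+1}},
\]
and truncate after $r$ terms to obtain $f_r(x)=\sum_{k=0}^{r-1}(-1)^k(x-b)^k/b^{k+1}$. For $x\in[a,b]$ we have $|x-b|/b\le 1-a/b$, so differentiating term-wise and summing the resulting tail yields
\[
    \bigl|\partial^\ell(f_{rec}-f_r)(x)\bigr| \;\lesssim\; \frac{r^\ell}{a}\bigl(1-a/b\bigr)^{r-\ell}\cdot\frac{1}{a^\ell}
    \quad\text{for all }0\le\ell\le m,
\]
using $\binom{k}{\ell}\ell!\le k^\ell$ and the convergence factor $1/(a/b)$ from the geometric sum. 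Combined with $\log(1/(1-a/b))\ge a/b$, the choice $r\asymp (mb/a)\log(1/(\eps a))$ makes $\|f_{rec}-f_r\|_{W^{m,\infty}([a,b])}\le\eps/3$.

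\textbf{Step 2 (network realisation of $f_r$).} Rescale by letting $y=x/b$, so that $f_r(x)=b^{-1}\tilde f_r(y-1)$ with $\tilde f_r(u)=\sum_{k=0}^{r-1}(-1)^k u^k$ having all coefficients bounded by one. Apply Lemma~\ref{lem:part_sum_monomial} in the single variable $u$ with $I=1$, $d=r-1$, smoothness $m$, and scale $K\asymp b$ to produce a GELU network $\tilde\varphi$ satisfying $\|\tilde\varphi-f_r\|_{W^{m,\infty}([a,b])}\le\eps/3$; the $1/b$ factor is absorbed by a linear output layer. With $I=1$ the configuration of Lemma~\ref{lem:part_sum_monomial} simplifies to $L\lesssim\log r$, $\|W\|_\infty\vee S\lesssim r^4$, and $\log B\lesssim m^2 r^2\log r + \log(1/\eps)\log r$, which substituting $r\asymp(mb/a)\log(1/\eps a)$ reproduces the stated scaling.

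\textbf{Step 3 (clipping for a global bound).} Precompose $\tilde\varphi$ with an approximate clipping map $\varphi_{clip}$ from Lemma~\ref{lem:clip_gelu_approx} of scale $b$ and accuracy $\eps_{clip}$. Setting $\varphi_{rec}=\tilde\varphi\circ\varphi_{clip}$ and applying Lemma~\ref{lem:comp_sob_norm}, the error on $[a,b]$ is at most
\[
    \|\tilde\varphi-f_r\|_{W^{m,\infty}([a,b])} + \|f_r-f_{rec}\|_{W^{m,\infty}([a,b])} + \exp\{\cO(m\log m)\}\,\|\tilde\varphi\|_{W^{m+1,\infty}}\,\eps_{clip},
\]
so choosing $\log(1/\eps_{clip})\asymp\log(1/\eps)+m^2\log r+m\log(mb/a)$ gives total error at most $\eps$. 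On the whole line, $\varphi_{clip}(\R)$ is contained in a compact neighbourhood of $[-b-5/2,b+5/2]$, so by Lemma~\ref{lem:comp_sob_norm} the norm $\|\varphi_{rec}\|_{W^{m,\infty}(\R)}$ reduces to $\|\tilde\varphi\|_{W^{m,\infty}}$ on that compact set multiplied by powers of $\|\varphi_{clip}\|_{W^{m,\infty}(\R)}$. The bound of Lemma~\ref{lem:part_sum_monomial}(ii) applied to $\tilde\varphi$ combined with Lemma~\ref{lem:clip_gelu_approx}(iii) produces the exponential bound claimed in (ii).

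\textbf{Step 4 (configuration).} The final configuration follows by composing the networks of Steps 2 and 3 via Lemma~\ref{lem:concat_nn}. The main technical obstacle is Step~1: the factor $r^\ell$ arising from differentiating $(x-b)^k$ must be balanced against $(1-a/b)^{r-\ell}$ and the outer $1/a$ poles, which is precisely what forces $r$ to be of order $(mb/a)\log(1/(\eps a))$ and is the source of the $(b/a)^4$ term in $\|W\|_\infty\vee S$ and the $b^2/a^2$ term in $\log B$. All subsequent bounds are routine bookkeeping once this polynomial degree is fixed.
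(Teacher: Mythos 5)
Your Steps 1, 2 and 4 follow essentially the same route as the paper: truncate the geometric series $\frac1x=\frac1b\sum_i(1-x/b)^i$ at degree $r\asymp m+\frac{b}{a}\log(1/\eps')$, realise the truncation through Lemma \ref{lem:part_sum_monomial} composed with the affine map $x\mapsto 1-x/b$, and assemble the configuration with Lemma \ref{lem:concat_nn}. The problem is Step 3, and it is a genuine gap, not bookkeeping: it is exactly the step on which claim $(ii)$ hinges.

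You clip the input at scale $b$, so the range of your clipping map is (a neighbourhood of) $[-b-5/2,\,b+5/2]$, which contains $0$ and negative values. On that set the only control you have over $\tilde\varphi$ is the global bound of Lemma \ref{lem:part_sum_monomial}$(ii)$, which with $d=r-1\asymp (mb/a)\log(1/(\eps a))$ reads $\exp\{\cO((m^2+mr)\log(mr\log(1/\eps)))\}$, i.e.\ it carries $mr\asymp (m^2b/a)\log(1/(\eps a))$ in the exponent. (The true polynomial behaves no better: at $x\le 0$ the argument $1-x/b$ is $\ge 1$, so $\sum_{i<r}(1-x/b)^i$ is of order $2^{r}$ there.) This has $1/a$ and $\log(1/\eps)$ in the exponent, whereas the lemma claims $\|\varphi_{rec}\|_{W^{m,\infty}(\R)}\le\exp\{\cO(m^2\log(mN/a)+m^2\log\log(1/\eps)+mN)\}$, with only $N\asymp\log(1/a)$ and $\log\log(1/\eps)$ appearing. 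So your Step 3 cannot produce $(ii)$. The paper avoids this by shifting and rescaling the clipping network so that its range is $[a/2,\,2b]$ — a set bounded away from $0$ on which the truncated series is still an $\eps/2$-approximation of $1/x$ (after replacing $(a,b,m)$ by $(a/2,2b,m+1)$) — so that the outer network's Sobolev norm on the clip's range is controlled by $\|f_{rec}\|_{W^{m,\infty}([a/2,2b])}\le\exp\{\cO(m\log m+mN)\}$, and only the clip's own norm (which is polylogarithmic in $1/\eps$) enters through the composition lemma. A secondary point: Lemma \ref{lem:clip_gelu_approx} requires the scale parameter to be at least $1$, while your scale $b$ may be as small as order $2^{-N}$; the paper's construction uses the scale $4b/a-4\ge 1$ followed by an affine rescaling, which sidesteps this. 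Claim $(i)$ of your construction is fine; it is the global bound $(ii)$ that fails without redirecting the clip into $[a/2,2b]$.
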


The proof of Lemma \ref{lem:div_approx_naive} is deferred to Appendix \ref{sec:lem_div_approx_naive_proof}.
Overall, the proof is similar to that of \cite[Lemma A.8]{yakovlev2025generalization}, but extends it to Sobolev norms.
The following result constructs a strong approximator by leveraging the weak approximators derived in Lemma \ref{lem:div_approx_naive}, drawing inspiration from \cite[Lemma A.4]{yakovlev2025generalization}.

\begin{Lem}[reciprocal function approximation]
    \label{lem:recip_gelu_approx}
    Define $f_{rec} : x \mapsto 1 / x$ for any $x > 0$.
    Let also $a_0 = 2^{-N}$ for some $N \in \N$ such that $N \geq 3$.
    Then, for every $\eps \in (0, 1)$ and every $m \in \N$ with $m \geq 3$, there exists a GELU network $\varphi_{rec} \in \NN(L, W, S, B)$ such that
    \begin{align*}
        (i) &\quad
        \|\varphi_{rec} - f_{rec}\|_{W^{m, \infty}([a_0, 1])} \leq  \eps, \\
        (ii) &\quad \|\varphi_{rec}\|_{W^{m, \infty}(\R)} \leq \exp\{\cO(m^3 N + m^3\log(m\log(1 / \eps)))\} .
    \end{align*}
    In addition, the network has 
    \begin{align*}
        &L \lesssim \log(mN\log(1 / \eps)),
        \quad \|W\|_\infty \vee S \lesssim m^8 N(N^4 + m^4\log^4(1 / \eps)), \\
        &\log B \lesssim  m^8(N^4 + m^4\log^4(1 / \eps)).
    \end{align*}
\end{Lem}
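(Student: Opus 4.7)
The plan is to localize the naive reciprocal approximator from Lemma \ref{lem:div_approx_naive} across $N$ dyadic scales and glue the local approximators together using the non‑uniform partition of unity from Lemma \ref{lem:pou_gelu_approx}. With $a_i = 2^{-N+i}$, take $\{\psi_i\}_{i=1}^N$ to be a partition of unity with accuracy parameter $\eps_1 \in (0,1)$ and smoothness $m$, and for each $i$ invoke Lemma \ref{lem:div_approx_naive} on the interval $[a_{i-2} \vee a_0,\, a_{i+1} \wedge 1]$, whose ratio $b/a$ is bounded by $8$ and whose left endpoint is $\asymp 2^{-N+i}$, to obtain a weak reciprocal approximator $\varphi_{rec,i}$ with accuracy parameter $\eps_2 \in (0,1)$. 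Then set
\[
    \varphi_{rec}(x) \;=\; \sum_{i=1}^N \varphi_{mul, i}\bigl(\psi_i(x),\, \varphi_{rec, i}(x)\bigr),
\]
where $\varphi_{mul, i}$ is the two‑input product approximator from Corollary \ref{co:multi_approx_gelu} with accuracy parameter $\eps_3 \in (0,1)$ and smoothness $m$.

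For item $(i)$, use the partition identity $\sum_i \psi_i \equiv 1$ from Lemma \ref{lem:pou_gelu_approx}$(i)$ to decompose
\[
    \varphi_{rec} - f_{rec} \;=\; \sum_{i=1}^N \bigl(\varphi_{mul, i}(\psi_i, \varphi_{rec, i}) - \psi_i \varphi_{rec, i}\bigr) \;+\; \sum_{i=1}^N \psi_i\,(\varphi_{rec, i} - f_{rec}).
\]
The first sum is bounded term‑by‑term by Corollary \ref{co:multi_approx_gelu} applied with $C = \|\psi_i\|_{L^\infty(\R)} \vee \|\varphi_{rec, i}\|_{L^\infty(\R)}$, which is controlled by Lemma \ref{lem:pou_gelu_approx}$(ii)$ and Lemma \ref{lem:div_approx_naive}$(ii)$. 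For the second sum, split each summand into the essential support of $\psi_i$, namely $(a_{i-2}, a_{i+1})$, where Lemma \ref{lem:div_approx_naive}$(i)$ gives $\|\varphi_{rec, i} - f_{rec}\|_{W^{m,\infty}} \leq \eps_2$ and $\psi_i$ is globally bounded, and its complement, where the partition tail bound from Lemma \ref{lem:pou_gelu_approx}$(iii)$ gives $\|\psi_i\|_{W^{m,\infty}} \leq \eps_1$ and Lemma \ref{lem:prod_sob_norm} absorbs the (possibly large) Sobolev norm of $\varphi_{rec, i} - f_{rec}$ on that complement.

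For item $(ii)$, each composition $\varphi_{mul, i}\circ(\psi_i, \varphi_{rec, i})$ has globally bounded Sobolev norm by Lemma \ref{lem:comp_sob_norm}, since $(\psi_i, \varphi_{rec, i})(\R)$ lies in a compact box whose side length is controlled by Lemma \ref{lem:pou_gelu_approx}$(ii)$ and Lemma \ref{lem:div_approx_naive}$(ii)$, and the individual factors have globally bounded derivatives. Summing over the $N$ branches yields the stated $\exp\{\cO(m^3 N + m^3 \log(m\log(1/\eps)))\}$ bound, with the cube in $m$ arising from chain‑rule factors compounding an exponential in $m^2 N$. The network configuration follows by parallelizing the $N$ branches through Lemma \ref{lem:paral_nn} (giving the $N$ factor on $\|W\|_\infty$ and $S$) and composing with the inner approximators through Lemma \ref{lem:concat_nn}, after which $\eps_1, \eps_2, \eps_3$ are fixed polylogarithmically in $\eps, m, N$ to balance all error contributions.

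The main obstacle is the second sum on the complement of the essential support of $\psi_i$: the naive reciprocal $\varphi_{rec, i}$ carries no accuracy guarantee outside $[a_{i-2}, a_{i+1}]$ and its global Sobolev norm grows like $\exp\{\cO(m^2 N)\}$ by Lemma \ref{lem:div_approx_naive}$(ii)$. The only leverage is the tail accuracy $\eps_1$ of the partition, so one must take $\log(1/\eps_1) \asymp m^2 N + \log(1/\eps)$ for the product estimate in Lemma \ref{lem:prod_sob_norm} to kill the tail contribution. This trade‑off, together with the analogous tuning of $\eps_2$ and $\eps_3$ against the multiplication error $C^3\eps_3$ with $C$ exponential in $m^2 N$, is precisely what produces the $m^3 N$ exponent in the global Sobolev bound and the $m^8$ factor in the weight magnitude.
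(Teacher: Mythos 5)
Your construction and error decomposition are essentially the paper's: local naive reciprocal approximators from Lemma \ref{lem:div_approx_naive} on dyadic windows, glued through the non-uniform partition of unity of Lemma \ref{lem:pou_gelu_approx}, with the products realized by the network of Corollary \ref{co:multi_approx_gelu}, tails killed by tuning the partition accuracy against the $\exp\{\cO(m^2 N)\}$ global norm of the local approximators, and the global bound $(ii)$ obtained from Lemma \ref{lem:comp_sob_norm}. The calibration $\log(1/\eps_1)\asymp m^2N+\log(1/\eps)$ and the origin of the $m^3N$ exponent also match the paper's Steps 2--5.

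There is, however, one genuine gap: your formula $\varphi_{rec}=\sum_{i}\varphi_{mul,i}(\psi_i,\varphi_{rec,i})$ cannot be realized directly as a network in the class $\NN(L,W,S,B)$ of the form \eqref{eq:feed_forward_nn_def}, because the two arguments fed into each $\varphi_{mul,i}$ are produced by networks of very different depths: $\psi_i$ from Lemma \ref{lem:pou_gelu_approx} has $L=2$, while $\varphi_{rec,i}$ has depth $\asymp\log((mb/a)\log(1/\eps))$, and both Lemma \ref{lem:paral_nn} and the architecture itself (no skip connections) require equal depths before stacking. The paper resolves this by padding each branch with the deep identity approximators of Lemma \ref{lem:id_deep_gelu_approx}, i.e.\ it works with $\varphi_i=\varphi_{id,i}\circ\varphi_{rec,i}$ and $\psi_i=\psi_{id,i}\circ\psi_{pou,i}$. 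This fix is not free for your argument: after padding, the exact identity $\sum_i\psi_i\equiv 1$ on which your decomposition rests no longer holds, so an additional error term of the form $\bigl\|f_{rec}\bigl(1-\sum_{i}\psi_i\bigr)\bigr\|_{W^{m,\infty}([a_0,1])}$ appears, and since $\|f_{rec}\|_{W^{m,\infty}([a_0,1])}=\exp\{\cO(mN+m\log m)\}$ this term forces a separate calibration of the identity-approximation accuracies $\eps_{id,\psi},\eps_{id,\varphi}$ (the paper's choices \eqref{eq:eps_id_def} and \eqref{eq:phi_rec_eps_id_psi_def}) and contributes to the final weight-magnitude bound. Your sketch accounts for neither the padding networks in the configuration count nor this extra term in the accuracy analysis, so as written the error decomposition is incomplete; with the identity-padding inserted and the extra term controlled exactly as in the paper, the rest of your argument goes through.
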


\begin{proof}
    The proof proceeds in multiple steps.
    
    \noindent
    \textbf{Step 1: introducing basic approximators.}\quad
    Let $N \in \N$ with $N \geq 3$ and let $a_i = 2^{-N + i}$ for each $i \in \{-1, 1, \dots, N + 1\}$.
    Let also $\varphi_{rec}$ be in the following form:
    \begin{align}
        \label{eq:phi_rec_pou_def}
        \varphi_{rec}(x) = \sum_{i=1}^N q(\varphi_i, \psi_i), \quad \varphi_i = \varphi_{id, i} \circ \varphi_{rec, i},
        \; \psi_i = \psi_{id, i} \circ \psi_{pou, i},
    \end{align}    
    where $q$ is a GELU network from Corollary \ref{co:multi_approx_gelu}, which approximates multiplication with accuracy parameter $\eps_{mul}$.
    Networks $\{\psi_{pou, i}\}_{i=1}^N$ form a partition of unity according to Lemma \ref{lem:pou_gelu_approx} with accuracy $\eps_{pou}$.
    In addition, the networks $\{\varphi_{rec, i}\}_{i=1}^N$ serve as local approximators of the reciprocal function from Lemma \ref{lem:div_approx_naive} with the accuracy $\eps_{rec} / 2$ and the parameters $a = a_{i - 2}$ and $b = a_{i + 1} \wedge 1$.
    Hence, we have that
    \begin{align}
        \label{eq:phi_rec_i_basic_acc}
        \max_{1 \leq i \leq N}\|\varphi_{rec, i} - f_{rec}\|_{W^{m, \infty}([a_{i - 2}, a_{i + 1} \wedge 1])} \leq \eps_{rec} / 2.
    \end{align}
    This implies that
    \begin{align}
        \label{eq:phi_rec_i_seg_bound}
        \|\varphi_{rec, i}\|_{W^{m, \infty}([a_{i - 2}, a_{i + 1} \wedge 1])}
        \leq \eps_{rec} + \exp\{\cO(m\log(m / a_0))\}
        \leq \exp\{\cO(mN + m\log m)\}
    \end{align}
    and also
    \begin{align}
        \label{eq:phi_rec_i_seg_bound_l_inf}
        \|\varphi_{rec, i}\|_{W^{0, \infty}([a_{i - 2}, a_{i + 1}\wedge 1])}
        \leq \eps_{rec} + 1 / a_{-1} \leq 4 / a_0.
    \end{align}
    The networks $\{\varphi_{id, i}\}_{i=1}^N$ approximate the identity operation (see Lemma \ref{lem:id_deep_gelu_approx}) with the accuracy parameter $\eps_{id, \varphi} \in (0, 1)$, the scale parameter $4 / a_0$.
    Similarly, the networks $\{\psi_{id, i}\}_{i=1}^N$ aim to approximate the identity operation with the accuracy parameter $\eps_{id, \psi} \in (0, 1)$ and the scale parameter $\|\psi_{pou, i}\|_{W^{m, \infty}(\R)}$.
    The number of layers of the identity networks will be specified later in the proof.
    We also put the smoothness parameter $m \in \N$ with $m \geq 3$ for all the networks.
    First, we derive the approximation accuracy of $\varphi_i$.
    Note that according to Lemma \ref{lem:div_approx_naive}, for all $i \in \{1, \dots, N\}$, we have $\varphi_{rec, i} \in \NN(L_{rec}, W_{rec}, S_{rec}, B_{rec})$ with
    \begin{align}
        \label{eq:phi_rec_i_cfg}
        \notag
        &L_{rec} \lesssim \log(m \log(1 / \eps_{rec})),
        \quad \|W_{rec}\| \vee S_{rec} \lesssim m^4 \log^4(m / \eps_{rec} a_0)
        \lesssim m^4 N^4 + m^4\log^4(m / \eps_{rec}), \\
        &\log B_{rec} \lesssim m^4 \log^4(m / \eps_{rec} a_0)
        \lesssim m^4 N^4 + m^4\log^4(m / \eps_{rec}).
    \end{align}
    Therefore, each $\varphi_{id, i}$ has at most $L_{id, \varphi} \lesssim \log(m\log(1 / \eps_{rec}))$ the number of layers.
    Now Lemma \ref{lem:id_deep_gelu_approx} in conjunction with \eqref{eq:phi_rec_i_seg_bound}, \eqref{eq:phi_rec_i_seg_bound_l_inf} and Lemma \ref{lem:comp_sob_norm} imply that
    \begin{align*}
        \max_{1 \leq i \leq N}\|\varphi_{id, i} \circ \varphi_{rec, i} - \varphi_{rec, i}\|_{W^{m, \infty}([a_{i - 2}, a_{i + 1}\wedge 1])}
        \leq \exp\{\cO(m^2 N + m^2\log m)\} \eps_{id, \varphi}.
    \end{align*}
    Hence setting
    \begin{align}
        \label{eq:eps_id_def}
        \log(1 / \eps_{id, \varphi}) \asymp \log(1 / \eps_{rec}) + m^2 N + m^2 \log m
    \end{align}
    guarantees that
    \begin{align*}
        \max_{1 \leq i \leq N}\|\varphi_{id, i} \circ \varphi_{rec, i} - \varphi_{rec, i}\|_{W^{m, \infty}([a_{i - 2}, a_{i + 1}\wedge 1])} \leq \eps_{rec} / 2.
    \end{align*}
    This and \eqref{eq:phi_rec_i_basic_acc} imply that
    \begin{align}
        \label{eq:phi_i_f_rec_acc}
        \max_{1 \leq i \leq N}\|\varphi_i - f_{rec}\|_{W^{m, \infty}([a_{i - 2}, a_{i + 1} \wedge 1])}
        \leq \eps_{rec}.
    \end{align}
    Furthermore, for $\varphi_{id, i} \in \NN(L_{id, \varphi}, W_{id, \varphi}, S_{id, \varphi}, B_{id, \varphi})$ we have from \eqref{eq:phi_rec_i_cfg}, \eqref{eq:eps_id_def} and Lemma \ref{lem:id_deep_gelu_approx} that
    \begin{align}
        \label{eq:phi_rec_phi_id_cfg}
        \notag
        &L_{id, \varphi} \lesssim \log(m\log(1 / \eps_{rec})), \quad
        \|W_{id, \varphi}\|_\infty \lesssim 1, \\
        &S_{id, \varphi} \lesssim \log(m\log(1 / \eps_{rec})),
        \quad \log B_{id, \varphi} \lesssim m^2 N + m^2 \log m + \log(1 / \eps_{rec}) \log m.
    \end{align}    
    Therefore, from Lemma \ref{lem:concat_nn} we find that $\varphi_{i} = \varphi_{rec, i} \circ \varphi_{id, i} \in \NN(L_{rec}, W_{rec}, S_{rec}, B_{rec})$, where the parameters of the neural network class are presented in \eqref{eq:phi_rec_i_cfg}.
    Next, we deduce from \eqref{eq:eps_id_def}, Lemma \ref{lem:id_deep_gelu_approx} and Lemma \ref{lem:div_approx_naive} that
    \begin{align}
        \label{eq:phi_i_R_bound}
        \notag
        \|\varphi_{rec, i} \circ \varphi_{id, i}\|_{W^{m, \infty}(\R)}
        &\leq \exp\{\cO(m \log (m + \|\varphi_{id, i}\|_{W^{m, \infty}(\R)}) )\} \|\varphi_{rec, i}\|_{W^{m, \infty}(\R)} \\
        &\leq \exp\{\cO(m^2 N + m^2\log(mN\log(1 / \eps_{rec})))\} .
    \end{align}
    Following this, consider the approximation properties of $\psi_{id, i} \circ \psi_{pou, i}$.
    From Lemma \ref{lem:pou_gelu_approx} and Lemma \ref{lem:comp_sob_norm} we find that
    \begin{align*}
        \max_{1 \leq i \leq N}\|\psi_{id, i} \circ \psi_{pou, i}\|_{W^{m, \infty}([a_0, 1] \setminus [a_{i - 2}, a_{i + 1} \wedge 1])}
        &\leq \exp\{\cO(m\log m)\}\|\psi_{id, i}\|_{W^{m, \infty}([-\eps_{pou}, \eps_{pou}])}(1 \vee \eps_{pou}^m) \\
        &\leq \exp\{\cO(m\log m)\}(\eps_{pou} + \eps_{id, \psi})
    \end{align*}
    and also
    \begin{align*}
        \|\psi_{id, i} \circ \psi_{pou, i} - \psi_{pou, i} \|_{W^{m, \infty}(\R)}
        &\leq \exp\{\cO(m \log m)\} \eps_{id, \psi} (1 \vee \|\psi_{pou, i}\|_{W^{m, \infty}(\R)}^m) \\
        &\leq \exp\{\cO(m^2N + m^2\log(m\log(1 / \eps_{pou})))\}\eps_{id, \psi} .
    \end{align*}
    Therefore, setting
    \begin{align}
        \label{eq:phi_rec_eps_id_psi_def}
        \log(1 / \eps_{id, \psi}) \asymp m^2\log(1 / \eps_{pou}) + m^2 N + m^2 \log m
    \end{align}
    ensures that
    \begin{align}
        \label{eq:psi_i_psi_pou_acc}
        \max_{1 \leq i \leq N}\|\psi_{id, i} \circ \psi_{pou, i} - \psi_{pou, i} \|_{W^{m, \infty}(\R)}
        \leq \eps_{pou}.
    \end{align}
    and
    \begin{align}
        \label{eq:psi_i_tail_bound}
        \max_{1 \leq i \leq N}\|\psi_{id, i} \circ \psi_{pou, i}\|_{W^{m, \infty}([a_0, 1] \setminus [a_{i - 2}, a_{i + 1} \wedge 1])}
        \leq \exp\{\cO(m\log m)\} \eps_{pou}.
    \end{align}
    From \eqref{eq:psi_i_psi_pou_acc} and Lemma \ref{lem:pou_gelu_approx} we also deduce that
    \begin{align}
        \label{eq:phi_rec_psi_i_bound_R}
        \max_{1 \leq i \leq N}\|\psi_i\|_{W^{m, \infty}(\R)}
        \leq \eps_{pou} + \max_{1 \leq i \leq N}\|\psi_{pou, i}\|_{W^{m, \infty}(\R)}
        \leq \exp\{\cO(mN + m\log(m\log(1 / \eps_{pou})))\}.
    \end{align}    
    Now derive the approximation error bound for $\varphi_{rec}$ defined in \eqref{eq:phi_rec_pou_def}.
    The triangle inequality suggests that
    \begin{align}
        \label{eq:phi_rec_acc_two_terms}
        \notag
        &\|\varphi_{rec} - f_{rec}\|_{W^{m, \infty}([a_0, 1])}
        \leq \underbrace{ \left\|f_{rec}\left(1 - \sum_{i = 1}^N\psi_i\right)\right\|_{W^{m, \infty}([a_0, 1])} }_{(A)} \\
        &\quad  + \underbrace{ \left\|\sum_{i=1}^N (f_{rec} - \varphi_i) \psi_i\right\|_{W^{m, \infty}([a_0, 1])} }_{(B)}
        + \underbrace{ \left\|\sum_{i=1}^N \varphi_i \cdot \psi_i - q(\varphi_i, \psi_i)\right\|_{W^{m, \infty}([a_0, 1])} }_{(C)} .
    \end{align}
    
    \noindent
    \textbf{Step 2: bounding term $(A)$.}\quad 
    From Lemma \ref{lem:prod_sob_norm} we deduce that
    \begin{align*}
        \left\|f_{rec}\left(1 - \sum_{i = 1}^N\psi_i\right)\right\|_{W^{m, \infty}([a_0, 1])}
        \leq 2^m \|f_{rec}\|_{W^{m, \infty}([a_0, 1])} \left\|1 - \sum_{i=1}^N\psi_i\right\|_{W^{m, \infty}([a_0, 1])}.
    \end{align*}
    Since $\sum_{i=1}^N \psi_{pou, i}(x) = 1$ for all $x \in \R$ due to Lemma \ref{lem:pou_gelu_approx} and $\|f_{rec}\|_{W^{m, \infty}([a_0, 1])} \leq \exp\{\cO(m\log(m / a_0))\}$, we obtain from \eqref{eq:psi_i_psi_pou_acc} that
    \begin{align}
        \label{eq:phi_rec_A_term_aux}
        \notag
        \left\|f_{rec}\left(1 - \sum_{i = 1}^N\psi_i\right)\right\|_{W^{m, \infty}([a_0, 1])}
        &\leq \exp\{\cO(m\log(m / a_0))\} N \eps_{pou} \\
        &\leq \exp\{\cO(mN + m\log m)\} \eps_{pou}.
    \end{align}

    \noindent
    \textbf{Step 3: bounding term $(B)$.}\quad
    The triangle inequality suggests that
    \begin{align}
        \label{eq:f_rec_min_sum_phi_psi}
        \left\|\sum_{i=1}^N (f_{rec} - \varphi_i) \psi_i\right\|_{W^{m, \infty}([a_0, 1])}
        \leq N\max_{1 \leq i \leq N} \|(f_{rec} - \varphi_i)\psi_i\|_{W^{m, \infty}([a_0, 1])} .
    \end{align}
    Moreover, for each $1 \leq i \leq N$ we have that
    \begin{align}
        \label{eq:f_rec_min_phi_psi_a_0_one}
        \notag
        &\|(f_{rec} - \varphi_i) \psi_i\|_{W^{m, \infty}([a_0, 1])} \\
        &\quad = \|(f_{rec} - \varphi_i) \psi_i\|_{W^{m, \infty}([a_{i-2}, a_{i+1}\wedge 1])} \vee \|(f_{rec} - \varphi_i) \psi_i\|_{W^{m, \infty}([a_0, 1] \setminus [a_{i-2}, a_{i+1} \wedge 1])}.
    \end{align}
    Now we analyze each term separately.
    As for the first term, we obtain from \eqref{eq:phi_i_f_rec_acc}, \eqref{eq:phi_rec_psi_i_bound_R} and Lemma \ref{lem:prod_sob_norm} that
    \begin{align}
        \label{eq:f_rec_min_phi_psi_i_aux}
        \notag
        \max_{1 \leq i \leq N}\|(f_{rec} - \varphi_i) \psi_i\|_{W^{m, \infty}([a_{i-2}, a_{i+1}\wedge 1])}
        &\leq 2^m \eps_{rec} \|\psi_i\|_{W^{m, \infty}(\R)} \\
        &\leq \exp\{\cO(mN + m\log(m\log(1 / \eps_{pou})))\} \eps_{rec} .
    \end{align}
    As for the second term, \eqref{eq:phi_i_R_bound} together with \eqref{eq:psi_i_tail_bound}, the fact that $\|f_{rec}\|_{W^{m, \infty}([a_0, 1])} \leq \exp\{\cO(m\log m + m N)\}$ and Lemma \ref{lem:prod_sob_norm} yield that for all $1 \leq i \leq N$
    \begin{align*}
        \|(f_{rec} - \varphi_i) \psi_i\|_{W^{m, \infty}([a_0, 1] \setminus [a_{i-2}, a_{i+1} \wedge 1])}
        &\leq \exp\{\cO(m\log m)\} (\|\varphi_i\|_{W^{m, \infty}(\R)} + \|f_{rec}\|_{W^{m, \infty}([a_0, 1])}) \eps_{pou} \\
        &\leq \exp\{\cO(m^2 N + m^2\log(mN\log(1 / \eps_{rec})))\} \eps_{pou} .
    \end{align*}
    Therefore, setting
    \begin{align}
        \label{eq:phi_rec_eps_pou_def}
        \log(1 / \eps_{pou}) \asymp \log(1 / \eps_{rec}) + m^2 N + m^2\log(mN \log(1 / \eps_{rec}))
    \end{align}
    guarantees
    \begin{align}
        \label{eq:f_rec_min_phi_i_psi_out}
        \max_{1 \leq i \leq N} \|(f_{rec} - \varphi_i) \psi_i\|_{W^{m, \infty}([a_0, 1] \setminus [a_{i-2}, a_{i+1} \wedge 1])}
        \leq \frac{\eps_{rec}}{3N}.
    \end{align}    
    Moreover, from \eqref{eq:f_rec_min_phi_psi_i_aux} we deduce that
    \begin{align*}
        \max_{1 \leq i \leq N}\|(f_{rec} - \varphi_i) \psi_i\|_{W^{m, \infty}([a_{i-2}, a_{i+1}\wedge 1])}
        \leq \exp\{\cO(mN + m\log(m\log(1 / \eps_{rec})))\} \eps_{rec} .
    \end{align*}
    Thus, setting
    \begin{align}
        \label{eq:phi_rec_eps_rec_def}
        \log(1 / \eps_{rec}) \asymp mN + m^2\log(1 / \eps)
    \end{align}
    ensures that
    \begin{align}
        \label{eq:f_rec_min_phi_i_psi_in}
        \max_{1 \leq i \leq N}\|(f_{rec} - \varphi_i) \psi_i\|_{W^{m, \infty}([a_{i-2}, a_{i+1}\wedge 1])}
        \leq \frac{\eps}{3 N}.
    \end{align}
    The combination of \eqref{eq:f_rec_min_phi_psi_a_0_one}, \eqref{eq:f_rec_min_phi_i_psi_out} and \eqref{eq:f_rec_min_phi_i_psi_in} imply that
    \begin{align*}
        \max_{1 \leq i \leq N} \|(f_{rec} - \varphi_i) \psi_i\|_{W^{m, \infty}([a_0, 1])}
        \leq \frac{\eps}{3N}.
    \end{align*}
    Therefore, we deduce from \eqref{eq:f_rec_min_sum_phi_psi} that
    \begin{align}
        \label{eq:f_rec_min_exact_prod_acc}
        \left\|\sum_{i=1}^N (f_{rec} - \varphi_i) \psi_i\right\|_{W^{m, \infty}([a_0, 1])}
        \leq \eps / 3.
    \end{align}
    
    \noindent
    \textbf{Step 4: bounding term $(C)$.}\quad
    We first note from \eqref{eq:phi_i_R_bound} and \eqref{eq:phi_rec_eps_rec_def} that
    \begin{align}
        \label{eq:phi_rec_phi_i_R}
        \max_{1 \leq i \leq N}\|\varphi_i\|_{W^{m, \infty}(\R)}
        \leq \exp\{\cO(m^2 N + m^2\log(m\log(1 / \eps)))\} .
    \end{align}
    In addition, from \eqref{eq:phi_rec_psi_i_bound_R}, \eqref{eq:phi_rec_eps_pou_def} and \eqref{eq:phi_rec_eps_rec_def} we find that
    \begin{align}
        \label{eq:phi_rec_psi_i_R}
        \notag
        \max_{1 \leq i \leq N}\|\psi_i\|_{W^{m, \infty}(\R)}
        &\leq \exp\{\cO(mN + m\log(m\log(1 / \eps_{pou})))\} \\
        &\leq \exp\{\cO(mN + m\log(m\log(1 / \eps)))\}.
    \end{align}
    These observations together with Corollary \ref{co:multi_approx_gelu} and Lemma \ref{lem:comp_sob_norm} imply that
    \begin{align*}
        \left\|\sum_{i=1}^N \varphi_i \cdot \psi_i - q(\varphi_i, \psi_i)\right\|_{W^{m, \infty}([a_0, 1])}
        &\leq \sum_{i=1}^N \exp\{\cO(m \log (m + \|\varphi_i\|_{W^{m, \infty}(\R)} + \|\psi_i\|_{W^{m, \infty}(\R)}) )\}\eps_{mul} \\
        &\leq \exp\{\cO(m^3 N + m^3 \log(m\log(1 / \eps)))\} \eps_{mul} .
    \end{align*}
    Thus, setting
    \begin{align}
        \label{eq:phi_rec_eps_mul_def}
        \log(1 / \eps_{mul}) \asymp m^3 N + m^3 \log(m / \eps)
    \end{align}
    guarantees that
    \begin{align}
        \label{eq:phi_rec_C_term_acc}
        \left\|\sum_{i=1}^N \varphi_i \cdot \psi_i - q(\varphi_i, \psi_i)\right\|_{W^{m, \infty}([a_0, 1])}
        \leq \eps / 3.
    \end{align}    
    
    \noindent
    \textbf{Step 5: combining $(A)$, $(B)$ and $(C)$ together.}\quad
    From \eqref{eq:phi_rec_A_term_aux}, \eqref{eq:phi_rec_eps_pou_def} and \eqref{eq:phi_rec_eps_rec_def} we deduce that the term $(A)$ is evaluated as
    \begin{align*}
        \left\|f_{rec}\left(1 - \sum_{i = 1}^N\psi_i\right)\right\|_{W^{m, \infty}([a_0, 1])}
        \leq \eps / 3.
    \end{align*}    
    Therefore, combining this bound with \eqref{eq:phi_rec_acc_two_terms}, \eqref{eq:f_rec_min_exact_prod_acc} and \eqref{eq:phi_rec_C_term_acc} yields
    \begin{align*}
        \|\varphi_{rec} - f_{rec}\|_{W^{m, \infty}([a_0, 1])} \leq \eps.
    \end{align*}
    In addition, from Lemma \ref{lem:comp_sob_norm} we deduce that
    \begin{align*}
        \|\varphi_{rec}\|_{W^{m, \infty}(\R)}
        &\leq N \max_{1 \leq i \leq N}\|q(\phi_i, \psi_i)\|_{W^{m, \infty}(\R)} \\
        &\leq N \exp\{\cO(m\log m)\}\max_{1 \leq i \leq N}\|q\|_{W^{m, \infty}(\varphi_i(\R) \times \psi_i(\R))}(1 \vee \|\varphi_i\|_{W^{m, \infty}(\R)}^m \vee \|\psi_i\|_{W^{m, \infty}(\R)}^m) .
    \end{align*}    
    Corollary \ref{co:multi_approx_gelu} together with \eqref{eq:phi_rec_phi_i_R} and \eqref{eq:phi_rec_psi_i_R}
    \begin{align*}
        \|\varphi_{rec}\|_{W^{m, \infty}(\R)}
        &\leq \max_{1 \leq i \leq N} N \exp\{\cO( m\log(m + \|\varphi_i\|_{W^{m, \infty}(\R)} + \|\psi_i\|_{W^{m, \infty}(\R)}) )\} \\
        &\leq \exp\{\cO(m^3 N + m^3\log(m\log(1 / \eps)))\} .
    \end{align*}
    
    \noindent
    \textbf{Step 6: deriving the configuration of $\varphi_{rec}$.}\quad
    First, from \eqref{eq:phi_rec_i_cfg} and \eqref{eq:phi_rec_eps_rec_def} it follows that for each $i \in \{1, \dots, N\}$, we have $\varphi_i \in \NN(L_{rec}, W_{rec}, S_{rec}, B_{rec})$ with
    \begin{align*}
        &L_{rec} \lesssim \log(mN) + \log\log(1 / \eps),
        \quad \|W_{rec}\|_\infty \vee S_{rec} \lesssim m^8(N^4 + m^4\log^4(1 / \eps)), \\
        &\log B_{rec} \lesssim m^8(N^4 + m^4\log^4(1 / \eps)) .
    \end{align*}    
    Second, from \eqref{eq:phi_rec_eps_id_psi_def}, \eqref{eq:phi_rec_eps_pou_def} and \eqref{eq:phi_rec_eps_rec_def} we deduce that 
    \begin{align}
        \label{eq:phi_rec_eps_aux_eps}
        \log(1 / \eps_{pou}) \lesssim m^2(N + \log(m / \eps)), \quad
        \log(1 / \eps_{id, \psi}) \lesssim m^4(N + \log(m / \eps)).
    \end{align}
    Hence, \eqref{eq:phi_rec_phi_id_cfg}, \eqref{eq:phi_rec_eps_rec_def} and Lemma \ref{lem:id_deep_gelu_approx} imply that for each $1 \leq i \leq N$ we have $\psi_{id, i} \in \NN(L_{id, \psi}, W_{id, \psi}, S_{id, \psi}, B_{id, \psi})$ with
    \begin{align*}
        &L_{id, \psi} \vee S_{id, \psi} \lesssim L_{id, \varphi} \lesssim \log(mN\log(1 / \eps)),
        \quad \|W_{id, \psi}\|_\infty \lesssim 1, \\
        &\log B_{id, \psi} \lesssim (m + L_{id, \psi})\log m + \log(1 / \eps_{id, \psi}) + m\log\left(\max_{1 \leq i \leq N}\|\psi_{pou, i}\|_{W^{m, \infty}(\R)} \right).
    \end{align*}
    Form \eqref{eq:phi_rec_eps_aux_eps} and Lemma \ref{lem:pou_gelu_approx} it follows that
    \begin{align*}
        \log B_{id, \psi} \lesssim m^4(N + \log(m / \eps)) .
    \end{align*}    
    Lemma \ref{lem:pou_gelu_approx} together with \eqref{eq:phi_rec_eps_aux_eps} imply that for all $1 \leq i \leq N$ it holds that $\psi_{pou, i} \in \NN(L_{pou}, W_{pou}, S_{pou}, B_{pou})$ with
    \begin{align*}
        L_{pou} \vee \|W_{pou}\| \vee S_{pou} \lesssim 1,
        \quad \log B_{pou} \lesssim \log(1 / \eps_{pou}) + mN + m\log m
        \lesssim m^2(N + \log(m / \eps)) .
    \end{align*}
    Therefore, Lemma \ref{lem:concat_nn} yields that $\psi_i = \psi_{id, i} \circ \psi_{pou, i} \in \NN(L_{id, \psi}, W_{id, \psi}, S_{id, \psi}, B_{id, \psi})$.
    In addition, from \eqref{eq:phi_rec_eps_mul_def} and Corollary \ref{co:multi_approx_gelu} we find that $q \in \NN(L_{mul}, W_{mul}, S_{mul}, B_{mul})$ with
    \begin{align*}
        L_{mul} \vee \|W_{mul}\| \vee S_{mul} \lesssim 1,
        \quad \log B_{mul} \lesssim \log m + \log(1 / \eps_{mul})
        \lesssim m^3(N + \log(m / \eps)) .
    \end{align*}
    Thus, applying Lemma \ref{lem:concat_nn} and Lemma \ref{lem:paral_nn}, we obtain that
    \begin{align*}
        &L \lesssim L_{mul} + L_{id, \psi}
        \lesssim \log(mN\log(1 / \eps)),
        \quad \|W\|_\infty \vee S \lesssim N \|W_{rec}\|_{\infty}
        \lesssim m^8 N(N^4 + m^4\log^4(1 / \eps)), \\
        &\log B \lesssim \log B_{rec} + \log B_{id, \psi} + \log \|W_{rec}\|_\infty + \log N
        \lesssim  m^8(N^4 + m^4\log^4(1 / \eps)).
    \end{align*}
    The proof is complete.
     
\end{proof}

By comparing our Lemma \ref{lem:recip_gelu_approx} to \cite[Lemma A.4]{yakovlev2025generalization}, we see that the parameter count for high-order Sobolev approximation is $\cO(N^5 + N \log^4(1 / \eps))$, slightly exceeding their bound of $\cO(N^4 + N \log^3(1 / \eps))$.
Nevertheless, we generalize the approximation capabilities to high-order Sobolev norms.

Finally, we present a result on the division approximation, utilizing the reciprocal-based approach we have developed.

\begin{Lem}[division operation approximation]
    \label{lem:div_gelu_approx}
    Define $\mathrm{div} : (x, y) \mapsto x / y$ for any $x \in \R$ and $y > 0$.
    Let also $a_0 = 2^{-N}$ for $N \in \N$ with $N \geq 3$.
    Then, for every $\eps \in (0, 1)$ and every $m \in \N$ such that $m \geq 3$, there exists a GELU network $\varphi_{div} \in \NN(L, W, S, B)$ satisfying
    \begin{align*}
        (i) &\quad \|\varphi_{div} - \mathrm{div}\|_{W^{m, \infty}([-1, 1] \times [a_0, 1])} \leq \eps, \\
        (ii) &\quad \| \varphi_{div} \|_{W^{m, \infty}(\R^2)} \leq \exp\{\cO(m^4 N + m^4\log(m\log(1 / \eps)))\} .
    \end{align*}
    Furthermore, the network $\varphi_{div}$ has
    \begin{align*}
        L \lesssim \log(mN\log(1 / \eps)),
        \quad \|W\|_\infty \vee S \lesssim m^{21} N^5 \log^4(1 / \eps),
        \quad \log B \lesssim m^{24} N^4 \log^4(1 / \eps).
    \end{align*}
    
\end{Lem}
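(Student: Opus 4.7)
\quad
The plan is to implement division as multiplication of $x$ with a reciprocal approximation of $y$. Concretely, define
\begin{align*}
    \varphi_{div}(x, y) = q\bigl(\varphi_{clip}(x),\, \varphi_{rec}(y)\bigr),
\end{align*}
where $\varphi_{rec}$ is the reciprocal approximator from Lemma \ref{lem:recip_gelu_approx} with accuracy $\eps_{rec}$ and partition parameter $N$, $\varphi_{clip}$ is the clipping network of Lemma \ref{lem:clip_gelu_approx} with clipping scale $A = 1$ and accuracy $\eps_{clip}$, and $q$ is the two-number multiplication approximator from Corollary \ref{co:multi_approx_gelu} with accuracy $\eps_{mul}$. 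Clipping the $x$-input is essential for property $(ii)$: without it, the product $x \cdot \varphi_{rec}(y)$ would grow linearly in $x$, preventing a global Sobolev bound on $\R^2$.

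I would then decompose the error via the triangle inequality as
\begin{align*}
    \|\varphi_{div} - \mathrm{div}\|_{W^{m, \infty}([-1,1]\times[a_0,1])}
    &\leq \|q(\varphi_{clip}, \varphi_{rec}) - \varphi_{clip}\cdot \varphi_{rec}\|_{W^{m, \infty}([-1,1]\times[a_0,1])} \\
    &\quad + \|\varphi_{clip}\cdot(\varphi_{rec} - f_{rec})\|_{W^{m, \infty}([-1,1]\times[a_0,1])} \\
    &\quad + \|(\varphi_{clip} - \id)\cdot f_{rec}\|_{W^{m, \infty}([-1,1]\times[a_0,1])}
\end{align*}
and bound each term with Lemma \ref{lem:comp_sob_norm} and Lemma \ref{lem:prod_sob_norm}. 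The second term is controlled by $2^m \eps_{rec} \cdot \|\varphi_{clip}\|_{W^{m, \infty}([-1,1])} \lesssim \eps_{rec}$; the third by $2^m \eps_{clip}\cdot \|f_{rec}\|_{W^{m, \infty}([a_0,1])} \leq \exp\{\cO(mN)\}\eps_{clip}$; and the first, after invoking Corollary \ref{co:multi_approx_gelu} with scale parameter $\asymp 2^N$ to cover the range of $\varphi_{rec}$, is bounded by $\eps_{mul}$ times a polynomial factor in $(\|\varphi_{clip}\|_{W^{m, \infty}(\R)} \vee \|\varphi_{rec}\|_{W^{m, \infty}(\R)})^m$. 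Since the global Sobolev norm of $\varphi_{rec}$ is of order $\exp\{\cO(m^3 N + m^3\log(m\log(1/\eps_{rec})))\}$, this contribution is the dominant one. Choosing
\begin{align*}
    \log(1/\eps_{rec}) \asymp mN + m\log(1/\eps), \quad \log(1/\eps_{clip}) \asymp mN + m\log(1/\eps),
\end{align*}
and $\log(1/\eps_{mul}) \asymp m^4 N + m^4\log(m/\eps)$ drives each term below $\eps/3$.

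For property $(ii)$, I would use that $\varphi_{clip}$ has $W^{m, \infty}(\R)$-norm of order $\exp\{\cO(m\log m + m\log\log(1/\eps_{clip}))\}$ and apply Lemma \ref{lem:comp_sob_norm} to $q \circ (\varphi_{clip}, \varphi_{rec})$; the resulting bound raises $\|\varphi_{rec}\|_{W^{m, \infty}(\R)}$ to the $m$-th power, producing $\exp\{\cO(m^4 N + m^4\log(m\log(1/\eps)))\}$. The network configuration then follows by parallelization (Lemma \ref{lem:paral_nn}) of $\varphi_{clip}$ and $\varphi_{rec}$ in the first block, composed (Lemma \ref{lem:concat_nn}) with $q$: depth is dominated by $\varphi_{rec}$, width and sparsity by $\varphi_{rec}$, and the weight magnitude by $\log B_{mul} \asymp \log(1/\eps_{mul}) + m^2 \cdot 2^{2N}$ coming from Corollary \ref{co:multi_approx_gelu} at scale $2^N$ and smoothness $m$, which yields the stated $\log B \lesssim m^{24}N^4\log^4(1/\eps)$.

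The principal obstacle is the bookkeeping on $\eps_{mul}$: because $\|\varphi_{rec}\|_{W^{m, \infty}(\R)}$ already scales like $\exp\{\cO(m^3 N)\}$ and the composition lemma raises this to the power $m$, the multiplication accuracy must be made exponentially small in $m^4 N$ simultaneously to beat this factor, and the scale parameter of $q$ must be chosen large enough ($\asymp 2^N$) to contain $\varphi_{rec}(\R)$ without blowing up its own weight magnitude. Tracking these dependencies precisely is what yields the $m^{24}$ and $N^4\log^4(1/\eps)$ scaling in the stated configuration.
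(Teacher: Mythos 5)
Your construction is essentially the paper's: the paper also builds $\varphi_{div}$ as a two-number multiplication network applied to a (clipped) copy of $x$ and the reciprocal network $\varphi_{rec}(y)$, bounds the error with Lemma \ref{lem:comp_sob_norm} and Lemma \ref{lem:prod_sob_norm} (the paper splits through $\varphi_{mul}(\id, f_{rec})$ in two terms rather than your three-term split through $\varphi_{clip}\cdot\varphi_{rec}$, which is immaterial), and obtains property $(ii)$ exactly as you do, by combining the global bounds on the inner networks with the composition lemma, the dominant factor being $\|\varphi_{rec}\|_{W^{m,\infty}(\R)}^m$. Your parameter choices are more conservative than the paper's (the paper only needs $\log(1/\eps_0)\asymp m^2N+m^2\log m+\log(1/\eps)$ because it measures the inner norms on $[-1,1]\times[a_0,1]$ rather than on $\R^2$ in the error step), but they still land inside the stated configuration.

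Two concrete points need repair. First, your invocation of Lemma \ref{lem:paral_nn} to stack $\varphi_{clip}$ next to $\varphi_{rec}$ is not legitimate as written: that lemma requires the parallelized networks to have equal depth, and $\varphi_{clip}$ has depth $2$ while $\varphi_{rec}$ has depth $\asymp\log(mN\log(1/\eps))$. The paper resolves exactly this by feeding $x$ through the deep identity approximation of Lemma \ref{lem:id_deep_gelu_approx} with the number of layers of $\varphi_{rec}$ (that network already incorporates clipping, so it plays the role of your $\varphi_{clip}$ with matched depth); you need the same padding. Second, your claim that $\log B_{mul}\asymp\log(1/\eps_{mul})+m^2\cdot 2^{2N}$ ``at scale $2^N$'' misreads Corollary \ref{co:multi_approx_gelu}: that corollary has no scale parameter — its guarantee $\|q-\mathrm{prod}_2\|_{W^{m,\infty}([-C,C]^2)}\le C^3\eps_{mul}$ holds simultaneously for all $C\ge 1$, and its weight bound is $\log B_{mul}\lesssim\log(1/\eps_{mul})+\log m$. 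This matters: if a term of size $m^2 4^N$ really entered $\log B$, it would be exponentially larger than the stated $m^{24}N^4\log^4(1/\eps)$ and your conclusion would not follow; with the correct citation, the weight magnitude is instead dominated by $\log B_{rec}\lesssim m^8(N^4+m^4\log^4(1/\eps_{rec}))$, which gives the stated bound.
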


The proof of Lemma \ref{lem:div_gelu_approx} can be found in Appendix \ref{sec:lem_div_gelu_approx_proof}.

\bibliographystyle{abbrvnat}
\bibliography{references}

\begin{thebibliography}{24}
\providecommand{\natexlab}[1]{#1}
\providecommand{\url}[1]{\texttt{#1}}
\expandafter\ifx\csname urlstyle\endcsname\relax
  \providecommand{\doi}[1]{doi: #1}\else
  \providecommand{\doi}{doi: \begingroup \urlstyle{rm}\Url}\fi

\bibitem[Abdeljawad and Dittrich(2024)]{abdeljawad2024weighted}
A.~Abdeljawad and T.~Dittrich.
\newblock Weighted {S}obolev approximation rates for neural networks on
  unbounded domains.
\newblock Preprint. ArXiv:2411.04108, 2024.

\bibitem[Abdo et~al.(2024)Abdo, Chai, Hu, and Yang]{abdo2024error}
E.~Abdo, L.~Chai, R.~Hu, and X.~Yang.
\newblock Error estimates of physics-informed neural networks for approximating
  {B}oltzmann equation.
\newblock Preprint. ArXiv:2407.08383, 2024.

\bibitem[Alejo et~al.(2024)Alejo, Cossetti, Fanelli, Mu{\~n}oz, and
  Valenzuela]{alejo2024error}
M.~{\'A}. Alejo, L.~Cossetti, L.~Fanelli, C.~Mu{\~n}oz, and N.~Valenzuela.
\newblock Error bounds for physics informed neural networks in nonlinear
  {S}chr\"odinger equations placed on unbounded domains.
\newblock Preprint. ArXiv:2409.17938, 2024.

\bibitem[Azangulov et~al.(2024)Azangulov, Deligiannidis, and
  Rousseau]{azangulov2024convergence}
I.~Azangulov, G.~Deligiannidis, and J.~Rousseau.
\newblock Convergence of diffusion models under the manifold hypothesis in
  high-dimensions.
\newblock Preprint. ArXiv:2409.18804, 2024.

\bibitem[Belomestny et~al.(2023)Belomestny, Naumov, Puchkin, and
  Samsonov]{belomestny2023simultaneous}
D.~Belomestny, A.~Naumov, N.~Puchkin, and S.~Samsonov.
\newblock Simultaneous approximation of a smooth function and its derivatives
  by deep neural networks with piecewise-polynomial activations.
\newblock \emph{Neural Networks}, 161:\penalty0 242--253, 2023.

\bibitem[Constantine and Savits(1996)]{constantine_faa}
G.~M. Constantine and T.~H. Savits.
\newblock A multivariate {F}aa di {B}runo formula with applications.
\newblock \emph{Transactions of the American Mathematical Society},
  348\penalty0 (2):\penalty0 503--520, 1996.

\bibitem[De~Ryck et~al.(2021)De~Ryck, Lanthaler, and
  Mishra]{de2021approximation}
T.~De~Ryck, S.~Lanthaler, and S.~Mishra.
\newblock On the approximation of functions by tanh neural networks.
\newblock \emph{Neural Networks}, 143:\penalty0 732--750, 2021.

\bibitem[Devlin et~al.(2019)Devlin, Chang, Lee, and Toutanova]{devlin2019}
J.~Devlin, M.-W. Chang, K.~Lee, and K.~Toutanova.
\newblock {BERT}: {P}re-training of {D}eep {B}idirectional {T}ransformers for
  {L}anguage {U}nderstanding.
\newblock In \emph{North American Chapter of the Association for Computational
  Linguistics}, 2019.

\bibitem[Fukumizu et~al.(2025)Fukumizu, Suzuki, Isobe, Oko, and
  Koyama]{fukumizu2025flow}
K.~Fukumizu, T.~Suzuki, N.~Isobe, K.~Oko, and M.~Koyama.
\newblock Flow matching achieves almost minimax optimal convergence.
\newblock In \emph{The Thirteenth International Conference on Learning
  Representations}, 2025.

\bibitem[G{\"u}hring and Raslan(2021)]{guhring2021approximation}
I.~G{\"u}hring and M.~Raslan.
\newblock Approximation rates for neural networks with encodable weights in
  smoothness spaces.
\newblock \emph{Neural Networks}, 134:\penalty0 107--130, 2021.

\bibitem[Hendrycks and Gimpel(2016)]{hendrycks2016gaussian}
D.~Hendrycks and K.~Gimpel.
\newblock {G}aussian {E}rror {L}inear {U}nits ({GELU}s).
\newblock Preprint. ArXiv:1606.08415, 2016.

\bibitem[Nakada and Imaizumi(2020{\natexlab{a}})]{nakada20}
R.~Nakada and M.~Imaizumi.
\newblock Adaptive approximation and generalization of deep neural network with
  intrinsic dimensionality.
\newblock \emph{Journal of Machine Learning Research}, 21\penalty0
  (174):\penalty0 1--38, 2020{\natexlab{a}}.

\bibitem[Nakada and Imaizumi(2020{\natexlab{b}})]{nakada2020adaptive}
R.~Nakada and M.~Imaizumi.
\newblock Adaptive approximation and generalization of deep neural network with
  intrinsic dimensionality.
\newblock \emph{Journal of Machine Learning Research}, 21\penalty0
  (174):\penalty0 1--38, 2020{\natexlab{b}}.

\bibitem[Oko et~al.(2023)Oko, Akiyama, and Suzuki]{oko2023diffusion}
K.~Oko, S.~Akiyama, and T.~Suzuki.
\newblock Diffusion models are minimax optimal distribution estimators.
\newblock In \emph{International Conference on Machine Learning}, pages
  26517--26582. PMLR, 2023.

\bibitem[Puchkin et~al.(2024)Puchkin, Gorbunov, Kutuzov, and
  Gasnikov]{puchkin2024breaking}
N.~Puchkin, E.~Gorbunov, N.~Kutuzov, and A.~Gasnikov.
\newblock Breaking the heavy-tailed noise barrier in stochastic optimization
  problems.
\newblock In \emph{International Conference on Artificial Intelligence and
  Statistics}, pages 856--864. PMLR, 2024.

\bibitem[Raffel et~al.(2020)Raffel, Shazeer, Roberts, Lee, Narang, Matena,
  Zhou, Li, and Liu]{raffel2020exploring}
C.~Raffel, N.~Shazeer, A.~Roberts, K.~Lee, S.~Narang, M.~Matena, Y.~Zhou,
  W.~Li, and P.~J. Liu.
\newblock Exploring the limits of transfer learning with a unified text-to-text
  transformer.
\newblock \emph{Journal of Machine Learning Research}, 21\penalty0
  (140):\penalty0 1--67, 2020.

\bibitem[Scarselli and Tsoi(1998)]{scarselli98}
F.~Scarselli and A.~C. Tsoi.
\newblock Universal approximation using feedforward neural networks: A survey
  of some existing methods, and some new results.
\newblock \emph{Neural networks}, 11\penalty0 (1):\penalty0 15--37, 1998.

\bibitem[Schwab and Zech(2019)]{schwab2019deep}
C.~Schwab and J.~Zech.
\newblock Deep learning in high dimension: Neural network expression rates for
  generalized polynomial chaos expansions in {UQ}.
\newblock \emph{Analysis and Applications}, 17\penalty0 (01):\penalty0 19--55,
  2019.

\bibitem[Schwab and Zech(2021)]{schwab2021deep}
C.~Schwab and J.~Zech.
\newblock Deep learning in high dimension: Neural network approximation of
  analytic functions in $({L}^2(\mathbb{R}^d, \gamma_d))$.
\newblock Preprint. ArXiv:2111.07080, 2021.

\bibitem[Shoeybi et~al.(2019)Shoeybi, Patwary, Puri, LeGresley, Casper, and
  Catanzaro]{shoeybi2019megatron}
M.~Shoeybi, M.~Patwary, R.~Puri, P.~LeGresley, J.~Casper, and B.~Catanzaro.
\newblock Megatron-{LM}: Training multi-billion parameter language models using
  model parallelism.
\newblock Preprint. ArXiv:1909.08053, 2019.

\bibitem[Tang and Yang(2024)]{tang2024adaptivity}
R.~Tang and Y.~Yang.
\newblock Adaptivity of diffusion models to manifold structures.
\newblock In S.~Dasgupta, S.~Mandt, and Y.~Li, editors, \emph{Proceedings of
  The 27th International Conference on Artificial Intelligence and Statistics},
  volume 238 of \emph{Proceedings of Machine Learning Research}, pages
  1648--1656. PMLR, 2024.

\bibitem[{van Nuland}(2024)]{nuland2024noncompact}
T.~D. {van Nuland}.
\newblock Noncompact uniform universal approximation.
\newblock \emph{Neural Networks}, 173:\penalty0 106181, 2024.

\bibitem[Yakovlev and Puchkin(2025)]{yakovlev2025generalization}
K.~Yakovlev and N.~Puchkin.
\newblock Generalization error bound for denoising score matching under relaxed
  manifold assumption.
\newblock In \emph{Proceedings of Thirty Eighth Conference on Learning Theory},
  volume 291 of \emph{Proceedings of Machine Learning Research}, pages
  5824--5891. PMLR, 2025.

\bibitem[Yarotsky(2017)]{yarotskiy2017}
D.~Yarotsky.
\newblock Error bounds for approximations with deep {ReLU} networks.
\newblock \emph{Neural Networks}, 94:\penalty0 103--114, 2017.

\end{thebibliography}

\appendix

\section{Deferred proofs}

\subsection{Proof of Lemma \ref{lem:id_deep_gelu_approx}}
\label{sec:lem_id_deep_gelu_approx_proof}
We first prove the statement for $K = 1$ and then generalize to $K \geq 1$.
Let
\begin{align*}
    \varphi_{j} = \varphi_{id, j} \circ \varphi_{j-1}, \quad 2 \leq j \leq L,
\end{align*}
where $\varphi_1 = \id$ and $\varphi_{id, j}$ is a GELU network from Lemma \ref{lem:id_gelu_approx} that approximates the identity operation with the accuracy parameter $\eps_{id}^{(j)}$.
Formally, for each $2 \leq j \leq L$ we have
\begin{align}
    \label{eq:phi_id_j_acc}
    \|\varphi_{id, j} - \id\|_{W^{m, \infty}([-C, C])} \leq C^2 \eps_{id}^{(j)} \quad \text{for all } C \geq 1.
\end{align}
For every $1 \leq j \leq L$ we introduce $\eps_j = \|\varphi_j - \id\|_{W^{m, \infty}([-1, 1])}$.
Therefore, the triangle inequality for every $2 \leq j \leq L$ implies that
\begin{align*}
    \|\varphi_{id, j} \circ \varphi_{j - 1} - \id\|_{W^{m, \infty}([-1, 1])}
    \leq \| \varphi_{j - 1} - \id \|_{W^{m, \infty}([-1, 1])}
    + \|(\id -  \varphi_{id, j}) \circ \varphi_{j - 1}\|_{W^{m, \infty}([-1, 1])}.
\end{align*}
Next, applying Lemma \ref{lem:comp_sob_norm}, we obtain that
\begin{align*}
    \eps_j
    \leq \eps_{j - 1} + 16(e^2 m^4)^m \|\varphi_{id, j} - \id\|_{W^{m, \infty}([-1 - \eps_{j - 1}, 1 + \eps_{j -1}])} (1 \vee \|\varphi_{j-1}\|^m_{W^{m, \infty}([-1, 1])}),
\end{align*}
Therefore, \eqref{eq:phi_id_j_acc} suggests that
\begin{align*}
    \eps_j
    \leq \eps_{j - 1} + 16(e^2 m^4)^m (1 + \eps_{j - 1})^{m + 2} \eps_{id}^{(j)}.
\end{align*}
Now choosing
\begin{align}
    \label{eq:eps_id_j}
    \eps_{id}^{(j)} = 16 (e^2 m^4)^{-m} \eps_2 \in (0, 1), \quad 2 \leq j \leq L ,
\end{align}
we find that
\begin{align}
    \label{eq:eps_id_recursion}
    \eps_j
    \leq \eps_{j - 1} + (1 + \eps_{j - 1})^{m + 2}\eps_{j - 1},
    \quad 2 \leq j \leq L.
\end{align}
Suppose that for each $2 \leq j \leq L$, the approximation error is given by $\eps_j = 2^{\gamma_j}\eps_2$ with $\gamma_2 = 0$.
We also set a helper $\gamma_1 = 0$.
Hence, considering \eqref{eq:eps_id_recursion}, we conclude that
\begin{align*}
    2^{\gamma_j}\eps_2
    \leq 2^{\gamma_{j - 1}}\eps_2 + 2^{(m + 2)(\gamma_{j - 1} + 1)}\eps_2
    \leq 2^{(m + 3)\gamma_{j - 1} + m + 3}\eps_2.
\end{align*}
Therefore, $\gamma_j \leq (2(m + 3))^j$ for each $2 \leq j \leq L$.
Setting $\eps_{id}^{(2)} = \eps' (2(m + 3))^{-L}$ for some $\eps' \in (0, 1)$, we deduce from \eqref{eq:eps_id_j} that for any $2 \leq j \leq L$
\begin{align*}
    \log(1 / \eps_{id}^{(j)})
    \lesssim m\log m + \log(1 / \eps_{mul}^{(2)})
    \lesssim (m + L)\log m + \log(1 / \eps').
\end{align*}
Moreover,
\begin{align*}
    \|\varphi_L - \id\|_{W^{m, \infty}([-1, 1])} \leq \eps'.
\end{align*}
Next, using Lemma \ref{lem:concat_nn} and Lemma \ref{lem:id_gelu_approx} we find that $\varphi_L \in \NN(L, W_{id}, S_{id}, B_{id})$ with
\begin{align}
    \label{eq:phi_L_cfg}
    \|W_{id}\|_\infty \lesssim 1,
    \quad S_{id} \lesssim L,
    \quad \log B_{id} \lesssim \log m + \max_{2 \leq j \leq L}\log(1 / \eps_{id}^{(j)})
    \lesssim (m + L)\log m + \log(1 / \eps').
\end{align}
The generalization to the case when $K \geq 1$ is trivial.
Let $\varphi_{L, K}(x) = K\varphi_L(x / K)$ for any $x \in \R$.
Then we have that
\begin{align}
    \label{eq:phi_id_L_K_acc}
    \|\varphi_{L, K} - \id \|_{W^{m, \infty}([-K, K])}
    \leq K \|\varphi_{L} - \id\|_{W^{m, \infty}([-1, 1])}
    \leq K\eps'.
\end{align}
As a final step, we add a clipping operation to ensure that the resulting function has finite norm on a real line.
Let $\varphi_{clip}$ be a clipping operation approximation from Lemma \ref{lem:clip_gelu_approx} with the accuracy parameter $\eps'$ and the scale parameter $K$.
Thus, we have that
\begin{align}
    \label{eq:id_phi_clip_bounds}
    \notag
    &\|\varphi_{clip} - \id\|_{W^{m, \infty}([-K, K])} \leq \eps',
    \quad \|\varphi_{clip}\|_{W^{0, \infty}(\R)} \leq 4K, \\
    &\|\varphi_{clip}\|_{W^{m, \infty}(\R)} \leq \exp\{\cO(m \log(m\log(1 / \eps')) + \log(2K))\}.
\end{align}
In addition, $\varphi_{clip} \in \NN(L_{clip}, W_{clip}, S_{clip}, B_{clip})$ with
\begin{align}
    \label{eq:phi_id_clip_cfg}
    L_{clip} \vee \|W_{clip}\|_\infty \vee S_{clip} \lesssim 1,
    \quad \log B_{clip} \lesssim \log (K m / \eps').
\end{align}
Then it holds due to the triangle inequality and \eqref{eq:phi_id_L_K_acc} that
\begin{align}
    \label{eq:phi_L_4K_clip_acc_aux}
    \|\varphi_{L, 4K} \circ \varphi_{clip} - \id\|_{W^{m, \infty}([-K, K])}
    \leq 4K\eps' + \|\varphi_{L, 4K} \circ \varphi_{clip} - \varphi_{L, 4K}\|_{W^{m, \infty}([-K, K])}.
\end{align}
Hence, Lemma \ref{lem:comp_sob_norm} together with \eqref{eq:id_phi_clip_bounds} imply that
\begin{align*}
    \|\varphi_{L, 4K} \circ \varphi_{clip} - \varphi_{L, 4K}\|_{W^{m, \infty}([-K, K])}
    \leq \exp\{\cO(m\log m)\} \|\varphi_{L, 4K}\|_{W^{m + 1, \infty}([-4K, 4K])}\eps' (\eps' + K)^{2m}.
\end{align*}
Next we note that \eqref{eq:phi_L_cfg} and \eqref{eq:phi_id_L_K_acc} are true if the smoothness parameter is $m + 1$ instead of $m$.
Then we have from \eqref{eq:phi_id_L_K_acc} that
\begin{align*}
    \|\varphi_{L, 4K} \circ \varphi_{clip} - \varphi_{L, 4K}\|_{W^{m, \infty}([-K, K])}
    \leq \exp\{ \cO(m \log(mK) ) \} \eps'.
\end{align*}
Therefore, \eqref{eq:phi_L_4K_clip_acc_aux} is evaluated as
\begin{align*}
    \|\varphi_{L, 4K} \circ \varphi_{clip} - \id\|_{W^{m, \infty}([-K, K])}
    \leq \exp\{ \cO(m \log(mK) ) \} \eps' .
\end{align*}
Thus, setting
\begin{align}
    \label{eq:phi_id_eps_prime_def}
    \log(1 / \eps') \asymp \log(1 / \eps) + m\log(mK)
\end{align}
ensures that
\begin{align*}
    \|\varphi_{L, 4K} \circ \varphi_{clip} - \id\|_{W^{m, \infty}([-K, K])}
    \leq \eps.
\end{align*}
In addition, from \eqref{eq:phi_id_L_K_acc}, \eqref{eq:id_phi_clip_bounds}, \eqref{eq:phi_id_eps_prime_def} and Lemma \ref{lem:comp_sob_norm} we obtain that
\begin{align*}
    \|\varphi_{L, 4K} \circ \varphi_{clip}\|_{W^{m, \infty}(\R)}
    &\leq \exp\{ \cO(m \log m) \} 4K(1 + \eps') \exp\{\cO(m \log(m\log(1 / \eps')) + \log(2K))\} \\
    &\leq \exp\{\cO(m \log(m\log(1 / \eps)) + \log(2K))\}.
\end{align*}
Finally, Lemma \ref{lem:concat_nn} combined with \eqref{eq:phi_L_cfg}, \eqref{eq:phi_id_clip_cfg} and \eqref{eq:phi_id_eps_prime_def} yields that $\varphi_{id} =  \varphi_{L, 4K} \circ \varphi_{clip}$ has
\begin{align*}
    \|W\|_\infty \lesssim 1, \quad S \lesssim L,
    \quad \log B \lesssim (m + L)\log m + \log(1 / \eps) + m\log(K).
\end{align*}
This finishes the proof.

\myendproof

\subsection{Proof of Corollary \ref{co:multi_approx_gelu}}
\label{sec:co_multi_approx_gelu_proof}

We are going to reduce the multiplication to the case of square operation by letting
\begin{align}
    \label{eq:phi_mul_def}
    \varphi_{mul}(x, y) := \frac{1}{4}\left(\varphi_{sq}(x + y) - \varphi_{sq}(x - y)\right),
\end{align}
where $\varphi_{sq}$ is a GELU network from Lemma \ref{lem:square_approx} with the accuracy parameter $\eps / 4$.
Therefore, using the observation that for any $\alpha = (\alpha_1, \alpha_2)^\top \in \Z_+^2$, it holds that
\begin{align*}
    D^{\alpha}\varphi_{mul}(x, y) = \frac{1}{4}(D^{|\alpha|}\varphi_{sq}(x + y) - (-1)^{\alpha_2}\varphi_{sq}(x - y)), \quad \text{for all } x, y \in \R,
\end{align*}
leads to
\begin{align*}
    \|\varphi_{mul} - \mathrm{prod}_2\|_{W^{m, \infty}([-C, C]^2)}
    \leq \frac{1}{2}\|\varphi_{sq} - f_{sq}\|_{W^{m, \infty}([-2C, 2C])}
    \leq C^3 \eps ,
\end{align*}
where $C \geq 1$ is arbitrary, and the last inequality uses Lemma \ref{lem:square_approx}.
Finally, in view of \eqref{eq:phi_mul_def}, we deduce that the summation argument outlined in Lemma \ref{lem:paral_nn} yields the configuration in the statement.
This completes the proof.

\myendproof

\subsection{Proof of Corollary \ref{co:monom_approx_gelu}}
\label{sec:co_monom_approx_gelu_proof}

We first introduce a flatten operation as follows:
\begin{align*}
    \mathrm{flat}_{\bk}(x_1, \dots, x_I) = (\underbrace{x_1, \dots, x_1}_{k_1 \text{ times}}, \dots, \underbrace{x_I, \dots, x_I}_{k_I \text{ times}} )^\top.
\end{align*}
Now let $\varphi_{mul, d}$ be a neural network from Lemma \ref{lem:mul_d_gelu_approx} with the accuracy parameter $\tilde \eps \in (0, 1)$, which will be specified a bit later in the proof, and scale parameter $K$.
Therefore, using Lemma \ref{lem:comp_sob_norm}, we derive an approximation accuracy for $\varphi_{mul, \bk} = \varphi_{mul, d} \circ \mathrm{flat}_\bk$:
\begin{align*}
    \|\varphi_{mul, \bk} - \mathrm{prod}_{\bk}\|_{W^{m, \infty}([-K, K]^I)}
    &= \|(\varphi_{mul, d} - \mathrm{prod}_d) \circ \mathrm{flat}_\bk\|_{W^{m, \infty}([-K, K]^I)} \\
    &\leq \exp\{\cO(m \log(m d))\} \|\varphi_{mul, d} - \mathrm{prod}_d\|_{W^{m, \infty}([-K, K]^d)} K^m.
\end{align*}
Next, the approximation property of $\varphi_{mul, d}$ implies that
\begin{align*}
        \|\varphi_{mul, \bk} - \mathrm{prod}_{\bk}\|_{W^{m, \infty}([-K, K]^I)}
        \leq  \exp\{\cO(m \log(mdK))\} \tilde\eps .
\end{align*}
To continue, we set
\begin{align}
    \label{eq:monom_eps_tilde}
    \log(1 / \tilde{\eps}) \asymp \log(1 / \eps) + m \log(mdK)
\end{align}
and arrive at
\begin{align*}
    \|\varphi_{mul, \bk} - \mathrm{prod}_{\bk}\|_{W^{m, \infty}([-K, K]^I)}
    \leq \eps.
\end{align*}
In addition, Lemma \ref{lem:mul_d_gelu_approx} together with Lemma \ref{lem:comp_sob_norm} suggest that
\begin{align*}
    \|\varphi_{mul, \bk}\|_{W^{m, \infty}(\R^I)}
    = \|\varphi_{mul, d} \circ \mathrm{flat}_\bk\|_{W^{m, \infty}(\R^I)}
    \leq \exp\{\cO((m^2 + d)\log(mdK \log(1 / \tilde{\eps})))\}.
\end{align*}
From \eqref{eq:monom_eps_tilde} we obtain that
\begin{align*}
    \|\varphi_{mul, \bk}\|_{W^{m, \infty}(\R^I)}
    \leq \exp\{\cO((m^2 + d)\log(mdK \log(1 / \eps)))\} .
\end{align*}
To finalize the proof, we formulate the configuration of $\varphi_{mul, \bk}$.
Note that $\mathrm{flat}_\bk$ is implemented using a single linear layer without a bias term, and its weight matrix contains binary values.
Consequently, the choice of $\tilde\eps$ given in \eqref{eq:monom_eps_tilde} combined with Lemma \ref{lem:mul_d_gelu_approx} and the concatenation result outlined in Lemma \ref{lem:concat_nn} ensures that
$\varphi_{mul, \bk}$ has $L \lesssim \log d$, $S \vee \|W\|_\infty \lesssim (d \vee I)^3$ and
\begin{align*}
    \log B
    \lesssim (\log(1 / \eps) + (d + m)\log K + m^2d^2) \log d + \log I.
\end{align*}
This completes the proof.

\myendproof

\subsection{Proof of Lemma \ref{lem:div_approx_naive}}
\label{sec:lem_div_approx_naive_proof}

For some $r \in \N$ with $r \geq m$, which will be optimized later, consider $f_r(x) = \frac{1}{b}\sum_{i=0}^{r - 1}(1 - x / b)^i$.
We next note that for all $x \in [a, b]$
\begin{align*}
    \frac{1}{x} - f_r(x)
    = \frac{1}{b}\sum_{i=0}^\infty (1 - x / b)^i - \frac{1}{b}\sum_{i=0}^{r - 1} (1 - x / b)^i
    = \frac{(1 - x / b)^r}{x}
\end{align*}
Then, by lemma \ref{lem:prod_sob_norm}, the approximation accuracy of $f_r$ is
\begin{align*}
    \|f_{rec} - f_r\|_{W^{m, \infty}([a, b])}
    &\leq 2^m \|f_{rec}\|_{W^{m, \infty}([a, b])} (b \wedge 1)^{-m} r^m (1 - a / b)^{r - m} \\
    &\leq \left(\frac{2r}{b \wedge 1}\right)^m a^{-m-1} (1 - a / b)^{r - m} m! \\
    &\leq \frac{m!}{a}\left(\frac{2r}{a(b \wedge 1)}\right)^m \exp\left(-\frac{(r - m)a}{b}\right),
\end{align*}
where the last inequality uses $1 + x \leq e^x$ for any $x \in \R$.
Therefore, setting $r = \ceil{m + \frac{b}{a}\log(1 / \eps')}$ for some $\eps' \in (0, 1)$ guarantees that
\begin{align*}
    \|f_{rec} - f_r\|_{W^{m, \infty}([a, b])}
    \leq \frac{m!}{a}\left(\frac{2r}{a(b \wedge 1)}\right)^m \eps'.
\end{align*}
We now set $\eps' = \frac{a}{4 m!}\left(\frac{a(b \wedge 1)}{2r}\right)^m \eps$ with $\eps \in (0, 1)$, which leads to
\begin{align}
    \label{eq:f_rec_f_r_acc}
    \|f_{rec} - f_r\|_{W^{m, \infty}([a, b])} \leq \eps / 4.
\end{align}
We also deduce from Stirling's approximation that
\begin{align*}
    \log(1 / \eps')
    &\lesssim \log(1 / \eps)+ m \log (m / ab) + m\log\left(m + \frac{b}{a}\log(1 / \eps')\right) \\
    &\lesssim \log(1 / \eps) + m\log(m / a) + m \log\log(1 / \eps').
\end{align*}
The last inequality suggests that $\log(1 / \eps') \lesssim \log(1 / \eps) + m\log(m / a)$, since the inequality $x \lesssim a + b\log x$ yields $x \lesssim a + b\log b$ for any positive $a$, $b$ and $x$.
Let $\varphi_{part}$ be a GELU network from Lemma \ref{lem:part_sum_monomial} with the accuracy parameter $\eps_{part}$, the scale parameter $K = 1 + 1/b$, the parameter $I = 1$ and $d = r$.
Then, for $f_{part}(x) = \sum_{i=0}^{r - 1}x^i$ and $\tilde{\varphi}_{rec} = (1 / b)\varphi_{part} \circ (1 - \id / b)$ it holds that
\begin{align*}
    \|\tilde{\varphi}_{rec} - f_r\|_{W^{m, \infty}([a, b])}
    \leq b^{-1} \| (\varphi_{part} - f_{part}) \circ (1 - \id / b) \|_{W^{m, \infty}([a, b])}
    \leq (b \wedge 1)^{-m-1} \eps_{part},
\end{align*}
where the last inequality follows from the chain rule.
Thus, setting
\begin{align}
    \label{eq:phi_rec_eps_part_def}
    \log(1 / \eps_{part}) \asymp \log(1 / \eps) + m \log(1 / a),
\end{align}
we obtain from \eqref{eq:f_rec_f_r_acc} that
\begin{align}
    \label{eq:phi_rec_naive_aux}
    \|\tilde{\varphi}_{rec} - f_{rec}\|_{W^{m, \infty}([a, b])} \leq \eps / 2 .
\end{align}
In addition, Lemma \ref{lem:part_sum_monomial} together with Lemma \ref{lem:concat_nn} imply that
$\tilde{\varphi}_{rec} \in \NN(L_{rec}, W_{rec}, S_{rec}, B_{rec})$ with
\begin{align}
    \label{eq:phi_rec_naive_cfg_aux}
    \notag
    L_{rec} &\lesssim \log r \lesssim \log((mb / a)\log(1 / \eps)),
    \quad \|W_{rec}\|_\infty \vee S_{rec} \lesssim r^4 \lesssim (mb/a)^4\log^4(m/\eps a), \\
    \notag
    \log B_{rec} &\lesssim (\log(1 / \eps_{part}) + m^2 r \log(mr) + m^2 r^2) \log r \\
    &\lesssim (m^4 b^2 / a^2)\log^2(1 / \eps a)\log^2( (mb / a)\log(1 / \eps a)).
\end{align}
We further observe that through rescaling of the parameters $a$ and $b$ and increasing the smoothness parameter $m$, the bound \eqref{eq:phi_rec_naive_aux} remains valid for $a / 2$ and $2b$, while preserving the configuration specified \eqref{eq:phi_rec_naive_cfg_aux} remains the same.
Fromally, we have that
\begin{align}
    \label{eq:tilde_phi_rec_naive_acc}
    \|\tilde{\varphi}_{rec} - f_{rec}\|_{W^{m + 1, \infty}([a / 2, 2b])} \leq \eps / 2.
\end{align}
Furthermore, the derived bound together with the fact that $\|f_{rec}\|_{W^{m, \infty}([a/2, 2b])} \leq \exp\{\cO(m\log m + m N)\}$ imply that
\begin{align}
    \label{eq:rec_naive_tilde_rec_R}
    \|\tilde{\varphi}_{rec}\|_{W^{m, \infty}([a/2, 2b])} \leq \exp\{\cO(m\log m + m N)\} .
\end{align}
Now let $\varphi_{clip}$ be the clipping operation approximation from Lemma \ref{lem:clip_gelu_approx} with the precision parameter $\eps_{clip}$ and the scale parameter $(4b/a - 4) \geq 1$.
Let also
\begin{align*}
    \breve{\varphi}_{clip}(x) = \varphi_{clip}(x - 4 - 4b/a) + 4b/a + 4, \quad x \in \R .
\end{align*}
Therefore, from Lemma \ref{lem:clip_gelu_approx} we find that $\breve{\varphi}_{clip}$ satisfies
\begin{align*}
    (i) &\quad \| \breve{\varphi}_{clip} - \id \|_{W^{m, \infty}([8, 8b/a])}
    \leq \|\varphi_{clip} - \id\|_{W^{m, \infty}([-(4b/a - 4), 4b/a - 4])}
    \leq \eps_{clip}, \\
    (ii) &\quad 11 / 2 \leq \breve{\varphi}_{clip}(x) \leq 8b/a + 5/2, \quad \text{for all } x \in \R, \\
    (iii) &\quad \|\breve{\varphi}_{clip}\|_{W^{m, \infty}(\R)} \leq \exp\{\cO(m\log m + m\log\log(1 / \eps_{clip}) + \log(b/a))\} .
\end{align*}
Moreover, for $\tilde{\varphi}_{clip}(x) = (a/8)\breve{\varphi}_{clip}(8x/a)$ we deduce from the chain rule and property $(i)$ that
\begin{align}
    \label{eq:rec_naive_tilde_clip_acc}
    \notag
    \| \tilde{\varphi}_{clip} - \id \|_{W^{m, \infty}([a, b])}
    &\leq \exp\{\cO(m\log(1 / a))\} \| \breve{\varphi}_{clip} - \id \|_{W^{m, \infty}([8, 8b/a])} \\
    &\leq \exp\{\cO(m\log(1 / a))\}\eps_{clip}.
\end{align}
From property $(ii)$ it follows that
\begin{align}
    \label{eq:rec_naive_tilde_clip_range}
    a/2 \leq \tilde{\varphi}_{clip}(x)
    \leq 2b, \quad \text{for all } x \in \R .
\end{align}
In addition, property $(iii)$ yields
\begin{align}
    \label{eq:rec_naive_tilde_clip_R}
    \|\tilde{\varphi}_{clip}\|_{W^{m, \infty}(\R)} \leq \exp\{\cO(m\log(m / a) + m\log\log(1 / \eps_{clip}))\} .
\end{align}
Combining \eqref{eq:rec_naive_tilde_clip_acc}, \eqref{eq:rec_naive_tilde_clip_range} and Lemma \ref{lem:comp_sob_norm}, we obtain for $\varphi_{rec} = \tilde{\varphi}_{rec} \circ \tilde{\varphi}_{clip}$ that
\begin{align*}
    \| \tilde{\varphi}_{rec} \circ \tilde{\varphi}_{clip} - \tilde{\varphi}_{rec} \|_{W^{m, \infty}([a, b])} 
    \leq \exp\{\cO(m \log (m / a) )\} \| \tilde{\varphi}_{rec }\|_{W^{m + 1, \infty}([a/2, 2b])} \eps_{clip} .
\end{align*}
From \eqref{eq:rec_naive_tilde_rec_R} we find that
\begin{align*}
    \| \tilde{\varphi}_{rec} \circ \tilde{\varphi}_{clip} - \tilde{\varphi}_{rec} \|_{W^{m, \infty}([a, b])} 
    \leq \exp\{\cO(m N + m \log (m / a) )\}  \eps_{clip} .
\end{align*}
Choosing
\begin{align}
    \label{eq:rec_naive_eps_clip}
    \log(1 / \eps_{clip}) \asymp \log(1 / \eps) + m N + m \log (m / a) 
\end{align}
and combining the derived bound with \eqref{eq:tilde_phi_rec_naive_acc}, it follows that
\begin{align*}
    \|\varphi_{rec} - f_{rec}\|_{W^{m, \infty}([a, b])} \leq \eps.
\end{align*}
We also find from \eqref{eq:rec_naive_tilde_rec_R}, \eqref{eq:rec_naive_tilde_clip_range}, \eqref{eq:rec_naive_tilde_clip_R} and Lemma \ref{lem:comp_sob_norm} that
\begin{align*}
    \|\varphi_{rec}\|_{W^{m, \infty}(\R)}
    &\leq \exp\{\cO(m\log(m + \|\tilde{\varphi}_{clip}\|_{W^{m, \infty}(\R)}) )\} \|\tilde{\varphi}_{rec}\|_{W^{m, \infty}([a/2, 2b])} \\
    &\leq \exp\{\cO(m^2\log(m/a) + m^2\log\log(1 / \eps_{clip}) + m N)\} .
\end{align*}
The choice of $\eps_{clip}$ given in \eqref{eq:rec_naive_eps_clip} suggests that
\begin{align*}
    \|\varphi_{rec}\|_{W^{m, \infty}(\R)} \leq \exp\{\cO(m^2\log(m N / a) + m^2\log\log(1 / \eps) + m N)\}.
\end{align*}
Lemma \ref{lem:clip_gelu_approx} together with \eqref{eq:rec_naive_eps_clip} imply that $\tilde{\varphi}_{clip} \in \NN(L_{clip}, W_{clip}, S_{clip}, B_{clip})$ with
\begin{align*}
    L_{clip} \vee \|W_{clip}\|_\infty \vee S_{clip} \lesssim 1,
    \quad \log B_{clip} \lesssim \log(1 / \eps) + mN + m\log(m / a) .
\end{align*}
Therefore, applying Lemma \ref{lem:concat_nn}, we obtain that $\varphi_{rec} \in \NN(L_{rec}, W_{rec}, S_{rec}, B_{rec})$ with the parameters specified in \eqref{eq:phi_rec_naive_cfg_aux}.
The proof is complete.

\myendproof

\subsection{Proof of Lemma \ref{lem:div_gelu_approx}}
\label{sec:lem_div_gelu_approx_proof}

\noindent
\textbf{Step 1: approximation error decomposition.}\quad
Let $\varphi_{rec}$ be a reciprocal function approximation from Lemma \ref{lem:recip_gelu_approx} with the accuracy parameter $\eps_0 \in (0, 1)$ and $\varphi_{id}$ be the identity approximation from Lemma \ref{lem:id_deep_gelu_approx} with the accuracy parameter $\eps_0$, the scale parameter $1$, and the number of layers of $\varphi_{rec}$.
Let also $\varphi_{mul}$ be a multiplication network from Corollary \ref{co:multi_approx_gelu} with the precision parameter $\eps_0$.
We put the smoothness parameter $m + 1$ for all the networks.
We also refer to $f_{rec}(x) = 1 / x$ for any $x > 0$ as a reciprocal function.
Therefore, for $\varphi_{div} = \varphi_{mul}(\varphi_{id}, \varphi_{rec})$ we have due to the triangle inequality that
\begin{align}
    \label{eq:phi_div_acc_aux}
    &\| \varphi_{mul}(\varphi_{id}, \varphi_{rec}) - \id \cdot f_{rec} \|_{W^{m, \infty}([-1, 1] \times [a_0, 1])} \\
    \notag
    &\quad \leq \underbrace{ \| \varphi_{mul}(\varphi_{id}, \varphi_{rec}) - \varphi_{mul}(\id, f_{rec}) \|_{W^{m, \infty}([0, 1] \times [a_0, 1])} }_{(A)}
    + \underbrace{ \| \varphi_{mul}(\id, f_{rec}) - \id \cdot f_{rec} \|_{W^{m, \infty}([-1, 1] \times [a_0, 1])} }_{(B)} .
\end{align}
Next, we evaluate the terms $(A)$ and $(B)$ individually.

\noindent
\textbf{Step 2: bounding term $(A)$.}\quad
Lemma \ref{lem:comp_sob_norm} together with Corollary \ref{co:multi_approx_gelu} suggests that
\begin{align*}
    &\| \varphi_{mul}(\varphi_{id}, \varphi_{rec}) - \varphi_{mul}(\id, f_{rec}) \|_{W^{m, \infty}([-1, 1] \times [a_0, 1])} \\
    &\quad \leq \exp\{\cO(m\log(m + \|\id\|_{W^{m, \infty}([-1, 1])} + \|f_{rec}\|_{W^{m, \infty}([a_0, 1])} ))\}\eps_0.
\end{align*}
Note that Stirling's approximation yields
\begin{align}
    \label{eq:f_rec_a_0_one}
    \|f_{rec}\|_{W^{m, \infty}([a_0, 1])} \leq a_0^{-(m + 1)}m!
    = \exp\{\cO(mN + m\log m)\}.
\end{align}
This observation implies that
\begin{align}
    \label{eq:phi_div_A_bound}
    \| \varphi_{mul}(\varphi_{id}, \varphi_{rec}) - \varphi_{mul}(\id, f_{rec}) \|_{W^{m, \infty}([-1, 1] \times [a_0, 1])}
    \leq \exp\{\cO(m^2N + m^2\log m)\}\eps_0.
\end{align}

\noindent
\textbf{Step 3: bounding term $(B)$.}\quad
The bound is obtained in a similar way.
Formally, Lemma \ref{lem:comp_sob_norm} implies that
\begin{align*}
    &\| \varphi_{mul}(\id, f_{rec}) - \id \cdot f_{rec} \|_{W^{m, \infty}([-1, 1] \times [a_0, 1])} \\
    &\quad \leq \exp\{\cO(m\log(m + \|\id\|_{W^{m, \infty}([-1, 1])} + \|f_{rec}\|_{W^{m, \infty}([a_0, 1])} ))\}\eps_0.
\end{align*}
Using \eqref{eq:f_rec_a_0_one}, we arrive at
\begin{align}
    \label{eq:phi_div_B_bound}
    \| \varphi_{mul}(\id, f_{rec}) - \id \cdot f_{rec} \|_{W^{m, \infty}([-1, 1] \times [a_0, 1])}
    \leq \exp\{\cO(m^2N + m^2\log m)\}\eps_0 .
\end{align}

\noindent
\textbf{Step 4: combining $(A)$ and $(B)$ together.}\quad
From \eqref{eq:phi_div_acc_aux}, \eqref{eq:phi_div_A_bound} and \eqref{eq:phi_div_B_bound}
we deduce that setting
\begin{align}
    \label{eq:phi_div_eps_0_def}
    \log(1 / \eps_0) \asymp m^2 N + m^2 \log m + \log(1 / \eps),
\end{align}
ensures that
\begin{align*}
    \| \varphi_{mul}(\varphi_{id}, \varphi_{rec}) - \mathrm{div} \|_{W^{m, \infty}([-1, 1] \times [a_0, 1])}
    \leq \eps .
\end{align*}
Moreover, using Corollary \ref{co:multi_approx_gelu} together with Lemmata \ref{lem:id_deep_gelu_approx}, \ref{lem:recip_gelu_approx}, and \ref{lem:comp_sob_norm}, we deduce that
\begin{align*}
    \|\varphi_{div}\|_{W^{m, \infty}(\R^2)}
    &\leq \exp\{\cO(m\log(m + \|\varphi_{id}\|_{W^{m, \infty}(\R)} + \|\varphi_{rec}\|_{W^{m, \infty}(\R)}))\} \\
    &\leq \exp\{\cO(m^4 N + m^4\log(m\log(1 / \eps)))\} .
\end{align*}

\noindent
\textbf{Step 5: deriving the configuration of $\varphi_{div}$.}\quad
We find from \eqref{eq:phi_div_eps_0_def} and Lemma \ref{lem:recip_gelu_approx} that $\varphi_{rec}$ belongs to the neural network class $\NN(L_{rec}, W_{rec}, S_{rec}, B_{rec})$ with
\begin{align*}
    &L_{rec} \lesssim \log(mN\log(1 / \eps)),
    \quad \|W_{rec}\|_\infty \vee S_{rec} \lesssim m^8 N (N^4 + m^4\log^4(1 / \eps_0))
    \lesssim m^{21} N^5 \log^4(1 / \eps), \\
    &\log B_{rec} \lesssim m^{24} N^4 \log^4(1 / \eps) .
\end{align*}
Moreover, the bound for $L_{rec}$ together with Lemma \ref{lem:id_deep_gelu_approx} suggest that $\varphi_{id} \in \NN(L_{id}, W_{id}, S_{id}, B_{id})$ with
\begin{align*}
    &L_{id} \vee S_{id} \lesssim L_{rec} \lesssim \log(mN\log(1 / \eps)),
    \quad \|W_{id}\|_\infty \lesssim 1, \\
    &\log B_{id} \lesssim (m + L_{rec})\log m + \log(1 / \eps_0)
    \lesssim m^2(N + \log m) + \log m \cdot \log(1 / \eps) .
\end{align*}
In addition, due to Corollary \ref{co:multi_approx_gelu}, it holds that $\varphi_{mul} \in \NN(L_{mul}, W_{mul}, S_{mul}, B_{mul})$ with
\begin{align*}
    L_{mul} \vee \|W_{mul}\| \vee S_{mul} \lesssim 1,
    \quad \log B_{mul} \lesssim m^2 N + m^2 \log m + \log(1 / \eps).
\end{align*}
Therefore, Lemma \ref{lem:paral_nn} and Lemma \ref{lem:concat_nn} imply that $\varphi_{div}$ has
\begin{align*}
    L \lesssim \log(mN\log(1 / \eps)),
    \quad \|W\|_\infty \vee S \lesssim m^{21} N^5 \log^4(1 / \eps),
    \quad \log B \lesssim m^{24} N^4 \log^4(1 / \eps).
\end{align*}
The proof is complete.

\myendproof

\section{Auxiliary results}

\begin{Lem}[evaluation of Hermite polynomials, \cite{puchkin2024breaking}, Appendix D]
\label{lem:herm_poly_bound}
    For any $n \in \N$ we define a "probabilist's" Hermite polynomial
    \begin{align*}
        \cH_n(x) = (-1)^n e^{x^2 / 2} \frac{\dd^n}{\dd x^n}e^{-x^2 / 2}, \quad x \in \R.
    \end{align*}
    Then it holds that
    \begin{align*}
        \max_{x \in \R}\left|\cH_n(x) e^{-x^2 / 4}\right| \leq \sqrt{n!} \quad \text{for all } n \in \N .
    \end{align*}
\end{Lem}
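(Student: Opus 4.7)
The plan is to combine an integral representation of $\cH_n$ with a Sonin-type monotonicity argument applied to the weighted Hermite function $G_n(x) := \cH_n(x) e^{-x^2/4}$.

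First, I would derive the representation $\cH_n(x) = \E[(x - iZ)^n]$ with $Z \sim \cN(0,1)$ by starting from the Rodrigues formula $\cH_n(x) = (-1)^n e^{x^2/2}(e^{-x^2/2})^{(n)}$, applying Fourier inversion to $e^{-x^2/2}$, differentiating $n$ times under the integral, and shifting the contour $\xi \mapsto \xi + ix$ to cancel the $e^{x^2/2}$ prefactor. In parallel, the classical orthogonality relation $\int_\R \cH_n(x)^2 e^{-x^2/2}\,dx = n!\sqrt{2\pi}$ furnishes the $L^2$ normalization $\int G_n(x)^2\,dx = n!\sqrt{2\pi}$.

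Second, using the Hermite equation $\cH_n'' - x\cH_n' + n\cH_n = 0$, I would verify by direct differentiation that $G_n$ satisfies the Schr\"odinger-type ODE
\[
    G_n''(x) = \Bigl(\frac{x^2}{4} - n - \frac{1}{2}\Bigr) G_n(x).
\]
Introduce the Sonin quantity $S_n(x) := (G_n'(x))^2 + (n + \tfrac{1}{2} - \tfrac{x^2}{4}) G_n(x)^2$. Substituting the ODE yields $S_n'(x) = -\tfrac{x}{2}\, G_n(x)^2$, so $S_n$ is nonincreasing on $[0, \infty)$ and nondecreasing on $(-\infty, 0]$, and hence is maximized at the origin. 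Inside the oscillation interval $|x| \le 2\sqrt{n + 1/2}$ the coefficient $n + \tfrac{1}{2} - \tfrac{x^2}{4}$ is positive, while outside this interval $G_n$ decays super-polynomially (since the ODE becomes a Gaussian-decay equation), so the supremum of $|G_n|$ is attained inside the interval.

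Third, I would evaluate $S_n(0) = (G_n'(0))^2 + (n + \tfrac{1}{2})\, G_n(0)^2$ in closed form using $\cH_n(0) = (-1)^{n/2}(n-1)!!$ for even $n$ and zero for odd $n$, together with the relation $\cH_n'(x) = n\cH_{n-1}(x)$, and then convert $S_n(x) \le S_n(0)$ into the claimed pointwise bound by combining it with the $L^2$ normalization. The main obstacle I anticipate is extracting the sharp constant: a direct Sonin estimate by itself yields a Cram\'er-type bound of the form $|G_n(x)| \le C\sqrt{n!}$ for some universal $C$ slightly larger than $1$, and tightening $C$ down to $1$ requires pairing the monotonicity inequality with the $L^2$ identity (and careful control of $G_n$ at the interior critical points where $G_n' = 0$ so that the denominator $n + \tfrac{1}{2} - \tfrac{x^2}{4}$ does not blow the bound up near the turning points).
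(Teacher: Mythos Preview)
The paper does not prove this lemma; it is quoted from an external reference (\cite{puchkin2024breaking}, Appendix~D) and used as a black box in the proof of Lemma~\ref{lem:gelu_seminorms_bound}. So there is no in-paper argument to compare against, and the question is simply whether your outline actually delivers the constant~$1$.

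Your ODE/Sonin setup is correct: $G_n'' = (x^2/4 - n - 1/2)G_n$, the quantity $S_n = (G_n')^2 + (n+\tfrac12 - \tfrac{x^2}{4})G_n^2$ satisfies $S_n'(x) = -\tfrac{x}{2}G_n(x)^2$, and hence $S_n(x) \le S_n(0)$ for $x \ge 0$. At a critical point $x^*$ of $G_n$ this gives $G_n(x^*)^2 \le S_n(0)\big/\bigl(n+\tfrac12 - \tfrac{(x^*)^2}{4}\bigr)$. The obstacle you flag is genuine and, in fact, fatal for the sharp constant: already for $n=3$ the global maximum of $|G_3|$ sits at $(x^*)^2 = (9+\sqrt{57})/2 \approx 8.27$, and plugging in $S_3(0) = (3!!)^2 = 9$ yields $G_3(x^*)^2 \le 9/(3.5 - 8.27/4) \approx 6.29 > 6 = 3!$. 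So the Sonin envelope overshoots $n!$ once $n \ge 3$, and ``pairing with the $L^2$ identity $\int G_n^2 = n!\sqrt{2\pi}$'' cannot repair this: the $L^2$ norm does not control the pointwise value near the turning point without losing a constant. Your integral representation $\cH_n(x) = \E[(x+iZ)^n]$ is derived but never enters the argument; a direct Cauchy--Schwarz on it is also too weak (it produces $\E[(x^2+Z^2)^n]$, which already exceeds $n!\,e^{x^2/2}$ at $x=0$).

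The classical route to constant~$1$ is different in kind: it is an induction on $n$ (Indritz, \emph{Proc.\ Amer.\ Math.\ Soc.}\ 1961) working directly with $f_n(x) = \cH_n(x)^2 e^{-x^2/2}$ and the three-term recurrence, using that at the maximizer one has $x^*\cH_n(x^*) = 2n\cH_{n-1}(x^*)$ together with a second relation linking consecutive Hermite polynomials. If you want to keep the analytic flavor of your approach, you would have to replace the single Sonin function by an auxiliary quantity mixing $\cH_n$ and $\cH_{n-1}$ so that an induction closes; the bare harmonic-oscillator envelope cannot do it.
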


\begin{Lem}[properties of GELU acitvation function]
\label{lem:gelu_seminorms_bound}
    For any $k \in \N$ we have the following bounds for the Sobolev seminorms:
    \begin{align*}
        \left|\gelu\right|_{W^{k, \infty}(\R)} \leq
        \begin{cases}
            1 + 1 / \sqrt{2\pi}, \quad &k = 1, \\
            (k + 1)\sqrt{\frac{(k - 2)!}{2\pi}}, \quad &k \geq 2
        \end{cases}
    \end{align*}
    For $k = 0$ we have that
    \begin{align*}
        \|\gelu\|_{W^{0, \infty}([-C, C])} \leq C, \quad \text{for all } C > 0.
    \end{align*}
    In addition, for any $A \geq 0$ and $m \in \N$, the tails behave as follows:
    \begin{align*}
        \|\gelu - \id\|_{W^{m, \infty}([A, +\infty))} \vee \|\gelu\|_{W^{m, \infty}((-\infty, -A])}
        \leq 2 e^{-A^2 / 4} \sqrt{m!} .
    \end{align*}
\end{Lem}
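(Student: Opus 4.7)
The plan is to compute closed-form expressions for the derivatives of $\gelu$ and then invoke the Hermite polynomial bound of Lemma \ref{lem:herm_poly_bound}. Writing $\phi(x) = \Phi'(x) = e^{-x^2/2}/\sqrt{2\pi}$ for brevity, a direct differentiation gives $\gelu^{(1)}(x) = \Phi(x) + x\phi(x)$. For $k \geq 2$, the Leibniz rule applied to $(x\Phi(x))^{(k)}$ terminates after one step and yields $\gelu^{(k)}(x) = x\Phi^{(k)}(x) + k\Phi^{(k-1)}(x) = x\phi^{(k-1)}(x) + k\phi^{(k-2)}(x)$. Substituting $\phi^{(n)}(x) = (-1)^n \cH_n(x)e^{-x^2/2}/\sqrt{2\pi}$ and using the Hermite recurrence $x\cH_{k-1}(x) = \cH_k(x) + (k-1)\cH_{k-2}(x)$, the two terms collapse into the compact form
\begin{align*}
\gelu^{(k)}(x) = \frac{(-1)^{k}}{\sqrt{2\pi}}\, e^{-x^2/2}\bigl(\cH_{k-2}(x) - \cH_k(x)\bigr), \quad k \ge 2.
\end{align*}

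With this representation, Lemma \ref{lem:herm_poly_bound} gives $|\cH_n(x)|\,e^{-x^2/2} \le \sqrt{n!}\,e^{-x^2/4}$, so $|\gelu^{(k)}(x)| \le (\sqrt{(k-2)!} + \sqrt{k!})\,e^{-x^2/4}/\sqrt{2\pi}$. Using $\sqrt{k!} = \sqrt{k(k-1)}\sqrt{(k-2)!} \le k\sqrt{(k-2)!}$ delivers the stated seminorm bound $(k+1)\sqrt{(k-2)!/(2\pi)}$. The case $k=1$ follows from $|\gelu'(x)| \le |\Phi(x)| + |x|\phi(x) \le 1 + 1/\sqrt{2\pi}$, since $|x|e^{-x^2/2} \le 1$ everywhere. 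The case $k=0$ on $[-C, C]$ is immediate from $|\gelu(x)| = |x\Phi(x)| \le |x| \le C$.

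For the tails, the key observation is the symmetry $\gelu(x) - x = -x(1-\Phi(x)) = \gelu(-x)$, which gives $(\gelu - \id)^{(k)}(x) = (-1)^k\gelu^{(k)}(-x)$ and reduces the task to bounding $\|\gelu\|_{W^{m,\infty}((-\infty, -A])}$. For $k \ge 2$, the Hermite formula already carries the Gaussian weight $e^{-x^2/2}$, and the split $e^{-x^2/2} = e^{-x^2/4}\,e^{-x^2/4} \le e^{-x^2/4}\,e^{-A^2/4}$ valid on $|x| \ge A$ immediately yields $|\gelu^{(k)}(x)| \le 2\sqrt{k!}\,e^{-A^2/4}/\sqrt{2\pi} \le 2\sqrt{m!}\,e^{-A^2/4}$. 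For $k=0$, Mills' ratio $|x|(1-\Phi(|x|)) \le \phi(|x|)$ gives $|x\Phi(x)| \le \phi(x)$ for $x<0$, and the same Gaussian-splitting extracts the factor $e^{-A^2/4}$. For $k=1$, I would combine the Chernoff-type bound $\Phi(-A) \le \tfrac{1}{2}e^{-A^2/2}$ with $|x|\phi(x) \le |x|e^{-x^2/2}$ to get $|\gelu'(x)| \le (\tfrac{1}{2} + |x|)\,e^{-x^2/2}$ on $x \le 0$, and then Gaussian splitting plus a one-variable extremization of $(\tfrac{1}{2}+u)e^{-u^2/4}$ (whose maximum lies below $2$) complete the bound.

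The main obstacle is the $k=1$ tail: unlike the higher-order derivatives, $\gelu'$ carries the non-polynomial term $\Phi(x)$ and is not directly covered by the Hermite representation. The Chernoff-type CDF estimate $\Phi(-A) \le \tfrac{1}{2}e^{-A^2/2}$ (provable by checking equality at $A=0$ and controlling the sign of the derivative of the difference, together with the decay of the ratio at infinity) provides the cleanest workaround and unifies the treatment with the higher-derivative case.
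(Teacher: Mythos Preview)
Your proposal is correct and follows essentially the same approach as the paper: both derive the closed form $\gelu^{(k)}(x) = \frac{(-1)^k}{\sqrt{2\pi}}e^{-x^2/2}(\cH_{k-2}(x) - \cH_k(x))$ for $k\ge 2$, invoke Lemma~\ref{lem:herm_poly_bound}, and handle the tails by splitting $e^{-x^2/2}=e^{-x^2/4}e^{-x^2/4}$ together with a Gaussian CDF tail bound. The only cosmetic differences are that you reach the derivative formula via Leibniz plus the Hermite recurrence (the paper differentiates $\partial^1\gelu(x)=\Phi(x)-\tfrac{1}{\sqrt{2\pi}}\tfrac{\dd}{\dd x}e^{-x^2/2}$ directly), and you exploit the symmetry $\gelu(x)-x=\gelu(-x)$ to treat both tails at once, whereas the paper handles them separately.
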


\begin{proof}
    We first recall that
    \begin{align*}
        \gelu(x) = x \cdot \Phi(x), \quad \Phi(x) = \frac{1}{\sqrt{2\pi}}\integral{-\infty}^{x} e^{-t^2 / 2} \, \dd t,
    \end{align*}
    which immediately implies that $\|\gelu\|_{W^{0, \infty}([-C, C])} \leq C$ for any $C > 0$ and 
    \begin{align}
    \label{eq:gelu_deriv}
        \partial^1\gelu(x) = \Phi(x) - \frac{1}{\sqrt{2\pi}}\frac{\dd}{\dd x}e^{-x^2 / 2}.
    \end{align}
    Hence, using Lemma \ref{lem:herm_poly_bound} together with the observation that $\partial^k e^{-x^2/2} = (-1)^k e^{-x^2 / 2} \, \cH_k(x)$ for any $k \in \N$, where
    \begin{align*}
        \cH_k(x) = (-1)^k e^{x^2 / 2} \, \partial^k e^{-x^2 / 2}, \quad x \in \R ,
    \end{align*}
    we obtain that $|\gelu|_{W^{1, \infty}(\R)} \leq 1 + 1 / \sqrt{2\pi}$.
    Subsequently, from \eqref{eq:gelu_deriv} we deduce that for any $k \in \N$ with $k \geq 2$
    \begin{align}
        \label{eq:gelu_k_der}
        \partial^k\gelu(x) = \frac{1}{\sqrt{2\pi}}\left(\partial^{k - 2}(e^{-x^2 / 2}) - \partial^k(e^{-x^2 / 2}) \right).
    \end{align}
    Applying Lemma \ref{lem:herm_poly_bound}, we have that
    \begin{align*}
        |\gelu|_{W^{k, \infty}(\R)}
        \leq \frac{1}{\sqrt{2\pi}}\left(\sqrt{(k - 2)!} + \sqrt{k!}\right)
        \leq (k + 1)\sqrt{ \frac{(k - 2)!}{2\pi} },
    \end{align*}
    which validates the first claim of the statement.
    Now focus on the behavior of tails.
    First, consider
    \begin{align*}
        \|\gelu - \id\|_{W^{0, \infty}([A, +\infty))}
        = \sup_{x \geq A} x (1 - \Phi(x))
        \leq \sup_{x \geq A} x e^{-x^2 / 2}
        \leq e^{-A^2 / 4}\sqrt{2}e^{-1/2},
    \end{align*}
     where the penultimate inequality uses Gaussian tails and the last inequality follows from the observation that $x e^{-x^2/4} \leq \sqrt{2}e^{-1/2}$ for all $x \in \R$.
     Similarly,
     \begin{align}
         \|\gelu\|_{W^{0, \infty}((-\infty, A])}
         = \sup_{x \leq -A} |x \Phi(x)| = \sup_{x \geq A} x (1 - \Phi(x))
         \leq e^{-A^2 / 4}\sqrt{2}e^{-1/2}.
     \end{align}
     As for the derivatives, we have
     \begin{align*}
         | \gelu - \id |_{W^{1, \infty}([A, +\infty))}
         &\leq \sup_{x \geq A}(1 - \Phi(x)) + (\sqrt{2\pi})^{-1}\sup_{x \geq A} xe^{-x^2 / 2} \\
         &\leq e^{-A^2 / 2} + (\sqrt{2\pi})^{-1} \sqrt{2}e^{-1/2}e^{-A^2 / 4} \\
         &\leq 2 e^{-A^2 / 4}
     \end{align*}
     and also
     \begin{align*}
         |\gelu|_{W^{1, \infty}((-\infty, -A])}
         &\leq \sup_{x \leq -A} \Phi(x) + (\sqrt{2\pi})^{-1}\sup_{x \leq -A} |xe^{-x^2 / 2}| \\
         &\leq e^{-A^2 / 2} + (\sqrt{2\pi})^{-1} \sqrt{2}e^{-1/2}e^{-A^2 / 4} \\
         &\leq 2 e^{-A^2 / 4}.
     \end{align*}
     Now for any natural $k \geq 2$ we have from \eqref{eq:gelu_k_der} that
     \begin{align*}
         |\gelu|_{W^{k, \infty}((-\infty, -A] \cup [A, +\infty) )}
         \leq (\sqrt{2\pi})^{-1}\left( \sup_{|x| \geq A}|e^{-x^2/2} \cH_k(x)| + \sup_{|x| \geq A}|e^{-x^2/2} \cH_{k-2}(x)| \right).
     \end{align*}
     Hence, Lemma \ref{lem:herm_poly_bound} implies that
    \begin{align*}
        |\gelu|_{W^{k, \infty}((-\infty, -A] \cup [A, +\infty) )}
        \leq \sqrt{\frac{2}{\pi}} e^{-A^2 / 4} \sqrt{k!}.
    \end{align*}
    Combining all together, we have that
    \begin{align*}
        \|\gelu - \id\|_{W^{m, \infty}([A, +\infty))} \vee \|\gelu\|_{W^{m, \infty}((-\infty, -A])}
        \leq 2 e^{-A^2 / 4} \sqrt{m!}.
    \end{align*}
    The proof is now complete.
     
\end{proof}

\begin{Lem}[\cite{de2021approximation}, Lemma A.6]
    \label{lem:prod_sob_norm}
    Let $d \in \N$, $k \in \Z_+$, $\Omega \subseteq \R^d$ and $f, g \in W^{k, \infty}(\Omega)$.
    Then it holds that
    \begin{align*}
        \|f \cdot g\|_{W^{k, \infty}(\Omega)} \leq 2^k \|f\|_{W^{k, \infty}(\Omega)} \|g\|_{W^{k, \infty}(\Omega)}.
    \end{align*}
\end{Lem}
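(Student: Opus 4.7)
The plan is to invoke the multi-index Leibniz rule for weak derivatives and then sum the resulting binomial coefficients. For any multi-index $\bk \in \Z_+^d$ with $|\bk| \le k$, the Leibniz formula gives
\begin{align*}
    \partial^\bk(f \cdot g) = \sum_{\bj \le \bk} \binom{\bk}{\bj} \partial^\bj f \cdot \partial^{\bk - \bj} g,
\end{align*}
where the inequality $\bj \le \bk$ is understood componentwise and $\binom{\bk}{\bj} = \prod_{i=1}^d \binom{k_i}{j_i}$. This identity holds for weak derivatives of $L^\infty$ functions because products of bounded weakly differentiable functions remain in the same space and the rule can be justified by approximation.

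Taking the essential supremum and applying the triangle inequality, I would bound
\begin{align*}
    \|\partial^\bk(f \cdot g)\|_{L^\infty(\Omega)}
    \le \sum_{\bj \le \bk} \binom{\bk}{\bj} \|\partial^\bj f\|_{L^\infty(\Omega)} \|\partial^{\bk - \bj} g\|_{L^\infty(\Omega)}.
\end{align*}
Since $|\bj| \le |\bk| \le k$ and $|\bk - \bj| \le k$, each factor on the right is dominated by $\|f\|_{W^{k,\infty}(\Omega)}$ and $\|g\|_{W^{k,\infty}(\Omega)}$ respectively, giving
\begin{align*}
    \|\partial^\bk(f \cdot g)\|_{L^\infty(\Omega)}
    \le \|f\|_{W^{k,\infty}(\Omega)} \|g\|_{W^{k,\infty}(\Omega)} \sum_{\bj \le \bk} \binom{\bk}{\bj}.
\end{align*}

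The sum of multinomial coefficients factorizes as $\sum_{\bj \le \bk} \binom{\bk}{\bj} = \prod_{i=1}^d \sum_{j_i=0}^{k_i}\binom{k_i}{j_i} = \prod_{i=1}^d 2^{k_i} = 2^{|\bk|} \le 2^k$. Taking the maximum over all $\bk$ with $|\bk| \le k$ on the left-hand side yields the claimed bound $\|f \cdot g\|_{W^{k,\infty}(\Omega)} \le 2^k \|f\|_{W^{k,\infty}(\Omega)} \|g\|_{W^{k,\infty}(\Omega)}$. The only subtle point is justifying the Leibniz rule in the weak sense for $L^\infty$ functions on an arbitrary open set; this is standard and follows from mollification together with the fact that $W^{k,\infty}$ functions are locally Lipschitz up to order $k-1$, but in any case the result is the cited Lemma A.6 of \cite{de2021approximation} and can be quoted directly.
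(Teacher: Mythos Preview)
Your proof is correct and is the standard argument via the multi-index Leibniz rule. The paper does not actually prove this lemma; it is stated as an auxiliary result and attributed directly to \cite[Lemma~A.6]{de2021approximation}, so there is no in-paper proof to compare against. Your derivation is precisely the expected one and matches what one finds in the cited reference.
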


\begin{Lem}[\cite{de2021approximation}, Lemma A.7]
    \label{lem:comp_sob_norm}
    Let $d, m, n \in \N$ and let also $\Omega_1 \subseteq \R^d$, $\Omega_2 \subseteq \R^m$, $f \in C^n(\Omega_1, \Omega_2)$ and $g \in C^n(\Omega_2, \R)$. Then it holds that
    \begin{align*}
        \|g \circ f\|_{W^{n, \infty}(\Omega_1)} \leq 16 (e^2n^4md^2)^n \|g\|_{W^{n, \infty}(\Omega_2)} \max_{1 \leq i \leq m}(\|(f)_i\|_{W^{n, \infty}(\Omega_1)}^n \vee 1).
    \end{align*}
    Moreover, if $g \in C^{n + 1}(\Omega_2, \R)$ and $\tilde f \in C^n(\Omega_1, \Omega_2)$, then
    \begin{align*}
        &\|g \circ f - g \circ \tilde{f}\|_{W^{n, \infty}(\Omega_1)} \\
        &\quad \leq 32(e^2n^5 m^2 d^2)^n \|g\|_{W^{n + 1, \infty}(\Omega_2)} \max_{1 \leq i \leq m}\|(f)_i - (\tilde{f})_i\|_{W^{n, \infty}(\Omega_1)} \left(1 \vee \|(f)_i\|^{2n}_{W^{n, \infty}(\Omega_1)} \vee \|(\tilde{f})_i\|^{2n}_{W^{n, \infty}(\Omega_1)}\right).
    \end{align*}
\end{Lem}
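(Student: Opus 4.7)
The plan is to write $x/y$ as $x \cdot (1/y)$ and build $\varphi_{div}$ as a parallel composition: apply an approximate identity $\varphi_{id}$ to the first coordinate and the reciprocal approximation $\varphi_{rec}$ from Lemma \ref{lem:recip_gelu_approx} to the second, then feed the pair into the multiplication approximator $\varphi_{mul}$ from Corollary \ref{co:multi_approx_gelu}. Concretely, I would set
\begin{align*}
    \varphi_{div}(x, y) = \varphi_{mul}\bigl(\varphi_{id}(x), \, \varphi_{rec}(y)\bigr),
\end{align*}
where $\varphi_{rec}$ is instantiated with accuracy $\eps_0$ and parameter $N$, $\varphi_{id}$ is the deep identity from Lemma \ref{lem:id_deep_gelu_approx} with accuracy $\eps_0$, scale $K = 1$, and the number of layers chosen to match $\varphi_{rec}$ so that the parallelization lemma applies cleanly, and $\varphi_{mul}$ is Corollary \ref{co:multi_approx_gelu} with accuracy $\eps_0$. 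All networks are instantiated with smoothness parameter $m + 1$, so that Lemma \ref{lem:comp_sob_norm} (which needs $g \in C^{n+1}$) can be invoked for the perturbation estimates.

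For the approximation guarantee $(i)$, I would decompose via the triangle inequality
\begin{align*}
    \|\varphi_{div} - \mathrm{div}\|_{W^{m,\infty}([-1,1]\times[a_0,1])}
    &\le \|\varphi_{mul}(\varphi_{id}, \varphi_{rec}) - \varphi_{mul}(\id, f_{rec})\|_{W^{m,\infty}} \\
    &\quad + \|\varphi_{mul}(\id, f_{rec}) - \id \cdot f_{rec}\|_{W^{m,\infty}},
\end{align*}
where $f_{rec}(y) = 1/y$. Each term is bounded by the composition estimate of Lemma \ref{lem:comp_sob_norm}: the first because $\varphi_{id} \to \id$ and $\varphi_{rec} \to f_{rec}$ with accuracy $\eps_0$ and with $\|f_{rec}\|_{W^{m,\infty}([a_0,1])} \le a_0^{-(m+1)} m! = \exp\{\cO(mN + m\log m)\}$, and the second because $\varphi_{mul}$ approximates the exact product to order $\eps_0$ on $[-C,C]^2$ with $C$ of the same order. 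Both terms therefore scale as $\exp\{\cO(m^2 N + m^2 \log m)\} \eps_0$, so choosing $\log(1/\eps_0) \asymp m^2 N + m^2 \log m + \log(1/\eps)$ yields the target $\eps$.

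For the global bound $(ii)$, I would again appeal to Lemma \ref{lem:comp_sob_norm}, observing that $\|\varphi_{id}\|_{W^{m,\infty}(\R)}$ and $\|\varphi_{rec}\|_{W^{m,\infty}(\R)}$ are controlled by Lemma \ref{lem:id_deep_gelu_approx} and Lemma \ref{lem:recip_gelu_approx}, respectively, giving bounds of the form $\exp\{\cO(m^3 N + m^3 \log(m\log(1/\eps)))\}$ after substituting the choice of $\eps_0$. One more application of the composition lemma (through $\varphi_{mul}$) incurs an extra factor of $m$ in the exponent, which produces the claimed $m^4$ scaling. The configuration $(L, \|W\|_\infty, S, B)$ follows from combining the network-configuration bounds of the three building blocks via the concatenation and parallelization lemmas (Lemma \ref{lem:concat_nn} and Lemma \ref{lem:paral_nn}), with the dominant contribution coming from $\varphi_{rec}$.

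The main technical obstacle is bookkeeping: faithfully propagating $\eps_0$ through two layers of Sobolev composition bounds while keeping track of the exponential blow-ups $(1 \vee \|\cdot\|_{W^{m,\infty}}^m)$ that Lemma \ref{lem:comp_sob_norm} introduces at each step. A secondary subtlety is matching the depths of $\varphi_{id}$ and $\varphi_{rec}$ so the parallel stack is well-defined, which forces $\varphi_{id}$ to have $L \asymp \log(mN\log(1/\eps))$ layers and contributes the $\log m$ factor to the weight magnitude. Beyond these accounting issues the argument is essentially a clean application of the tools already developed; no new analytic input is needed beyond Lemmas \ref{lem:id_deep_gelu_approx}, \ref{lem:recip_gelu_approx}, \ref{lem:comp_sob_norm}, and Corollary \ref{co:multi_approx_gelu}.
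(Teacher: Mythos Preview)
Your proposal does not address the stated lemma. Lemma~\ref{lem:comp_sob_norm} is an abstract analytic estimate: it bounds $\|g\circ f\|_{W^{n,\infty}}$ and $\|g\circ f - g\circ\tilde f\|_{W^{n,\infty}}$ in terms of the Sobolev norms of $g$, $f$, and $\tilde f$, for arbitrary sufficiently smooth maps. What you have written is a construction of a neural network approximating $(x,y)\mapsto x/y$ --- that is, a proof sketch for Lemma~\ref{lem:div_gelu_approx}, which \emph{uses} Lemma~\ref{lem:comp_sob_norm} as a black box but is a completely different statement.

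The paper's proof of Lemma~\ref{lem:comp_sob_norm} proceeds via the multivariate Fa\`a di Bruno formula \cite[Theorem~2.1]{constantine_faa}: one expands $\partial^{\bnu}(g\circ f)$ as a sum over multi-indices $\blambda$ and over the combinatorial set $p(\bnu,\blambda)$, then bounds $|p(\bnu,\blambda)|$ and the number of admissible $\blambda$ by partition-counting estimates of the form $\sqrt{\pi}e^n(mn)^n$ and $\sqrt{\pi}e^n(dn)^n$. Combining these with the trivial bounds $|\partial^{\blambda}g|\le\|g\|_{W^{n,\infty}}$, $\bnu!\le n!$, and $\prod_j(f_{l_j})^{k_j}\le 1\vee\max_i\|(f)_i\|_{W^{n,\infty}}^n$ gives the first inequality. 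The second inequality follows by the same expansion applied to $\partial^{\bnu}(g\circ f)-\partial^{\bnu}(g\circ\tilde f)$, splitting into a term where $\partial^{\blambda}g$ is evaluated at $f$ versus $\tilde f$ (handled by the mean value theorem, costing one extra derivative of $g$) and a term where the products $\prod_j(f_{l_j})^{k_j}$ differ (handled by a telescoping sum). None of this involves neural networks, multiplication approximators, or the reciprocal function; you would need to start over from the Fa\`a di Bruno formula.
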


\begin{proof}
    We reprove Lemma A.7 from \cite{de2021approximation}, correcting a minor technical oversight in the original derivation.
    Specifically, their bound omits $\max(1, \cdot)$ term, which we include here for correctness.
    We begin with the multivariate Faà di Bruno formula \cite[Theorem 2.1]{constantine_faa}.
    For $\bnu \in \Z_+^d$ with $|\bnu| = q$ for some $q \in \N$ with $q \leq n$ it holds that
    \begin{align}
        \label{eq:faa_di_bruno}
        \partial^\bnu(g \circ f) = \sum_{1 \leq |\blambda| \leq q} \partial^\blambda g \sum_{p(\bnu, \blambda)} (\bnu !)\prod_{j=1}^q \frac{(f_{l_j})^{k_j}}{k_j! (l_j!)^{k_j}},
    \end{align}
    where $(f_\mu)_i = \partial^{\boldsymbol\mu} f_i$ for $1 \leq i \leq m$.
    In addition,
    \begin{align*}
        p(\bnu, \blambda) = \big\{ &(\bk_1, \dots, \bk_q; \bl_1, \dots, \bl_q) : \text{for some } 1 \leq s \leq q, \\
        &\bk_i = 0_m \text{ and } \bl_i = 0_d \text{ for all } 1 \leq i \leq q - s; \; |\bk_i| > 0 \text{ for all } q - s + 1 \leq i \leq n; \\
        & \text{and } 0_d \prec \bl_{q - s + 1} \prec \dots \prec \bl_q \text{ are such that} \\
        &\sum_{i=1}^n \bk_i = \blambda, \; \sum_{i=1}^n |\bk_i|\bl_i = \bnu \big\},
    \end{align*}
    where we write $\boldsymbol{a} \prec \boldsymbol{b}$ if either $|\boldsymbol{a}| \leq |\boldsymbol{b}|$ or $|\boldsymbol{a}| = |\boldsymbol{b}|$ and $\boldsymbol{a}_1 < \boldsymbol{b}_1$ or $|\boldsymbol{a}| = |\boldsymbol{b}|$ and for some $1 \leq k \leq d - 1$ we have $\boldsymbol{a}_{k + 1} < \boldsymbol{b}_{k + 1}$ with $\boldsymbol{a}_1 = \boldsymbol{b}_1, \dots, \boldsymbol{a}_k = \boldsymbol{b}_k$.
    It is evident that in \eqref{eq:faa_di_bruno} we have $\sum_{i=1}^n |\bk_i| \leq n$ and, hence, the number of $(\bk_1, \dots, \bk_n)$ satisfying the definition of $p(\bnu, \blambda)$ is bounded by $|P_{n, \, nm + 1}|$, which is then evaluated as $\sqrt{\pi}e^n(mn)^n$ according to Lemma 2.1 from \cite{de2021approximation}.
    Similarly, the number of $(\bl_1, \dots, \bl_n)$ satisfying the definition of $p(\bnu, \blambda)$ is bounded by $|P_{n, dn + 1}|$, which in turn, is bounded by $\sqrt{\pi} e^n (dn)^n$.
    This results in
    \begin{align}
        \label{eq:p_nu_lambda_bound}
        |p(\bnu, \blambda)| \leq \pi(e^2 n^2 md )^n.
    \end{align}
    Finally, evaluate
    \begin{align}
        \label{eq:lambda_p_n_d_g_bounds}
        |\{ \blambda \in \Z_+^d \; : \; 1 \leq |\blambda| \leq q \}|
        \leq |P_{n, \, d + 1}|
        \leq \sqrt{\pi} e^n d^n, \quad |\partial^\blambda g| \leq \|g\|_{W^{n, \infty}(\Omega_2)},
        \quad \bnu ! \leq n!
    \end{align}
    and 
    \begin{align}
        \label{eq:prod_f_bound}
        \prod_{j=1}^n (f_{l_j})^{k_j}
        \leq 1 \vee \max_{1\leq i \leq m}\|(f)_i\|^n_{W^{n, \infty}(\Omega_1)},
    \end{align}
    Therefore, Stirling's approximation implies that
    \begin{align*}
        \max_{\bnu \in \Z_+^d, \; 1 \leq |\bnu| \leq n}\|\partial^\bnu g \circ f\|_{W^{0, \infty}(\Omega_1)}
        &\leq \sqrt{\pi}(ed)^n \|g\|_{W^{n, \infty}(\Omega_2)} \pi (e^2n^2md)^n n! (1 \vee \max_{1\leq i \leq m}\|(f)_i\|^n_{W^{n, \infty}(\Omega_1)}) \\
        &\leq 16(e^2 n^4 md^2)^n \|g\|_{W^{n, \infty}(\Omega_2)} (1 \vee \max_{1\leq i \leq m}\|(f)_i\|^n_{W^{n, \infty}(\Omega_1)}).
    \end{align*}
    For $\bnu = 0_d$ we have that
    \begin{align*}
        \|g \circ f\|_{W^{0, \infty}(\Omega_1)} \leq \|g\|_{W^{m, \infty}(\Omega_2)}
        \leq 16(e^2 n^4 md^2)^n \|g\|_{W^{n, \infty}(\Omega_2)} (1 \vee \max_{1\leq i \leq m}\|(f)_i\|^n_{W^{n, \infty}(\Omega_1)}).
    \end{align*}
    Hence, the first claim holds true.
    Now using \eqref{eq:faa_di_bruno}, we deduce that
    \begin{align*}
        &|\partial^\bnu (g \circ f) - \partial^\bnu (g \circ \tilde{f})| \\
        &\quad \leq \sum_{1 \leq |\blambda| \leq q} |\partial^\blambda [g] \circ f - \partial^\blambda [g] \circ \tilde{f}|\sum_{p(\bnu, \blambda)}(\bnu !)\prod_{j=1}^q \frac{(f_{l_j})^{k_j}}{k_j! (l_j!)^{k_j}} \\
        &\quad + \sum_{1 \leq |\blambda| \leq q}|\partial^\blambda [g] \circ \tilde{f}|\sum_{p(\bnu, \blambda)}(\bnu!)\frac{|\prod_{j=1}^q(f_{l_j})^{k_j} - \prod_{j=1}^q (\tilde{f}_{l_j})^{k_j}|}{\prod_{j=1}^q k_j! (l_j!)^{k_j}}.
    \end{align*}
    First, bound the first term.
    From \eqref{eq:lambda_p_n_d_g_bounds} and \eqref{eq:prod_f_bound} we find that
    \begin{align*}
        &\sum_{1 \leq |\blambda| \leq q} |\partial^\blambda [g] \circ f - \partial^\blambda [g] \circ \tilde{f}|\sum_{p(\bnu, \blambda)}(\bnu !)\prod_{j=1}^q \frac{(f_{l_j})^{k_j}}{k_j! (l_j!)^{k_j}} \\
        &\quad\leq |\{\blambda \in \Z_+^d \; : \; 1 \leq |\blambda| \leq q \}| \cdot |p(\bnu, \blambda)|\cdot n! \cdot (1 \vee \max_{1\leq i \leq m}\|(f)_i\|^n_{W^{n, \infty}(\Omega_1)}) |\partial^\blambda [g] \circ f - \partial^\blambda [g] \circ \tilde{f}|.
    \end{align*}
    Now mean value theorem together with \eqref{eq:p_nu_lambda_bound} and \eqref{eq:lambda_p_n_d_g_bounds} suggests that
    \begin{align}
        \label{eq:faa_first_t}
        \notag
        &\sum_{1 \leq |\blambda| \leq q} |\partial^\blambda [g] \circ f - \partial^\blambda [g] \circ \tilde{f}|\sum_{p(\bnu, \blambda)}(\bnu !)\prod_{j=1}^q \frac{(f_{l_j})^{k_j}}{k_j! (l_j!)^{k_j}} \\
        &\quad \leq 16 m (e^2 n^4 md^2)^n (1 \vee \max_{1\leq i \leq m}\|(f)_i\|^n_{W^{n, \infty}(\Omega_1)}) \|g\|_{W^{n + 1, \infty}(\Omega_2)}\max_{1 \leq i \leq m}\|(f)_i - (\tilde{f})_i\|_{W^{0, \infty}(\Omega_1)}.
    \end{align}
    Second, evaluate the second term, using \eqref{eq:p_nu_lambda_bound} and \eqref{eq:lambda_p_n_d_g_bounds}:
    \begin{align*}
        &\sum_{1 \leq |\blambda| \leq q}|\partial^\blambda [g] \circ \tilde{f}|\sum_{p(\bnu, \blambda)}(\bnu!)\frac{|\prod_{j=1}^q(f_{l_j})^{k_j} - \prod_{j=1}^q (\tilde{f}_{l_j})^{k_j}|}{\prod_{j=1}^q k_j! (l_j!)^{k_j}} \\
        &\quad \leq \sqrt{\pi} e^n d^n \cdot \pi(e^2 n^2 md )^n \cdot \|g\|_{W^{n, \infty}(\Omega_2)} n! \cdot \sum_{j=1}^q |(f_{l_j})^{k_j} - (\tilde{f}_{l_j})^{k_j}| \prod_{u < j} |(f_{l_u})^{k_u}|
        \prod_{u > j} |(f_{l_u})^{k_u}|.
    \end{align*}
    Therefore, Stirling's approximation suggests that
    \begin{align*}
        &\sum_{1 \leq |\blambda| \leq q}|\partial^\blambda [g] \circ \tilde{f}|\sum_{p(\bnu, \blambda)}(\bnu!)\frac{|\prod_{j=1}^q(f_{l_j})^{k_j} - \prod_{j=1}^q (\tilde{f}_{l_j})^{k_j}|}{\prod_{j=1}^q k_j! (l_j!)^{k_j}} \\
        &\quad \leq 16(e^2 n^4 md^2)^n \|g\|_{W^{n, \infty}(\Omega_2)}  \sum_{j=1}^q |(f_{l_j})^{k_j} - (\tilde{f}_{l_j})^{k_j}| \max_{1 \leq i \leq m}(1 \vee \|(f)_i\|^n_{W^{n, \infty}(\Omega_1)} \vee \|(\tilde{f})_i\|^n_{W^{n, \infty}(\Omega_1)}).
    \end{align*}
    We next note that
    \begin{align*}
        \sum_{j=1}^q |(f_{l_j})^{k_j} - (\tilde{f}_{l_j})^{k_j}|
        &= \sum_{j=1}^q \sum_{i=1}^m |(f_{l_j})_i^{(k_j)_i} - (\tilde{f}_{l_j})_i^{(k_j)_i}| \prod_{u < i} |(f_{l_j})_u^{(k_j)_u}| \prod_{u > i} |(\tilde{f}_{l_j})_u^{(k_j)_u}| \\
        &\leq \sum_{j=1}^q |k_j| \cdot \max_{1 \leq i \leq m}\|(f)_i - (\tilde{f})_i\|_{W^{n, \infty}(\Omega_1)} (1 \vee \|(f)_i\|^n_{W^{n, \infty}(\Omega_1)} \vee \|(\tilde{f})_i\|^n_{W^{n, \infty}(\Omega_1)}).
    \end{align*}
    Therefore, we obtain that
    \begin{align}
        \label{eq:faa_sec_term}
        \notag
        &\sum_{1 \leq |\blambda| \leq q}|\partial^\blambda [g] \circ \tilde{f}|\sum_{p(\bnu, \blambda)}(\bnu!)\frac{|\prod_{j=1}^q(f_{l_j})^{k_j} - \prod_{j=1}^q (\tilde{f}_{l_j})^{k_j}|}{\prod_{j=1}^q k_j! (l_j!)^{k_j}} \\
        &\quad \leq 16 n (e^2 n^4 md^2)^n \|g\|_{W^{n, \infty}(\Omega_2)} \max_{1 \leq i \leq m}\|(f)_i - (\tilde{f})_i\|_{W^{n, \infty}(\Omega_1)} (1 \vee \|f_i\|^{2n}_{W^{n, \infty}(\Omega_1)} \vee \|(\tilde{f})_i\|^{2n}_{W^{n, \infty}(\Omega_1)}).
    \end{align}
    For $\bnu = 0_d$ it holds that
    \begin{align*}
        \|g \circ f - g \circ \tilde{f}\|_{W^{0, \infty}(\Omega_1)}
        \leq m \|g\|_{W^{1, \infty}(\Omega_2)} \max_{1 \leq i \leq m}\|(f)_i - (\tilde{f})_i\|_{W^{n, \infty}(\Omega_1)}.
    \end{align*}
    Thus, from \eqref{eq:faa_first_t} and \eqref{eq:faa_sec_term} we conclude that
    \begin{align*}
        &\|g \circ f - g \circ \tilde{f}\|_{W^{n, \infty}(\Omega_1)} \\
        &\quad \leq 32(e^2n^5 m^2 d^2)^n \|g\|_{W^{n + 1, \infty}(\Omega_2)} \max_{1 \leq i \leq m}\|(f)_i - (\tilde{f})_i\|_{W^{n, \infty}(\Omega_1)} (1 \vee \|(f)_i\|^{2n}_{W^{n, \infty}(\Omega_1)} \vee \|(\tilde{f})_i\|^{2n}_{W^{n, \infty}(\Omega_1)}).
    \end{align*}
    The proof is complete.

\end{proof}

\begin{Lem}[concatenation of neural networks]
    \label{lem:concat_nn}
    Given $K \in \N$ with $K \geq 2$. Then, for any neural networks $\varphi^{(k)} \in \NN(L_k, W_k, S_k, B_k)$ with $1 \leq k \leq K$
    such that $\varphi^{(k)} : \R^{d_k} \to \R^{d_{k + 1}}$, there exists a neural network $h = \varphi^{(K)} \circ \varphi^{(K - 1)} \dots \circ \varphi^{(1)} \in \NN(L, W, S, B)$ satisfying
    \begin{align*}
        L &\leq 1 + \sum_{k=1}^K (L_k - 1), \quad S \leq \sum_{k=1}^K S_k + 2\sum_{k = 1}^{K - 1}\|W_k\|_\infty \cdot \|W_{k + 1}\|_\infty, \\
        \|W\|_{\infty} &\leq \max_{1 \leq k \leq K}\|W_k\|_{\infty},
        \quad B \leq 2 \max_{1 \leq k \leq K - 1} \left[ (B_k \vee 1) (B_{k + 1} \vee 1) \left( \|W_k\|_\infty \vee \|W_{k + 1}\|_\infty \right) \right].
    \end{align*}
        
\end{Lem}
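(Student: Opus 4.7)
\textbf{Proof proposal for Lemma \ref{lem:concat_nn}.}

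The plan is to proceed by induction on $K$, with all the real work happening in the base case $K = 2$; the inductive step then just reapplies the same construction, since the output of $\varphi^{(K-1)} \circ \dots \circ \varphi^{(1)}$ is itself a GELU network in the class $\NN(\cdot,\cdot,\cdot,\cdot)$.

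For the base case, I would write out both networks in the explicit form \eqref{eq:feed_forward_nn_def}. Let $\varphi^{(1)} = -b_{L_1}^{(1)} + A_{L_1}^{(1)} \circ \gelu_{b^{(1)}_{L_1-1}} \circ \dots \circ A_1^{(1)}$ and similarly for $\varphi^{(2)}$. Composing them produces, at the junction, two consecutive affine operations $A_1^{(2)} \circ \bigl(-b_{L_1}^{(1)} + A_{L_1}^{(1)}(\cdot)\bigr)$ with no GELU activation in between. The key algebraic step is to collapse these two affine maps into a single affine map with weight matrix $\widetilde{A} = A_1^{(2)} A_{L_1}^{(1)}$ and bias contribution $-A_1^{(2)} b_{L_1}^{(1)}$ (absorbed into the bias that enters the next GELU). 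After this merge, the composition has the canonical form \eqref{eq:feed_forward_nn_def} with depth $L_1 + L_2 - 1$, which by induction gives the stated formula for $L$. The width bound $\|W\|_\infty \leq \max_k \|W_k\|_\infty$ is immediate because no layer becomes wider than the widest one in either network.

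For the sparsity count, I would note that the merged weight matrix $\widetilde A$ has at most $\|W_1\|_\infty \cdot \|W_2\|_\infty$ nonzero entries, and the merged bias $-A_1^{(2)} b_{L_1}^{(1)}$ has at most $\|W_2\|_\infty$ nonzero entries. Upper bounding their sum by $2 \|W_1\|_\infty \|W_2\|_\infty$ and adding this over all $K-1$ junctions, together with the original counts $\sum_k S_k$ for the layers that were not touched, yields the claimed bound on $S$. For the weight magnitude, each entry of $\widetilde A = A_1^{(2)} A_{L_1}^{(1)}$ is a sum over the inner dimension $d_2 = W_{L_1}^{(1)} = W_0^{(2)}$, which is bounded by $\|W_1\|_\infty \vee \|W_2\|_\infty$, so its magnitude is at most $(\|W_1\|_\infty \vee \|W_2\|_\infty) B_1 B_2$. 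The same bound holds for $A_1^{(2)} b_{L_1}^{(1)}$. The untouched biases and weight matrices are already bounded by the relevant $B_k$, and combining everything with the factor $2$ and the $(B_k \vee 1)$ terms (to cover the case where some $B_k < 1$) gives the stated bound.

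The main (and only) obstacle is purely bookkeeping: keeping track of which layers are affected by each junction and ensuring the bounds on sparsity and magnitude correctly account for the matrix product at the seam, rather than double counting or missing the contribution of the absorbed bias. Once the $K = 2$ case is established with these explicit formulas, the induction step is trivial since all bounds are stated in an additive/max form that composes cleanly.
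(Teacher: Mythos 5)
Your proposal is correct and follows essentially the same route as the paper: both reduce to the case $K=2$, write the two networks in the canonical form \eqref{eq:feed_forward_nn_def}, merge the two affine maps at the junction into $A_1^{(2)}A_{L_1}^{(1)}$ with the absorbed bias $A_1^{(2)}b_{L_1}^{(1)} + b_1^{(2)}$ entering the next shifted GELU, read off $L \leq L_1 + L_2 - 1$, and bound the sparsity and weight magnitude of the merged layer via the inner dimension, concluding by induction. The bookkeeping you describe (the extra $2\|W_1\|_\infty\|W_2\|_\infty$ nonzeros at each seam, the $(\|W_1\|_\infty\vee\|W_2\|_\infty)B_1B_2$ entrywise bound for the product, the factor $2$ and $(B_k\vee 1)$ safeguards) matches the paper's bounds exactly.
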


\begin{proof}
    It suffices to prove the statement for $K = 2$, since one can easily generalize it to $K \geq 3$ by induction.
    Recall that each $\varphi^{(j)}$ for $j \in \{1, 2\}$ admits the representation given in \eqref{eq:feed_forward_nn_def}.
    Specifically,
    \begin{align*}
        \varphi^{(j)}(x) = -b^j_{L_j} + A^j_{L_j} \circ \gelu_{b^j_{L_j - 1}} \circ A^j_{L_j - 1} \circ \gelu_{b^j_{L_j - 2}} \circ \dots \circ A^j_2 \circ \gelu_{b^j_1} \circ A^j_1 \circ x .
    \end{align*}
    Therefore, we deduce that
    \begin{align*}
        \varphi^{(2)} \circ \varphi^{(1)} \circ x = -b^{2}_{L_2} + A^{2}_{L_2} \circ \gelu_{b^{2}_{L_2 - 1}} \circ \dots \circ \gelu_{A_1^{2} b^{1}_{L_1} + b_1^2}
        \circ A_1^{2}A_{L_1}^{1} \circ \dots \circ \gelu_{b_1^{1}} \circ A_1^{1} \circ x .
    \end{align*}
    Consequently, it follows that $\varphi^{(2)} \circ \varphi^{(1)} \in \NN(L, W, S, B)$ with
    \begin{align*}
        L &\leq L_1 + L_2 - 1,
        \quad \|W\|_\infty \leq \|W_1\|_\infty \vee \|W_2\|_\infty, \\
        S &\leq S_1 + S_2 + 2 \|W_1\|_\infty \cdot \|W_2\|_\infty,
        \quad B \leq 2 (B_1 \vee 1) (B_2 \vee 1) (\|W_1\|_\infty \vee \|W_2\|_\infty) .
    \end{align*}
    Hence, the base case holds.
    The result then follows by induction.

\end{proof}

\begin{Lem}[parallelization of neural networks]
    \label{lem:paral_nn}
    Let $K \in \N$ with $K \geq 2$ and let neural networks $\varphi^{(k)} \in \NN(L_k, W_k, S_k, B_k)$ for $1 \leq k \leq K$.
    Assume further that $L_k = L$ for all $1 \leq k \leq K$.
    Then, the following holds:
    \begin{itemize}
        \item[(i)] if $\varphi^{(k)} : \R^{d_k} \to \R$ for each $1 \leq k \leq K$, then there exists a neural network $\varphi \in \NN(L, W, S, B)$ such that $\varphi(x) = (\varphi^{(1)}(x_1), \dots, \varphi^{(K)}(x_K))^\top$ for all $x = (x_1^\top, \dots, x_K^\top)^\top$,
        where $x_k \in \R^{d_k}$ for any $1 \leq k \leq K$.
        In addition, there exists $\varphi_{sum} \in \NN(L, W, S, B_{sum})$, which implements the summation, that is,
        \begin{align*}
            \varphi_{sum}(x) = \sum_{k=1}^K \varphi^{(k)}(x_k), \quad \text{for all } x = (x_1^\top, \dots, x_K^\top)^\top .
        \end{align*}
        \item[(ii)] if $\varphi^{(k)} : \R^p \to \R$ for some $p \in N$ for every $1 \leq k \leq K$, then there exists a neural network $\varphi \in \NN(L, W, S, B)$ satisfying $\varphi(x) = (\varphi^{(1)}(x), \dots, \varphi^{(K)}(x))^\top$ for all $x \in \R^p$.
        Moreover, there exists a summation network $\varphi_{sum} \in \NN(L, W, S, B_{sum})$ such that
        \begin{align*}
            \varphi_{sum}(x) = \sum_{k=1}^K \varphi^{(k)}(x), \quad \text{for all } x \in \R^p .
        \end{align*}
    \end{itemize}
    Furthermore, in both cases it holds that
    \begin{align*}
        \|W\|_\infty \leq \sum_{k=1}^K \|W^{(k)}\|_\infty,
        \quad S \leq \sum_{k = 1}^K S^{(k)},
        \quad B \leq \max_{1 \leq k \leq K} B^{(k)},
        \quad B_{sum} \leq K \max_{1 \leq k \leq K} B^{(k)} .
    \end{align*}

\end{Lem}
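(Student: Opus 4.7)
The proof is purely constructive: build the combined network by stacking the weight matrices of the $\varphi^{(k)}$ in block form. Write each network in the canonical form
\[
\varphi^{(k)}(x) = -b^k_L + A^k_L \circ \gelu_{b^k_{L-1}} \circ A^k_{L-1} \circ \dots \circ \gelu_{b^k_1} \circ A^k_1 \circ x,
\]
where $A^k_j \in \R^{W^k_j \times W^k_{j-1}}$ and $b^k_j \in \R^{W^k_j}$. The common depth $L$ makes it possible to align the layers of all $K$ networks, so the construction is layer-wise.

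For case $(i)$, where the inputs $x_k \in \R^{d_k}$ are disjoint, I would take the input of $\varphi$ to be the concatenation $(x_1^\top,\dots,x_K^\top)^\top \in \R^{\sum d_k}$ and define, for every intermediate layer $j \in \{1,\dots,L\}$,
\[
A_j = \diag(A^1_j,\dots,A^K_j), \qquad b_j = (b^1_j{}^\top,\dots,b^K_j{}^\top)^\top.
\]
Because $\gelu$ is applied componentwise, propagating the concatenated input through this block-diagonal network produces the concatenated output $(\varphi^{(1)}(x_1)^\top,\dots,\varphi^{(K)}(x_K)^\top)^\top$, giving the claimed $\varphi$. To obtain $\varphi_{sum}$, keep everything identical through layer $L-1$ but replace the output layer by the horizontal block $A_L = (A^1_L\ A^2_L\ \dots\ A^K_L)$ together with the output bias $\sum_{k=1}^K b^k_L$; summing the contributions of the final linear map reproduces $\sum_k \varphi^{(k)}(x_k)$. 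Case $(ii)$ is identical except at the very first layer, where, because the $K$ subnetworks share the same input $x \in \R^p$, the input matrix is taken as the vertical stack $A_1 = \big((A^1_1)^\top,\dots,(A^K_1)^\top\big)^\top$; every subsequent layer is block-diagonal exactly as in $(i)$, and the summation version is obtained by collapsing the final layer in the same way.

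The configuration bounds follow by inspection of this construction. The width at each layer is the sum of the component widths, giving $\|W\|_\infty \le \sum_k \|W^{(k)}\|_\infty$. The block-diagonal (or stacked) matrices introduce no spurious nonzero entries, so the total number of nonzero parameters is $S \le \sum_k S^{(k)}$. Entries of block-diagonal matrices and concatenated biases are bounded by $\max_k B^{(k)}$, which yields $B \le \max_k B^{(k)}$ for $\varphi$; for $\varphi_{sum}$, the only change is that the output bias becomes a sum of $K$ vectors each of magnitude $\le \max_k B^{(k)}$, giving $B_{sum} \le K \max_k B^{(k)}$ (other layers are unchanged and thus still bounded by $\max_k B^{(k)}$, which is dominated by the bound on $B_{sum}$).

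There is no essential analytic difficulty here; the main thing to be careful about is bookkeeping, in particular confirming that replacing the final block-diagonal $A_L$ by the horizontal concatenation $(A^1_L,\dots,A^K_L)$ together with the accumulated output bias genuinely computes $\sum_k \varphi^{(k)}$, and that the width bound is counted correctly at this last layer (the summation network has output dimension $1$, so only the intermediate widths contribute to $\|W\|_\infty$). Once the construction is written out explicitly, the parameter bounds on width, sparsity, and weight magnitude are read off directly.
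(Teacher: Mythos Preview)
Your proposal is correct and matches the paper's proof essentially line for line: the paper also builds the parallel network via block-diagonal weight matrices and concatenated biases, obtains $\varphi_{sum}$ by replacing the last layer with the horizontal block $(A^1_L,\dots,A^K_L)$ and the summed output bias, and handles case $(ii)$ by vertically stacking only the first-layer matrix. The configuration bounds are read off in the same way.
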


\begin{proof}
    As for the case $(i)$, from \eqref{eq:feed_forward_nn_def} we find that $\varphi^{(j)}$ for each $1 \leq j \leq K$ has the following form:
    \begin{align*}
        \varphi^{(j)}(x_j) = -b^j_{L} + A^j_{L} \circ \gelu_{b^j_{L - 1}} \circ A^j_{L - 1} \circ \gelu_{b^j_{L - 2}} \circ \dots \circ A^j_2 \circ \gelu_{b^j_1} \circ A^j_1 \circ x_j .
    \end{align*}
    Following \cite{nakada2020adaptive}, we introduce
    \begin{align}
        \label{eq:tilde_A_paral_nn}
        \tilde{A}_l =
        \begin{pmatrix}
            A_l^1 & 0 & \dots & 0 \\
            0 & A_l^2 & \dots & 0 \\
            \vdots & \vdots & \ddots & \vdots \\
            0 & 0 & \dots & A_l^K
        \end{pmatrix},
        \quad \tilde{b}_l =
        \begin{pmatrix}
            b_l^1 \\ \vdots \\ b_l^K    
        \end{pmatrix},
        \quad \text{for all } 1 \leq l \leq L .
    \end{align}
    Hence, as suggested by \eqref{eq:feed_forward_nn_def}, for
    \begin{align*}
        \varphi(x) = -\tilde{b}_L + \tilde{A}_L \circ \gelu_{\tilde{b}_{L - 1}} \circ \tilde{A}_{L - 1} \circ \gelu_{\tilde{b}_{L - 2}} \circ \dots \circ \tilde{A}_2 \circ \gelu_{\tilde{b}_1} \circ \tilde{A}_1 \circ x 
    \end{align*}
    we have that $\varphi(x) = (\varphi^{(1)}(x_1), \dots, \varphi^{(K)}(x_K))^\top$ for all $x = (x_1^\top, \dots, x_K^\top)^\top$ .
    Furthermore, the configuration of the network $\varphi$ coincides with that from the statement of the lemma.
    As for the summation network, we let
    \begin{align*}
        \varphi_{sum}(x) = -\bar{b}_L + \bar{A}_L \circ \gelu_{\tilde{b}_{L - 1}} \circ \tilde{A}_{L - 1} \circ \gelu_{\tilde{b}_{L - 2}} \circ \dots \circ \tilde{A}_2 \circ \gelu_{\tilde{b}_1} \circ \tilde{A}_1 \circ x ,
    \end{align*}
    where
    \begin{align}
        \label{eq:bar_A_paral_nn}
        \bar{A}_L =
        \begin{pmatrix}
            A_L^1 & A_L^2 & \dots & A_L^K
        \end{pmatrix},
        \quad \bar{b}_L = \sum_{k = 1}^K b_L^k .
    \end{align}
    Hence, it follows that
    \begin{align*}
        \varphi_{sum}(x) = \sum_{k = 1}^K \varphi^{(k)}(x_k), \quad \text{for all } x = (x_1^\top, \dots, x_K^\top)^\top .
    \end{align*}
    The configuration of $\varphi_{sum}$ immediately follows from \eqref{eq:bar_A_paral_nn}.
    The proof of the case $(ii)$ is identical to the considered one.
    The only difference is that in \eqref{eq:tilde_A_paral_nn} for $l = 1$ we define
    \begin{align*}
        \tilde{A}_1 = \begin{pmatrix}{A^1_1}^\top & {A_1^2}^\top & \dots & {A_1^K}^\top \end{pmatrix}^\top,
        \quad \tilde{b}_1 = \begin{pmatrix} {b_1^1}^\top & {b_1^2}^\top & \dots & {b_1^K}^\top \end{pmatrix}^\top .
    \end{align*}
    The proof is finished.

\end{proof}

\end{document}